\documentclass[11pt]{article}
\pdfoutput=1

\usepackage{ifthen}
\usepackage{bbm}

\usepackage[
  backend=biber,
  style=alphabetic,
  maxalphanames=12,
  maxcitenames=2,
  maxbibnames=99,
  giveninits=false,
  doi=false,
  url=false,
  natbib=true
]{biblatex}

\DeclareDelimFormat[bib]{nametitledelim}{\addspace}

\usepackage{bm}
\usepackage{fullpage}
\usepackage[english]{babel}
\usepackage[utf8]{inputenc}
\usepackage[T1]{fontenc}
\usepackage{ae} \usepackage{aecompl}
\usepackage{enumitem}
\setlist{nosep}
\usepackage{microtype}
\usepackage{booktabs}
\usepackage{hhline}
\usepackage{makecell}
\usepackage{etoolbox}
\apptocmd{\sloppy}{\hbadness 10000\relax}{}{}

\usepackage{amsmath,amsthm,amssymb,mathtools,thmtools}
\usepackage{graphicx}
\usepackage[textsize=tiny]{todonotes}
\usepackage[colorlinks=true, allcolors=blue]{hyperref}
\usepackage{algorithm,algorithmicx,algpseudocode}
\usepackage{xfrac}
\usepackage[normalem]{ulem}

\hypersetup{colorlinks,
            breaklinks=true,
            linkcolor=blue,
            citecolor=blue,
            urlcolor=magenta,
            linktocpage,
            plainpages=false,
            bookmarks=false}

\usepackage{notations}
\usepackage{comment}
\usepackage{tikz,pgfplots,pgfplotstable}
\usepgfplotslibrary{units}
\usepackage{tabularx} 
\usepackage{subfig}
\usepackage{array}
\usepgfplotslibrary{groupplots}

\title{Private Evaluation of High-Dimensional Functions}

\author{
Mohammad Afzali\thanks{University of British Columbia. {\tt moafzali@cs.ubc.ca}. Supported in part by Four Year Doctoral Fellowship (4YF) from the University of British Columbia.}
\and
Hassan Ashtiani\thanks{McMaster University and the Vector Institute. {\tt zokaeiam@mcmaster.ca}. Supported by an NSERC Discovery Grant.}
\and 
Adam Smith\thanks{Boston University. 
\texttt{ads22@bu.edu}. Supported in part by NSF award CNS-2232694 and an Apple faculty research award.}
}

\numberwithin{equation}{section}
\hypersetup{hypertexnames=false}
\begin{document}

\maketitle

\begin{abstract}%
We study differentially private black-box evaluation of arbitrary high-dimensional functions. Given a dataset $S$ and query access to a function $f$---whose sensitivity may be unknown or unbounded---the goal is to privately output an approximation to $f(S)$. 
The black-box evaluator can only query the value of $f$ over a down neighborhood of $S$ (i.e., those subsets of $S$ that are obtained by removing a bounded number of elements from $S$).
We design generic black-box evaluators for both pure and approximate differential privacy through a reduction to high-dimensional private quasi-concave optimization (QCO). 

Our black-box evaluator enjoys a geometric notion of utility. Consider the vectors obtained by applying $f$ on sets in a down neighborhood of $S$. With high probability, the evaluator's output is guaranteed to be in the convex hull of these vectors. 

Our work generalizes the result of \citet{linder2025privately} for real-valued function evaluation to high dimensions. It also recovers the private center-point guarantees of \citet{pmlr-v99-beimel19a}, with an explicit additive approximation to maximum Tukey depth.
We also apply our evaluators to user-level quasi-concave and convex optimization, showing how they can improve upon the sample-and-aggregate baseline of \citet{nissim2007smooth} by replacing the multiplicative data overhead with an additive overhead.

Finally, we show that our formulation of high dimensional black-box evaluation and high dimensional private QCO are equivalent up to a factor of the dimension: the hardest private evaluation instances already arise from the cases that $f$ is a (non-private) quasi-concave optimizer. 

\end{abstract}

\allowdisplaybreaks


\section{Introduction}
Differential privacy provides a mathematical framework for analyzing/processing sensitive data while limiting what the released output reveals about any one individual~\cite{dwork2006calibrating,dwork2014algorithmic}. It requires that changing an individual's data have only a limited effect on the output distribution of the data processing algorithm. A basic task within this framework is \emph{private function evaluation}: given a sensitive dataset $S$ in some input domain $\cU^n$ and a function $f$, we want to release an accurate approximation to $f(S)$ while satisfying differential privacy. Here, $f(S)$ may be a statistic, such as a mean, or a vector of parameters computed by a non-private optimization procedure.

A standard way to privately evaluate $f$ is to add noise calibrated to its \emph{global sensitivity}~\cite{dwork2006calibrating}. Consider a real-valued function $f:\cU^*\to\bR$, where $\cU^*$ denotes the set of finite datasets over $\cU$. The global sensitivity of $f$ is
$\Delta_f=\sup_{S\sim S'}|f(S)-f(S')|$,
where the supremum is over all neighboring pairs of datasets: those that differ by the addition or removal of a single record. This approach raises two distinct challenges.

First, a valid sensitivity bound may be unavailable or untrustworthy. A curator may have only black-box access to a function submitted by an untrusted analyst, and privacy cannot rely on the correctness of the analyst's claimed sensitivity bound. Even when the implementation is available, establishing a valid bound can be difficult for complex computations. This challenge concerns how to guarantee privacy without trusted sensitivity information, regardless of whether the function's actual global sensitivity is small or large.

Second, global sensitivity can be too large to yield useful accuracy, even on datasets where the function is stable. This motivates data-dependent notions of sensitivity. The \emph{local sensitivity} of $f$ at $S$ is
$\mathrm{LS}^f(S)=\sup_{S'\sim S}|f(S)-f(S')|$.
Although local sensitivity can be much smaller than global sensitivity, directly calibrating noise to it does not generally preserve privacy, because the noise scale itself can reveal information about the dataset. The smooth sensitivity framework of \citet{nissim2007smooth} addresses this problem through smoothly varying upper bounds on local sensitivity that support private noise calibration. However, both local and smooth sensitivity account for arbitrary insertions as well as deletions. Consequently, both quantities can
be large or infinite even when $f$ is stable under deletions from the observed dataset.

The two challenges are also connected computationally. With only black-box access to an arbitrary function, evaluating its behavior on all insertion neighbors requires exploring the input universe, which may be prohibitively large or infinite. Restricting attention to subsets of the observed dataset avoids this universe-dependent search and captures stability under deletions.

These considerations motivate a data-dependent measure of stability
that considers only deletions~\cite{fang2022shifted,kohli2023differential}. Following \citet{linder2025privately},
the \emph{down sensitivity} of a real-valued function $f$ at $S$
and depth $\lambda\geq 0$ is defined as
\begin{align*}
    \mathrm{DS}^{f}_{\lambda}(S)
    =\max_{S'\in N^{\downarrow}_{\lambda}(S)}
      |f(S)-f(S')|
\end{align*}
where $N^{\downarrow}_{\lambda}(S)
    =\{S'\subseteq S: |S\setminus S'|\leq\lambda\}$
denotes the sub-datasets obtained by deleting at most $\lambda$ records from $S$. Down sensitivity can be computed using finitely many evaluations of $f$ on subsets of $S$. Furthermore, it can be much smaller than smooth sensitivity.
For example, let $f$ be the function that outputs the mean of a dataset over the domain of real values. Then $f$ has infinite local sensitivity for every nonempty dataset: inserting a single (large) value can change the mean arbitrarily. Therefore, its smooth sensitivity is also infinite. Nevertheless, for $\lambda<|S|$, its down sensitivity satisfies $\mathrm{DS}^{f}_{\lambda}(S)
    \leq \frac{\lambda}{|S|-\lambda}
           (\max S-\min S)$,
which is finite and can be small when the observed data have a narrow range.

The above observations motivate the notion of a \emph{private black-box evaluator} \cite{linder2025privately} that aims to approximate $f(S)$ given dataset $S$ and query access to any real-valued $f$: the evaluator accesses $f$ only by querying its value on $S$ or its subsets, and must guarantee differential privacy with respect to $S$ for every $f$.
\citet{linder2025privately} construct such an evaluator
whose output lies between the minimum and maximum values
of $f$ on $N^{\downarrow}_{\lambda}(S)$---hence satisfies $|\hat y-f(S)|\leq\mathrm{DS}^{f}_{\lambda}(S)$---with high probability.

The depth $\lambda$ depends on the privacy parameters, the desired
success probability, and the output domain size\footnote{One should consider finite output domains $\cY\subseteq\bR$, since even the basic task of privately returning an interior point---a value between the minimum and maximum input values---requires $\Omega(\log^*|\cY|)$ samples and is impossible over the full real line~\cite{bun2015differentially}.}. Thus, the evaluator
guarantees privacy for every such function, while providing small
error whenever the function is stable under deletions from the
observed dataset.

Our goal is to address private black-box evaluation for \emph{high-dimensional} vector-valued functions. A simple baseline for achieving this is to apply the one-dimensional evaluator of \citet{linder2025privately} to each coordinate separately,
allocating the privacy budget through composition. The output of the resulting approach would be guaranteed to be in an axis-aligned box---whose side-lengths correspond to coordinate-wise down sensitivities---around the true value of the function. 

Such a box can lose the relationships between coordinates. For example, the function's outputs may vary substantially along a diagonal (that is, not axis-aligned) direction while remaining constant in a perpendicular direction. Bounds on the individual coordinates do not capture this directional stability:
\begin{example}
The \emph{Tukey depth}~\cite{tukey1975mathematics} of $x\in\bR^d$ with respect to a set of points $S$ is the minimum number of such points in any closed halfspace containing $x$. A Tukey median is a point with maximal Tukey depth.
Consider the set $S$ containing equally-many copies of $(0,0)$ and $(1,1)$, and let $f$ be the function that returns a Tukey median. Deleting one record can make either endpoint the unique median. However, the smallest axis-aligned rectangle containing the dataset is $[0,1]\times[0,1]$. Allowing outputs in that set permits $(0,1)$, whose Tukey depth is zero. In contrast, the points on the segment from $(0,0)$ to $(1,1)$ have optimal Tukey depth $n/2$. 
\end{example}

Examples like this motivate an accuracy guarantee based on the \emph{convex hull} of the output vectors. Specifically, we consider the following problem:
\begin{quote}
    Design a private black-box evaluator that given dataset $S$ and function $f$ taking values in $\bR^d$,  outputs, with high probability,
    $$\hat y\in\CH \Big(\big\{f(S'): S'\in N^{\downarrow}_{\lambda}(S)\big\}\Big)\, .$$
\end{quote}

where $\CH$ denotes the convex hull of a set. The goal is to achieve a small deletion depth $\lambda$ that depends on the dimension $d$, the privacy parameters $\eps,\delta$, with only mild dependence on $|\cY|$ and no dependence on the dataset size or the size of the input universe.
 
The convex-hull based guarantee respects variation in every direction, including directions that do not align with the coordinates.

This geometric property makes the evaluator useful for the optimization applications developed in this paper. In one dimension, the convex hull is an interval, recovering the one-dimensional notion of accuracy of \citet{linder2025privately}.

\subsection{Our contributions and techniques}

We construct evaluators for arbitrary vector-valued functions with a finite output domain $\cY\subseteq \bR^d$, under both pure and approximate differential privacy. Privacy holds for every black-box function $f$ without a supplied sensitivity bound. The utility guarantee is that, with high probability, the evaluator's output lies in $\CH(\{f(S'):S'\in N^\downarrow_\lambda(S)\})$, while making black-box calls $f(T)$ only for subdatasets $T\in N^\downarrow_\lambda(S)$ on every run.

\begin{theorem}[Informal version of \ref{thm:main-eval-approx} and \ref{thm:main-eval-pure}]
    Let $\cY\subseteq \bR^d$ be an output domain. Then there exist:
    \begin{enumerate}
        \item A $(\eps,\delta)$-differentially private black-box evaluator with $\lambda=\Tilde{O}\left(d^{2.5} 8^{\log^* |\cY|}\frac{\log(1/\delta)}{\eps}\right)$.
        \item A $(\eps,0)$-differentially private black-box evaluator with $\lambda=O\left(\frac{d^{3}\log|\cY|}{\eps}\right)$.
    \end{enumerate}
\end{theorem}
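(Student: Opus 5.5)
The plan is to reduce black-box evaluation to private quasi-concave optimization (QCO) over $\cY$, following the one-dimensional template of \citet{linder2025privately}. Given $S$ and $f$, define for each candidate $y\in\cY$ a score that measures how ``deep'' $y$ sits inside the cloud of outputs on down-neighbors. Concretely, I would use a Tukey-type depth: $q_S(y)$ is the largest $k\le\lambda$ such that for every closed halfspace $H$ with $y\notin H$, the set $S$ cannot be turned into some $S'\in N^\downarrow_k(S)$ with $f(S')\in H$. Equivalently, $q_S(y)\ge k$ iff $y\in\CH(\{f(S'):S'\in N^\downarrow_{k'}(S)\})$ for all $k'\ge k$ in a suitable monotone sense. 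I would then check three properties. First, $q_S$ is quasi-concave in $y$, since its superlevel sets are convex hulls, hence convex. Second, it has sensitivity $1$: if $S\sim S'$, then $N^\downarrow_{k}(S')\subseteq N^\downarrow_{k+1}(S)$ and vice versa, so the hull at depth $k$ for one dataset is sandwiched between hulls at depths $k\pm1$ for the other. This is the step that needs care, and it is where capping at $\lambda$ and defining the score through the hull at depth $k$ make the argument go through. Third, the score is computable using only queries $f(T)$ with $T\in N^\downarrow_\lambda(S)$.

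Next I would lower-bound the optimum. The point $f(S)$ itself lies in every hull, so $\max_y q_S(y)\ge\lambda$ (the cap). Utility is then immediate: any $\hat y$ with $q_S(\hat y)\ge 1$ (or any positive threshold $\ge 1$) lies in $\CH(\{f(S'):S'\in N^\downarrow_{\lambda}(S)\})$. So it suffices to run a private QCO algorithm on $\cY\subseteq\bR^d$ for a sensitivity-1 quasi-concave score whose maximum is at least $\lambda$, and to require that with high probability it returns a point of score at least $1$. Its additive error $\alpha$ must therefore satisfy $\alpha<\lambda$, which sets $\lambda$ to the additive error of the high-dimensional QCO solver.

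Finally I would invoke high-dimensional private QCO solvers. For pure DP, I would use the exponential mechanism over $\cY$, or a refined variant. Plain exponential mechanism gives error $O(\log|\cY|/\eps)$, but a single exponential mechanism over a non-convex candidate set does not produce a point inside the hull. The extra $d^3$ factor presumably comes from coordinating $d$ successive one-dimensional QCO steps, applied to projections or to iteratively restricted hyperplanes, each run with privacy budget $\eps/d$ and with the score degrading by a factor of $d$ through Carathéodory/Helly-type arguments. For approximate DP, I would use the $8^{\log^*|\cY|}$ one-dimensional interior-point/QCO algorithm of \citet{beimel2019private}-type constructions, applied coordinatewise via the same restriction scheme, with advanced composition giving $\sqrt d$ savings; that accounts for the $d^{2.5}$. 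The main obstacle is this dimension reduction: I need to show that restricting the score to a line or hyperplane preserves quasi-concavity and sensitivity while losing at most a factor polynomial in $d$ in depth. For this I would try a Helly-type argument showing that a point of depth $k$ in each of $d$ nested slices yields depth $k/d$ overall.
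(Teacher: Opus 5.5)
Your reduction fails at the sensitivity step, and the failure is structural. The containment $N^\downarrow_k(S')\subseteq N^\downarrow_{k+1}(S)$ holds only when $S'\subseteq S$. For $S'=S\cup\{x\}$, the neighborhood $N^\downarrow_k(S')$ contains sets containing $x$, on which $f$ is unconstrained, so there is no sandwich. In fact no sensitivity-$1$ score can have both properties you use, namely $q_T(f(T))\ge\lambda$ for every $T$, and $q_T(y)\ge 1\Rightarrow y\in\CH(f(N^\downarrow_\lambda(T)))$. To see this, let $f$ output $z$ on every dataset containing a fixed record $x\notin S$, and $w\neq z$ otherwise. Then $q_{S\cup\{x\}}(z)\ge\lambda$, while $q_S(z)<1$ because $\CH(f(N^\downarrow_\lambda(S)))=\{w\}$. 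So the sensitivity exceeds $\lambda-1$. (Your two descriptions of $q_S$ also disagree. Since $\CH(f(N^\downarrow_k(S)))$ grows with $k$, the sets you call superlevel sets grow with $k$ rather than shrink.)

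The missing idea is to reverse the quantifier, as in Tukey depth. The paper scores $y$ by $\depth_h^f(S,y)=\min\{|S'|:S'\subseteq S,\ y\notin\CDI_h^f(S\setminus S')\}$. This is the number of records whose deletion expels $y$ from the hull of $f$ over all subsets of size at least an \emph{absolute} cutoff $h$. Because $\CDI_h^f$ is monotone under inclusion of datasets (which fails for the relative neighborhoods $N^\downarrow_k$), this depth is quasi-concave with sensitivity $1$.

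The price is that the optimum is no longer trivial: $f(S)$ can have depth $0$, and this is where the dimension enters. Apply Helly's theorem to the sets $\CDI_h^f(S')$ with $|S\setminus S'|\le(|S|-h)/(d+1)$. Any $d+1$ of them contain $\CDI_h^f$ of their common intersection, which has size at least $h$, so this yields a point of depth at least $(|S|-h)/(d+1)$. That point need not lie in $\cY$, so you also need a finite CH-closed extension $\tilde\cY$ guaranteed to contain one. Lemma~\ref{lemma:ext1} gives one costing only $O(1)$ in $\log^*$; Lemma~\ref{lemma:ext2} gives one of size at most $|\cY|^{d^2}$.

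Because $h$ is absolute, it must itself be chosen privately, from a truncated-Laplace estimate of $|S|$, so that $K\le|S|-h\le\lambda$. This keeps every oracle query inside $N^\downarrow_\lambda(S)$ and makes the guaranteed depth $K/(d+1)$ exceed the QCO error. The exponents then come from $(d+1)$ times the QCO error. For approximate DP, that error is $d^{1.5}8^{\log^*|\cY|}$ for the coordinatewise solver. For pure DP, it is a \emph{single} exponential mechanism over $\tilde\cY$ with $\log|\tilde\cY|\le d^2\log|\cY|$, plus uniform mixing to absorb the $\delta_0$ of the noisy count.

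Your worry that an exponential-mechanism output need not lie in the hull disappears, since any candidate of positive depth lies in $\CDI_h^f(S)\subseteq\CH(f(N^\downarrow_\lambda(S)))$. The slicing and $k\to k/d$ heuristics are not needed.
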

For real-valued functions, this recovers the one-dimensional evaluation guarantees of \citet{linder2025privately}, based on down sensitivity.%
\footnote{\citet{linder2025privately} also give results for the ``claimed sensitivity'' setting; see related work in Section~\ref{sec:related}.} Our construction of black-box evaluator reveals a deep connection with private quasi-concave optimization: 
\begin{theorem}[Informal version of~\ref{thm:main} and \ref{thm:main-pure}]
Private evaluation of arbitrary black-box functions can be reduced to \emph{private quasi-concave optimization (QCO)}.    
\end{theorem}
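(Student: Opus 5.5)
We reduce black-box evaluation to private QCO by defining, from the data and the black box, a quasi-concave score on $\cY$ (or a slightly enlarged finite grid) that is low-sensitivity in $S$ and whose high-score points lie in the target convex hull. For an integer $k\ge 0$ and a candidate $y$, let $q_S(y)$ be the largest $k$ such that $y\in\CH(\{f(S'):S'\in N^\downarrow_k(S)\})$, capped at some depth $\lambda$. To deal with neighboring datasets, we instead use a depth-based score: $q_S(y)=\min_{H\ni y}\min\{|S\setminus T| : T\subseteq S,\ f(T)\in H\}$ over closed halfspaces $H$ containing $y$. This is a Tukey-depth analogue in which the ``cost'' of a halfspace is the number of deletions needed to push $f$ into it. Each superlevel set $\{y:q_S(y)\ge k\}$ is an intersection of halfspaces, hence convex, so $q_S$ is quasi-concave. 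By the separating hyperplane theorem, $q_S(y)>\lambda$ implies that every halfspace containing $y$ contains some $f(T)$ with $T\in N^\downarrow_\lambda(S)$, and therefore $y\in\CH(\{f(T):T\in N^\downarrow_\lambda(S)\})$. We also have $q_S(f(S))=0$, so this score alone is too weak. We therefore flip it: define $q_S(y)=\lambda'-(\text{min deletions to reach some } T \text{ with } \ldots)$. More cleanly, we set $q_S(y)$ to be the largest $k\le\lambda$ with $y\in C_{\lambda-k}(S):=\CH(\{f(T):T\in N^\downarrow_{\lambda-k}(S)\})$. Since $C_j$ increases with $j$, the superlevel sets are the nested convex sets $C_{\lambda-k}$, so $q$ is quasi-concave. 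Also $q_S(f(S))=\lambda$, and $q>0$ forces $y\in C_\lambda$.

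For sensitivity, if $S'=S\setminus\{x\}$ then $N^\downarrow_{j-1}(S)\subseteq$ ``$N^\downarrow_{j}(S')$ up to identification''. More precisely, $N^\downarrow_j(S')\subseteq N^\downarrow_{j+1}(S)$ and $N^\downarrow_{j}(S)\ni T\mapsto T\setminus\{x\}\in N^\downarrow_j(S')$ only loses one deletion. The cleaner inclusions $C_j(S')\subseteq C_{j+1}(S)$ and $C_{j}(S)\supseteq C_{j-1}(S')$ hold only after restricting to subsets. To obtain $|q_S(y)-q_{S'}(y)|\le 1$ in both directions, we will use the standard trick of defining $C_j$ via subsets of size exactly $|S|-j$, or equivalently comparing through the common subsets of $S\cap S'$. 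We check $C_{j}(S)\subseteq C_{j+1}(S')$ and the reverse, which yields sensitivity $1$.

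Then we invoke the private QCO theorem from the later sections (Theorems~\ref{thm:main}/\ref{thm:main-pure}). Given a sensitivity-1 quasi-concave score on $\cY\subseteq\bR^d$ whose maximum is at least $\lambda$, it returns with high probability a point of score at least $\lambda-\alpha$, where $\alpha$ is the QCO additive loss ($\tilde O(d^{2.5}8^{\log^*|\cY|}\log(1/\delta)/\eps)$ or $O(d^3\log|\cY|/\eps)$). Choosing $\lambda>\alpha$ gives a positive score, hence an output in $C_\lambda$. Evaluating $q$ only queries $f$ on $N^\downarrow_\lambda(S)$, and privacy holds for every $f$ because the sensitivity bound is structural and does not depend on $f$.

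The main obstacle is the sensitivity step. Add/remove neighbors shift both the depth index and the dataset size, so the inclusions must be set up carefully, likely by using a cap and defining $q$ through $|S\setminus T|$ counted relative to $S$. A secondary concern is that QCO over $\cY$ may need its output restricted to $\cY$, or to a finite grid containing the relevant hull points.
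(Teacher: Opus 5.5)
Your score fails at exactly the step you flagged, and it cannot be repaired by arranging the inclusions more carefully. Take $\cY=\{y_0,y_1\}$ with $y_0\neq y_1$ and $S=S'\cup\{x\}$. Let $f(T)=y_1$ if $x\in T$ and $f(T)=y_0$ otherwise.
\begin{itemize}
\item On $S$: $C_0(S)=\{f(S)\}=\{y_1\}$, so $q_S(y_1)=\lambda$.
\item On $S'$: every subset of $S'$ omits $x$, so $C_j(S')=\{y_0\}$ for all $j$. Hence $y_1\notin C_\lambda(S')$.
\end{itemize}
Thus $|q_S(y_1)-q_{S'}(y_1)|\geq\lambda$, the full range of the score.

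The inclusion $C_j(S)\subseteq C_{j+1}(S')$ you intend to verify is therefore false. The index map $T\mapsto T\setminus\{x\}$ relates the families of subsets, but says nothing about $f(T)$ versus $f(T\setminus\{x\})$, since $f$ is arbitrary. Restricting to subsets of size exactly $|S|-j$ does not help, because still $C_0(S)=\{f(S)\}$. It also breaks nestedness: the superlevel sets become unions of convex sets, so quasi-concavity is lost. More fundamentally, no score under which $f(S)$ (or anything determined by few-deletion subsets) is near-optimal can have bounded sensitivity for arbitrary $f$. Rescaling such a score to sensitivity $1$ makes the QCO error proportional to $\lambda$, so the choice $\lambda>\alpha$ is never available.

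The missing idea is to reverse the orientation. Score $y$ by how many records must be deleted to \emph{expel} it from a hull, and define that hull with an \emph{absolute} size cap so that it is monotone in the dataset. The paper uses $\CDI^f_h(T)=\CH(f(\{T'\subseteq T:|T'|\geq h\}))$, which satisfies $T_1\subseteq T_2\Rightarrow\CDI^f_h(T_1)\subseteq\CDI^f_h(T_2)$. It then sets $\depth^f_h(S,y)=\min\{|S'|:S'\subseteq S,\ y\notin\CDI^f_h(S\setminus S')\}$. Monotonicity gives sensitivity $1$ directly (Claim~\ref{claim:depth-sens}), and the superlevel sets are intersections of the convex sets $\CDI^f_h(S\setminus S')$, giving quasi-concavity.

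The price is that the trivial witness $f(S)$ disappears, and three ingredients your proposal lacks become necessary:
\begin{itemize}
\item[(i)] \textbf{Helly's theorem.} Apply it to the sets $\CDI^f_h(S')$ for $S'\in N^{\downarrow}_{(|S|-h)/(d+1)}(S)$; any $d+1$ of these $S'$ share a subset of size at least $h$. This shows a point of depth at least $(|S|-h)/(d+1)$ exists, and is where the factor $d+1$ in $\lambda$ comes from.
\item[(ii)] \textbf{A finite CH-closed extension $\tilde{\cY}$ of $\cY$.} The deep point from (i) need not lie in $\cY$, so the QCO solver is run on $\tilde{\cY}$ instead.
\item[(iii)] \textbf{A private truncated-Laplace estimate of $|S|$ to set $h$.} The quality function handed to QCO must be fixed independently of $S$. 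Yet $|S|-h$ must be at least $(d+1)(\Delta+1)$ for utility and at most $\lambda$ for the oracle-access restriction. A separate branch returns $f(\emptyset)$ when $h\leq 0$.
\end{itemize}
With these in place, positive depth gives $\hat y\in\CDI^f_h(S)\subseteq\CH(f(N^{\downarrow}_{\lambda}(S)))$.
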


In private QCO~\cite{beimel2013private,cohen2023optimal}, we are given a dataset $S$, a finite domain of candidate vectors $\cV\subseteq\bR^d$, and a real-valued quality function $q$. For each candidate vector $y\in\cV$, $q(S,y)$ measures its utility on $S$. The goal is to privately return a vector in $\CH(\cV)$ with utility close to the maximum over the candidate domain. To solve QCO privately, the quality function must satisfy the following two properties:
\begin{itemize}
    \item For every fixed $y$, the function $q(.,y)$ has bounded global sensitivity in its first argument.
    \item For every dataset $S$, the function $q(S,.)$ is quasi-concave in its second argument.
\end{itemize}

Now, given an arbitrary black-box function $f$ with potentially unbounded sensitivity, the main challenge is to define a quality function $q(S,y)$ on points $y\in\bR^d$ that satisfies the two properties mentioned above. Furthermore, it should certify that $y\in\CH(\{f(S'):S'\in N^\downarrow_\lambda(S)\})$ whenever the quality score $q(S,y)$ is sufficiently high.

Our reduction constructs a depth function (Definition~\ref{def:depth}) that measures how deeply a candidate vector lies within the convex hull $\CH\Big(\big\{f(S'): S'\in N^{\downarrow}_{\lambda}(S)\big\}\Big)$ of the down-neighborhood image of $f$.
A single record can drastically change this image and its geometry, yet we design the depth function to have sensitivity at most $1$ and to be quasi-concave. At the same time, high depth certifies the desired convex-hull membership. Turning this construction into a private evaluator requires overcoming a further geometric challenge: proving that sufficiently deep points exist for every black-box function and ensuring that a finite candidate domain captures them. In Section~\ref{sec:black-box-eval-qco}, we show how to overcome these challenges to obtain a reduction from private black-box evaluation to private QCO.

Conversely, we can treat a non-private quasi-concave optimizer as the black-box function, and evaluate it using our private black-box evaluator. Combined with the previous theorem, this gives:
\begin{theorem}[Informal version of \ref{thm:black-box-qco-equi}]
    Up to a factor of dimension $d$, private black-box evaluation and private QCO are equivalent.    
\end{theorem}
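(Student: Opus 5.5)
The equivalence has two directions, and I would prove them separately.

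\textbf{Evaluation reduces to QCO.} This direction is the reduction of the previous theorem (Theorems~\ref{thm:main} and~\ref{thm:main-pure}). I would take it as given: a private QCO solver in dimension $d$ with additive error $\alpha$ gives a black-box evaluator with depth $\lambda = O(\mathrm{poly}(d)\cdot \alpha)$. The work is then to track the polynomial factor lost in the depth-function construction. The losses come from the existence of sufficiently deep points (a centerpoint-type argument gives depth roughly $\lambda/(d+1)$) and from discretizing the candidate domain. So I expect a loss of at most a factor of $d$ when it is measured against the QCO additive error.

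\textbf{QCO reduces to evaluation.} Fix a QCO instance: a dataset $S$, a finite candidate domain $\cV$, and a quality function $q$ that has sensitivity $1$ in $S$ and is quasi-concave in $y$. Let $f$ be the non-private exact optimizer, $f(S)=\arg\max_{y\in\cV} q(S,y)$, with ties broken deterministically. Its output domain is $\cY=\cV$, so the evaluator's depth depends on $|\cV|$ exactly as QCO bounds do. Run the private evaluator on $(S,f)$. Privacy is immediate, because the evaluator is private for every $f$.

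For utility, with high probability the output satisfies $\hat y\in \CH(\{f(S'):S'\in N^\downarrow_\lambda(S)\})$. Every $S'\in N^\downarrow_\lambda(S)$ is at distance at most $\lambda$ from $S$, so by sensitivity $1$,
\[ q(S,f(S')) \ge q(S',f(S'))-\lambda \ge \max_y q(S',y)-\lambda \ge \max_y q(S,y)-2\lambda .\]
Quasi-concavity of $q(S,\cdot)$ implies that each superlevel set is convex. Since every hull vertex lies in the superlevel set at $\mathrm{OPT}-2\lambda$, this gives $q(S,\hat y)\ge \mathrm{OPT}-2\lambda$. This requires $q(S,\cdot)$ to be defined and quasi-concave on $\CH(\cV)$, which is part of the QCO setup. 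So an evaluator with depth $\lambda$ yields a QCO solver with additive error $2\lambda$.

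\textbf{Main obstacle.} The hard part is the first direction's bookkeeping, in particular checking that the dimension loss is a single factor of $d$. I would first compare the explicit $\lambda$ bounds of Theorems~\ref{thm:main-eval-approx} and~\ref{thm:main-eval-pure} with the QCO error bounds they instantiate. The same check also handles the converse: given any QCO lower bound, the construction above shows that no evaluator can beat it by more than a constant factor.
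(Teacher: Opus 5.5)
Your converse direction is the paper's own argument for $n_{\mathrm{QCO}}(2\alpha,\ldots)\leq n_{\mathrm{black\text{-}box}}(\alpha,\ldots)$. You evaluate the exact optimizer $f_{\cQ}$ with public tie-breaking, use sensitivity twice to get $\cQ(S,f_{\cQ}(T))\geq\max_{y}\cQ(S,y)-2\lambda$ for every $T\in N^\downarrow_\lambda(S)$, and conclude by quasi-concavity of the canonical extension. One detail is missing. QCO requires $\hat y\in\CH(\cY)$ on every run, not only on the good event, so any output outside $\CH(\cY)$ must be replaced by a fixed public point of $\cY$.

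The forward direction is incomplete, because Theorems~\ref{thm:main} and~\ref{thm:main-pure} alone do not give an evaluator. They take the cutoff $h$ as a \emph{public} parameter. For fixed $h$, the optimizer queries $f$ on all subsets of $S$ of size at least $h$, which for large $|S|$ lie far outside $N^\downarrow_\lambda(S)$. Positive depth also only places $\hat y$ in $\CH(f(M^\downarrow_h(S)))$, not in $\CH(f(N^\downarrow_\lambda(S)))$. Since $|S|$ is private, you cannot simply set $h=|S|-K$.

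The missing step is the private cutoff of Algorithm~\ref{alg:noisy-cutoff-eval}. Set $h=\lfloor |S|+Z-R-K\rfloor$ with $Z\sim\operatorname{TLap}(1,\eps/2,\delta/2)$, $|Z|\leq R=O(\log(1/\delta)/\eps)$, and $K=\lceil (d+1)(\Delta+1)\rceil$. On every run, either $h\leq 0$ and $f(\emptyset)$ is a legal answer, or $K\leq |S|-h<K+2R+1\leq\lambda$. In the latter case, depth at least $\frac{K}{d+1}-\Delta\geq 1$ gives $\hat y\in\CDI_h^f(S)\subseteq\CH(f(N^\downarrow_\lambda(S)))$.

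This step is a loss your accounting omits. It spends half the privacy budget (hence $\frac{\eps}{2},\frac{\delta}{2}$ in the theorem). It also makes $\lambda=(d+1)\Delta+O(d+\log(1/\delta)/\eps)$ rather than $O(\mathrm{poly}(d)\cdot\Delta)$. That is why Theorem~\ref{thm:black-box-qco-equi} bounds $n_{\mathrm{black\text{-}box}}$ by a maximum with $\lceil\frac{C}{\alpha}(d+\frac{\log(1/\delta)}{\eps})\rceil$ instead of a pure factor-$d$ relation. Discretization, by contrast, costs no factor of $d$, only $L\to L+2$ via Lemma~\ref{lemma:ext1}.

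Your proposed check also does not prove the statement. Comparing the explicit bounds of Theorems~\ref{thm:main-eval-approx} and~\ref{thm:main-eval-pure} with the paper's own solver relates two specific algorithms and says nothing about $n_{\mathrm{QCO}}$ in general. The reduction must use the solver as a black box: plug in any algorithm witnessing $n_{\mathrm{QCO}}(\frac{\alpha}{2(d+1)},\beta,\frac{\eps}{2},\frac{\delta}{2},d,L+2)$ on $\tilde\cY$, with $\Delta=\alpha N/(2(d+1))$. This works because the analysis uses only three facts:
\begin{enumerate}
\item $\depth_h^f$ is quasi-concave with sensitivity $1$;
\item the solver queries it only at subsets $T\subseteq S$, so every $f$-query is on some $U\subseteq S$ with $|U|\geq h$;
\item Lemma~\ref{thm:good-score} supplies a point of depth at least $\frac{|S|-h}{d+1}$ in $\tilde\cY$.
\end{enumerate}
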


In particular, this means that the hardest instances of black-box evaluation in this sense already arise from non-private optimizers of quasi-concave quality functions: solving private QCO suffices for universal evaluation, and a universal evaluator must also be able to solve private QCO.

Finally, we develop a generic high-dimensional QCO solver in Theorem~\ref{thm:high-dim-opt-1}. The one-dimensional solver of \citet{beimel2013private} takes a lower bound on the optimal quality as input and guarantees a multiplicative approximation to that bound. We develop a one-dimensional solver that follows the same core recursive approach and provides an agnostic additive-error guarantee: it approximates the actual optimum without requiring its value or a lower bound as input. We then extend this solver to $d$ dimensions using coordinatewise QCO techniques previously employed for private center-point computation and halfspace optimization~\cite{pmlr-v99-beimel19a,kaplan2020private,nissim2026differentially}. The resulting generic high-dimensional QCO solver may be of independent interest.

Our black-box evaluation framework provides a general tool for privately solving a broad range of optimization problems. We illustrate this with the following applications:

    \emph{Private center points.} We recover the result of \citet{pmlr-v99-beimel19a} for privately finding center points with an explicit additive approximation to the dataset's maximum Tukey depth in Corollary~\ref{lemma:tm-improved}.

  \emph{User-level private optimization.}
In Section~\ref{sec:black-box-opt}, we obtain user-level private algorithms for quasi-concave and convex optimization using only black-box access to non-private optimizers. Our evaluator requires no supplied sensitivity or Lipschitz bound and no uniform bound on users' contribution sizes. Utility depends on the worst non-private solution after removing a small number of users, evaluated on the original dataset. Our evaluator incurs an additive overhead in the required number of users, whereas the sample-and-aggregate baseline incurs a multiplicative overhead~\cite{nissim2007smooth}.

\subsection{Related work}
\label{sec:related}

Almost immediately after the introduction of the global sensitivity framework~\cite{dwork2006calibrating}, a line of work began that aims to get accuracy guarantees tailored to a particular input (e.g., \cite{nissim2007smooth,mcsherry2007mechanism}). The thread closest to our setting, launched by \cite{jha2013testing}, considers a setting in which the analyst supplies additional parameters describing a tentative (not promised) upper bound on the sensitivity of $f$ \cite{awasthi2014limitations,cummings2020individual,kohli2023differential,lange2025local}.

Following \citet{linder2025privately}, we call this the \emph{claimed sensitivity bound setting}. The private evaluators in this setting guarantee differential privacy even when the supplied sensitivity bound is incorrect; however, their stated accuracy guarantees apply only when the bound is valid.

On the other hand, when no claimed sensitivity bound is available, \citet{fang2022shifted} developed a private evaluator for monotone functions with a public finite output domain.\footnote{A function is \emph{monotone} if adding records to the dataset cannot decrease its value.} Their privacy guarantee relies on monotonicity. \citet{linder2025privately} extended this approach to arbitrary real-valued functions, providing a private evaluator that requires no sensitivity bound and guarantees privacy without assuming monotonicity. 

More recently, \citet{pmlr-v336-steinke26a} studied tradeoffs between the amount of data needed for statistical estimation and the number of black-box evaluations, while \citet{brown2026privately} obtained improved tradeoffs for monotone statistics. These two works concern scalar outputs, 
and give accuracy guarantees primarily based on distributional assumptions on the input.

\citet{pmlr-v336-rubinstein26a} also address vector-valued estimators through an efficient subsample-test-aggregate reduction from down sensitivity to privacy, with an application to high-dimensional linear regression. Their reduction requires a test that certifies bounded deletion sensitivity and whose acceptance probability cannot decrease too sharply when a record is removed. Our evaluator requires only black-box access to $f$, without a supplied sensitivity bound or stability test. It guarantees privacy for every such $f$ and provides the convex-hull utility guarantee described above.

\paragraph{Paper organization.} Section~\ref{sec:intro-black-box} defines private black-box evaluation in high dimensions and states the main results. Section~\ref{sec:p-qco} develops our private quasi-concave optimization algorithm in high dimensions. Section~\ref{sec:black-box-eval-qco} reduces private black-box evaluation to private quasi-concave optimization. Section~\ref{sec:black-box-qco-equi} establishes reductions in both directions between the two problems, relating their sample complexities. Finally, Section~\ref{sec:applications} presents applications to private center points and user-level quasi-concave and convex optimization, and compares the framework with the sample-and-aggregate baseline.

\section{Private black-box evaluation in high dimensions}\label{sec:intro-black-box}

In this section, we formulate private black-box evaluation in high dimensions and present our main results. Our goal is to privately evaluate an arbitrary function using only black-box access, without assuming a bound on its sensitivity. We express utility through a convex-hull guarantee on the function's outputs over nearby subdatasets. This connects the utility of the private evaluator to the behavior of the underlying function under deletions, allowing us to derive private algorithms from non-private solvers.

The set of natural numbers smaller or equal to $k$ is denoted by $[k]$. We will use $\cU$ to denote the (input) domain set, and $\cU^*$ is the set of all finite multisets that take values from $\cU$. We use two notions of down-neighborhood to describe local access to the data: one bounds the number of deleted elements, while the other specifies the minimum number of retained elements.

\begin{definition}[Down-neighborhood]
    The $k$-down-neighborhood of data set $S\in \cU^*$ is defined by:
    \[
        N^{\downarrow}_{k}(S) = \{S' \subseteq S \text{ such that } |S'|\geq |S| - k\}.
    \]
    The $h$-capped down-neighborhood is defined by:
    \[
        M^{\downarrow}_{h}(S) = \{S' \subseteq S \text{ such that } |S'|\geq h\}.
    \]

\end{definition}

Two data sets $S, S' \in \cU^*$ are neighboring if they differ
by at most one element (which implies that either
 $S' \in N^{\downarrow}_1(S)$ or $S \in N^{\downarrow}_1(S')$).
 
\begin{definition}[Differential privacy]
    Let $\cU,\cY$ be arbitrary domains, $\eps>0$, and $\delta\in (0,1)$ be parameters. A probabilistic algorithm $\cA:\cU^* \to \cY$ 
is said to be $(\eps, \delta)$-differentially private
if for any two neighboring data sets $S, S' \in \cU^*$ and 
any measurable set $A\subseteq \mathbb{R}^d$, we have $\Pr(\cA(S) \in A) \leq e^{\eps} \Pr(\cA(S') \in A) + \delta$
\end{definition}

The sensitivity of a univariate function $f:\cU^* \to \mathbb{R}$ is 
defined by $\Delta_f = \sup |f(S) - f(S')|$ where the supremum
is taken over all neighboring data sets $S, S' \in \cU^*$.

We express the utility of a private black-box evaluator by requiring its output to lie in the convex hull of the function's values on nearby subdatasets. As we will see in our applications, this geometric guarantee allows us to transfer utility guarantees from non-private solvers to their private evaluations. In one dimension, this notion reduces to the interval guarantee of the Sens-o-Matic wrapper of \citet{linder2025privately}.

The convex hull of a set of points $S\subseteq \bR^d$ is the set of all finite convex combinations of points in $S$:
    \[
    \CH(S)=\left\{\sum_{i=1}^{k} w_ix_i \, \bigg| \, k\in \bN, x_i \in S, w_i>0, \sum_{i=1}^{k}w_i =1 \right\}
    \]

\begin{definition}[Convex Down-neighborhood Image (CDI)]
    For any set $S\in \cU^*$, a function $f:\cU^* \to \bR^d$, and $h\in \bN$, we define $\CDI^f_h(S)$ as the convex hull of the image of $M^{\downarrow}_{h}(S)$ under mapping $f$. In other words, $\CDI^f_h(S)= \CH\left(f\left(M^{\downarrow}_{h}(S)\right)\right)$.~\footnote{With slight abuse of notation, we use $f\left(M^{\downarrow}_{h}(S)\right) := \bigcup_{S'\in M^{\downarrow}_{h}(S)} f(S')$ throughout the paper to denote the image of $M^{\downarrow}_{h}(S)$ under mapping $f$}
\end{definition}

The CDI is convex by construction. For fixed $h$, it also expands when elements are added to the dataset, since every previously available subdataset remains available. The following claim formalizes this monotonicity, which is essentially due to the definition (See~\ref{proof:claim:g-monotone} for the proof).

\begin{claim}\label{claim:g-monotone}
    For every $f:\cU^* \to \bR^d$ and $h\in \bN$ the function $\CDI^f_h(.)$ is \textit{monotone} in the sense that
        for every $S_1,S_2 \in \cU^*$ such that $S_1\subseteq S_2$ , we have $\CDI^f_h(S_1)\subseteq \CDI^f_h(S_2)$.
\end{claim}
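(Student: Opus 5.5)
The plan is a direct two-step inclusion argument that uses only the definitions: first show the capped down-neighborhoods are nested, then note that images and convex hulls preserve inclusion.

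Fix $f$, $h$, and multisets $S_1\subseteq S_2$.

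\emph{Step 1: the neighborhoods are nested.} We show $M^{\downarrow}_h(S_1)\subseteq M^{\downarrow}_h(S_2)$. Take any $S'\in M^{\downarrow}_h(S_1)$. By definition $S'\subseteq S_1$ and $|S'|\geq h$. Since multiset inclusion is transitive, $S'\subseteq S_2$. The condition $|S'|\ge h$ involves only $S'$ and not the ambient set, so it still holds. Hence $S'\in M^{\downarrow}_h(S_2)$.

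This is the one place where it matters that the capped neighborhood $M^{\downarrow}_h$ is used rather than $N^{\downarrow}_k$. The analogous inclusion can fail for $N^{\downarrow}_k$, because its size requirement $|S'|\ge |S|-k$ grows with $|S|$. Since the CDI is defined through $M^{\downarrow}_h$, no such issue arises.

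\emph{Step 2: images and convex hulls preserve inclusion.} Taking images under $f$ (in the union sense of the footnote) preserves inclusion, so
\[
f\bigl(M^{\downarrow}_h(S_1)\bigr)\subseteq f\bigl(M^{\downarrow}_h(S_2)\bigr).
\]
The convex hull operator is monotone under inclusion: any finite convex combination of points of the smaller set is also a finite convex combination of points of the larger set. This gives
\[
\CDI^f_h(S_1)=\CH\bigl(f(M^{\downarrow}_h(S_1))\bigr)\subseteq \CH\bigl(f(M^{\downarrow}_h(S_2))\bigr)=\CDI^f_h(S_2).
\]

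\emph{Edge case.} If $M^{\downarrow}_h(S_1)$ is empty, for instance when $|S_1|<h$, then its convex hull is empty and the inclusion holds trivially.

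There is no substantive obstacle. The only subtlety is the point already noted in Step 1: the argument relies on the size cap $h$ being independent of $|S|$.
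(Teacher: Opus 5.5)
Your proof is correct and follows essentially the same argument as the paper: nestedness $M^{\downarrow}_h(S_1)\subseteq M^{\downarrow}_h(S_2)$ directly from the definition, then monotonicity of the image union and of the convex hull. Your remarks on the empty case and on why the size cap $h$, unlike $N^{\downarrow}_k$, makes the nesting work are accurate, though not needed for the argument.
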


We now combine privacy, local oracle access, and
convex-hull utility in a single definition. Let $f:\cU^* \to \mathbb{R}^d$ that takes a data set 
and outputs a vector in $\mathbb{R}^d$. A differentially private
evaluator is a probabilistic algorithm $\cA$ that receives 
a data set $S\in \cU^*$, and a function $f$ and
outputs an estimate of $f(S)$. Importantly, $\cA$ has to 
be differentially private with respect to $S$. Moreover, 
in the black-box setting, $\cA$ can only access $f$ through an oracle.

\begin{definition}[Differentially Private Black-box Evaluator]\label{def:black-box}
    Let $\cY\subseteq \bR^d$ be a public finite domain, $\eps>0,\delta\in [0,1),\beta\in(0,1)$, and $\lambda\in \bN$. An $(\eps,\delta)$-differentially private $(\lambda,\beta)$-black-box evaluator is a probabilistic algorithm
     $\cA$ that receives a data set $S\in \cU^*$ and has black-box query access to an
    arbitrary function $f:\cU^* \to \cY$ and outputs a $y\in \mathbb{R}^d$ 
    and has the following properties:
        \begin{itemize}
            \item $\cA$ is a ($\eps, \delta$)-DP with respect to $S$.
            \item $\cA$ can only access $f$ by querying the value of $f(S')$ for any $S' \in N^{\downarrow}_{\lambda}(S)$
            \item $\cA(S)\in \CH(f(N^{\downarrow}_{\lambda}(S)))$ with probability at least $1-\beta$.
        \end{itemize}
\end{definition}

The following theorems give universal privacy wrappers for arbitrary black-box functions. It requires neither a sensitivity bound nor access to the function's implementation, yet guarantees that the private output lies in the convex hull of its values on nearby subdatasets.
\begin{theorem}[Approximate-DP black-box evaluator]\label{thm:main-eval-approx}
     Let $\beta,\eps,\delta\in(0,1)$, and $\cY\subseteq \bR^d$ be a finite domain.
    Then there exists an $(\eps,\delta)$-DP, $(\lambda,\beta)$-black-box evaluator with
    \begin{align*}
    \lambda 
    &= \Tilde{O}\left(d^{2.5} 8^{\log^* |\cY|}\frac{\log(1/\beta\delta)}{\eps}\right).
    \end{align*}
\end{theorem}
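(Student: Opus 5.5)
We will prove the theorem by reducing black-box evaluation to private quasi-concave optimization, as the introduction outlines, and then plugging in a high-dimensional approximate-DP QCO solver. Fix $h=|S|-\lambda'$ for a suitable $\lambda'\le\lambda$. For $y\in\bR^d$ we use a depth-type quality function
\[
q(S,y)=\max\Bigl\{k\ :\ y\in \CDI^f_{|S|-k}(S')\ \text{for all } S'\in N^{\downarrow}_{k}(S)\Bigr\},
\]
or a close variant in which depth is measured by the number of deletions the data can absorb while $y$ stays in the convex hull. Three properties are needed.

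First, $q(S,\cdot)$ is quasi-concave. Each superlevel set is an intersection of convex sets $\CDI$, so it is convex.

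Second, $q(\cdot,y)$ has sensitivity $1$. We will prove this from the monotonicity in Claim~\ref{claim:g-monotone}. Adding one record enlarges every $\CDI$. Moreover, a $k$-deletion set of $S\cup\{u\}$ that deletes $u$ is a $(k-1)$-deletion set of $S$, which shifts the depth by at most one. We will cap the depth at $|S|$-independent thresholds, in the style of an explicit counter, so that these comparisons are exact.

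Third, high quality certifies utility. If $q(S,y)\ge 1$, then taking $S'=S$ gives $y\in\CH(f(N^\downarrow_{\lambda}(S)))$, since a $\CDI$ with cap $|S|-\lambda$ is exactly this hull.

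Only subsets in $N^\downarrow_\lambda(S)$ are queried, because the solver evaluates $q$ with depth truncated at $\lambda$.

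The main obstacle is existence: for every $f$, some point of the candidate domain must have large depth. A natural candidate domain is $\cV=$ the set of centerpoints/Radon-type combinations of $\cY$, and for it we need
\[
\max_{y\in\cV}q(S,y)\ \ge\ \Omega(\lambda/d).
\]
We will first try a Helly-type argument. Consider the family of convex sets $\CDI^f_{|S|-k}(S')$ over all $S'\in N^\downarrow_k(S)$ with $k\approx \lambda/(d+1)$. Any $d+1$ of them share the sets obtained from the common sub-dataset $S'_1\cap\dots\cap S'_{d+1}$, which has at most $(d+1)k\le\lambda$ deletions. That common sub-dataset has an $f$-value lying in all $d+1$ hulls. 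By Helly's theorem the whole family then has a common point.

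To keep the candidate set finite and public, we would restrict to points of the form $f(T)$ and their coordinatewise discretization. Concretely, we run the solver coordinatewise on the projected grid generated by $\cY$, whose size is at most $|\cY|^{O(d)}$ in a form where $\log^*$ of it is $\log^*|\cY|+O(1)$.

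Finally, we apply the high-dimensional approximate-DP QCO solver (Theorem~\ref{thm:high-dim-opt-1}). It has additive error $\tilde O(d^{1.5}8^{\log^*|\cY|}\log(1/\beta\delta)/\eps)$ relative to the optimum, an error we expect from composing $d$ one-dimensional recursive solvers with advanced composition. We require the optimum $\Omega(\lambda/d)$ to exceed this error plus one. This forces
\[
\lambda=\tilde O\bigl(d^{2.5}8^{\log^*|\cY|}\log(1/\beta\delta)/\eps\bigr),
\]
which is the bound in the statement. With probability $1-\beta$ the output then has quality at least $1$, and hence lies in the required hull. Privacy follows directly from the sensitivity-$1$ property, for every $f$.
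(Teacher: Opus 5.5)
Your overall route (a CDI-depth quality, Helly for existence, then the high-dimensional QCO solver) is the paper's, but two steps fail as written.

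First, the cap cannot depend on $|S|$. Claim~\ref{claim:depth-sens} gives sensitivity $1$ only for a \emph{fixed} cap $h$: adding a record enlarges $\CDI^f_h$ only when $h$ is held fixed. With $h=|S|-\lambda'$ the cap moves with $|S|$, and $\CDI^f_{h+1}$ can be far smaller than $\CDI^f_h$. For instance, take $d=1$, $\cY=\{0,1\}$, and let $f(T)=0$ if $|T|\le n_0-\lambda'$ and $f(T)=1$ otherwise. A dataset $S$ of size $n_0$ has $\depth^f_{n_0-\lambda'}(S,0)=\lambda'+1$. For $S\cup\{u\}$, every subset of size at least $n_0+1-\lambda'$ maps to $1$, so the depth of $0$ drops to $0$. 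Privacy must hold for every $f$, so this breaks the reduction, and your remark about ``$|S|$-independent thresholds'' does not specify a repair.

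Your displayed $q$ is also degenerate. For $|S'|=|S|-k$ one has $\CDI^f_{|S|-k}(S')=\{f(S')\}$, so $q(S,y)\ge k$ forces $f\equiv y$ on all $(|S|-k)$-subsets. Likewise, your Helly step needs the cap held at some $h$ with $(d+1)k\le|S|-h$, because the common subdataset is only guaranteed $|S|-(d+1)k$ elements, which is below your cap $|S|-k$.

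The paper's missing ingredient is a private cutoff:
\begin{itemize}
\item Release $\tilde n=|S|+Z$ with $Z\sim\operatorname{TLap}(1,\eps/2,\delta/2)$.
\item Set the public cap $h=\lfloor\tilde n-R-K\rfloor$ with $K=\lceil(d+1)(\Delta+1)\rceil$. Then $|Z|\le R$ gives $K\le|S|-h<K+2R+1\le\lambda$ on every run, which yields both depth at least $\frac{|S|-h}{d+1}-\Delta\ge1$ and the oracle restriction.
\item Return $f(\emptyset)$ when $h\le0$; this is legal and correct because then $|S|<\lambda$.
\item Compose the two stages.
\end{itemize}
An explicit counter such as $\max_h\min\{\depth^f_h(S,y),\lambda-|S|+h\}$ would also be sensitivity-$1$ and quasi-concave by Claims~\ref{claim:depth-quasi}, \ref{claim:depth-monotone-h} and~\ref{claim:depth-sens}. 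But some such construction has to be stated and checked; it is not in your proposal.

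Second, the deep point must lie in the candidate domain. Theorem~\ref{thm:high-dim-opt-1} only competes with the maximum over the finite domain it is run on, and the canonical extension never exceeds that maximum. Helly, by contrast, only yields a point of $\bigcap_{S'}\CDI^f_h(S')$, an intersection of convex hulls of subsets of $\cY$. Such an intersection need not contain a point of $\cY$, a value $f(T)$, or a point of the coordinate grid. For example, for $\cY=\{0,3\}^2$ (its own coordinate grid) the two diagonals meet only at $(3/2,3/2)$. So $\max_{y\in\cV}q(S,y)=\Omega(\lambda/d)$ is unproved for the $\cV$ you describe.

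The paper runs the solver on a CH-closed extension $\tilde\cY$ (Definition~\ref{def:extension}), which meets every nonempty intersection of hulls of subsets of $\cY$. Lemma~\ref{lemma:ext1} provides one with $|\tilde\cY|\le2^{2^{|\cY|}}$, costing only $+2$ in $\log^*$. Lemma~\ref{thm:good-score} (your Helly argument with a fixed cap) then places a point of depth at least $\frac{|S|-h}{d+1}$ inside $\tilde\cY$. The preprocessing of Lemma~\ref{lemma:coor-extension} inside the coordinatewise solver does not replace this, since it only lets the solver match the maximum over its input domain.

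Your quasi-concavity argument, the certification via $\CDI^f_{|S|-\lambda}(S)=\CH(f(N^\downarrow_\lambda(S)))$, and the final parameter count are fine.
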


\begin{theorem}[Pure-DP black-box evaluator]\label{thm:main-eval-pure}
    Let $\beta,\eps\in(0,1)$, and $\cY\subseteq \bR^d$ be a finite domain.
    Then there exists an $(\eps,0)$-DP, $(\lambda,\beta)$-black-box evaluator with
    \begin{align*}
    \lambda
    &= O\left(\frac{d^{3}\log(|\cY|/\beta)}{\eps}\right).
    \end{align*}
\end{theorem}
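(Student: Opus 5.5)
The plan follows the reduction sketched in the introduction: build a quality function over candidate vectors that has sensitivity $1$ in the dataset, is quasi-concave in the candidate, and whose high value certifies membership in $\CH(f(N^{\downarrow}_{\lambda}(S)))$. Then run a pure-DP high-dimensional QCO solver on it.

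\textbf{Depth function.} For $y\in\bR^d$ I define
\[
q(S,y)=\max\{\,t\ge 0 : y\in \CDI^f_{|S|-t}(S)\,\},
\]
the largest deletion budget $t$ such that $y$ lies in the convex hull of $f$ over subsets of size at least $|S|-t$. To make high depth certify membership at small depth, I instead use a capped variant. Fix a target depth $\lambda$ and set
\[
q(S,y)=\min_{T\subseteq S,\ |S\setminus T|\le m}\ \#\{\ldots\},
\]
in the spirit of Tukey depth: $q(S,y)$ is the largest $k\le m$ such that for every $T\in N^{\downarrow}_k(S)$ we have $y\in\CH(f(N^{\downarrow}_{m-k}(T)))$. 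I would follow the paper's Definition~\ref{def:depth} here.

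\textbf{The two QCO properties.} Quasi-concavity holds because each superlevel set $\{y: q(S,y)\ge k\}$ is an intersection of convex hulls, hence convex. Sensitivity at most $1$ follows from the monotonicity in Claim~\ref{claim:g-monotone}. If $S'=S\cup\{x\}$, then $N^{\downarrow}_{k}(S)\subseteq N^{\downarrow}_{k+1}(S')$, and the inner hulls only grow, so $q$ changes by at most one when a point is added or removed.

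\textbf{Existence of deep points and a finite candidate set.} This is the step I expect to be the main obstacle. I must show that some $y$ has depth $\Omega(m/\mathrm{poly}(d))$. The idea is a Helly/centerpoint-type argument. Consider the family of convex sets $C_T=\CH(f(N^{\downarrow}_{m-k}(T)))$ for $T\in N^{\downarrow}_k(S)$. Any $d+1$ of these sets come from removing at most $(d+1)k$ points in total. If $(d+1)k\le m-k$, they all contain $f$ of the common intersection set, so they intersect. By Helly's theorem the whole family then has a common point, which gives depth $k\approx m/(d+2)$.

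The candidate domain must be finite and public. I take $\cV$ to be the set of points obtainable as barycenters or Radon/Helly points from at most $d+1$ points of $\cY$, iterated $O(d)$ levels. Equivalently, I discretize $\CH(\cY)$ via the coordinatewise structure the QCO solver already uses. This gives $\log|\cV|=O(d\log|\cY|)$, or $O(d^2\log|\cY|)$ after iteration, which accounts for the extra powers of $d$.

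\textbf{Applying the solver.} I invoke the pure-DP high-dimensional QCO result (Theorem~\ref{thm:main-pure} together with Theorem~\ref{thm:high-dim-opt-1}). It returns a point in $\CH(\cV)$ whose quality is within additive error $O(d\cdot\log|\cV|/\eps\cdot\log(1/\beta))$ of the optimum, which may be instantiated with the exponential mechanism per coordinate. Choosing $m=\Theta(d^3\log(|\cY|/\beta)/\eps)$ makes this error smaller than the guaranteed depth $m/(d+2)$, so the output has positive depth. Positive depth implies $y\in\CH(f(N^{\downarrow}_{m}(S)))$, so we set $\lambda=m$. Every evaluation of $q$ queries $f$ only on subsets in $N^{\downarrow}_m(S)$, so the access constraint is satisfied on every run.
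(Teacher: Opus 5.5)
\textbf{The gap: the finite candidate domain.} This step is missing, and it is exactly where the $d^{3}$ comes from. Helly only gives a deep point somewhere in $\bR^d$. The exponential mechanism, like any QCO solver, only competes with $\max_{y\in\cV}q(S,y)$. So you need a public finite $\cV$ with two properties:
\begin{itemize}
\item[(i)] $\cV$ meets every nonempty intersection $\bigcap_i\CH(A_i)$ with $A_i\subseteq\cY$. The superlevel set of depth at least $\lfloor m/(d+2)\rfloor$ is such an intersection.
\item[(ii)] $\log|\cV|=O(d^2\log|\cY|)$, so that $(d+2)\log|\cV|/\eps=O(d^3\log|\cY|/\eps)$.
\end{itemize}
Your ``barycenters or Radon/Helly points, iterated $O(d)$ levels'' comes with no proof of (i). Counting $(d+2)$-tuples from the previous level gives $\log|\cV|$ of order $(d+2)^{L}\log|\cY|$ after $L$ levels, not $O(d^2\log|\cY|)$. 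The coordinatewise grid of Lemma~\ref{lemma:coor-extension} is not equivalent. Its size bound $2^{|\cY|(2^d-1)}$ is far too large for a pure-DP exponential mechanism.

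The missing idea is Lemma~\ref{lemma:ext2}. Take a point $x^*$ of maximum norm in the compact intersection. For each $A_i$, let $T_i\subseteq A_i$ be the support of a strictly positive convex representation of $x^*$. Suppose $x^*+v\in\operatorname{aff}(T_i)$ for every $i$, with $v\neq0$. Then $x^*\pm\tau v$ both stay in the intersection for small $\tau>0$. This contradicts maximality, since $\tfrac12\bigl(\|x^*+\tau v\|^2+\|x^*-\tau v\|^2\bigr)>\|x^*\|^2$. Hence $\{x^*\}=\bigcap_i\operatorname{aff}(T_i)$. A dimension-decreasing selection then shows that at most $d$ of these flats, each spanned by at most $d$ points of $\cY$, already cut out $\{x^*\}$. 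So take $\cV$ to be the points of $\CH(\cY)$ that are the unique intersection of $d$ flats spanned by $d$-tuples of $\cY$. This set has both properties, with $|\cV|\le|\cY|^{d^2}$.

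\textbf{The optimization step also needs repair.} Theorem~\ref{thm:high-dim-opt-1} is only $(\eps,\delta)$-DP, so it cannot be used for a pure-DP evaluator. The error you quote has an extra factor $d$ and a product $\log|\cV|\cdot\log(1/\beta)$, which would force $m$ of order $d^4\log|\cY|\log(1/\beta)/\eps$. Instead, run the exponential mechanism (Theorem~\ref{thm:exp-mech}) directly on $\cV$. Its error is $2\log(|\cV|/\beta)/\eps$, so take $m=(d+2)\bigl(2\log(|\cV|/\beta)/\eps+1\bigr)$.

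\textbf{Fix one depth function.} If you literally follow Definition~\ref{def:depth}, the cutoff $h$ must be about $|S|-\lambda$. That depends on the private $|S|$, and you never say how $h$ is chosen. The paper estimates $|S|$ with truncated Laplace noise. The noise is bounded, so queries stay in $N^{\downarrow}_{\lambda}(S)$ on every run, but this step is only $(\eps_0,\delta_0)$-DP. The paper then outputs a uniform point of $\tilde\cY$ with probability $\beta/2$, taking $\delta_0\le\frac{\beta}{2}(e^{\eps_0}-1)/|\tilde\cY|$, to restore pure DP.

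Your own deletion-indexed variant avoids all of that. Let $P_k(S)=\bigcap_{T\in N^{\downarrow}_k(S)}\CH\bigl(f(N^{\downarrow}_{m-k}(T))\bigr)$ and $q(S,y)=\max\{k\le m: y\in P_k(S)\}$. Then:
\begin{itemize}
\item[(a)] $P_k(S)\subseteq P_{k-1}(S)$.
\item[(b)] $P_k(S)\subseteq P_{k-1}(S\cup\{x\})$. Split on whether $T'$ contains the new point; if it does, use $N^{\downarrow}_{m-k}(T'\setminus\{x\})\subseteq N^{\downarrow}_{m-k+1}(T')$.
\item[(c)] $P_k(S\cup\{x\})\subseteq P_{k-1}(S)$.
\end{itemize}
So the sensitivity is $1$, and every query lies in $N^{\downarrow}_m(S)$. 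Your one-line argument only covers (c). With this variant, Lemma~\ref{lemma:ext2}, and the exponential mechanism, your route would be simpler than the paper's, since it needs no private size estimate and no uniform mixing. Without the domain lemma, though, it does not reach the stated bound.
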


\section{Private quasi-concave optimization (QCO)}\label{sec:p-qco}

In this section, we introduce our private quasi-concave optimization algorithm, a key primitive for solving black-box evaluation. We begin by defining quasiconcavity~\footnote{We also describe how to extend a 
quasi-concave function beyond its domain using the canonical extension in Remark~\ref{rem:quasi-func-ext}.}.

\begin{definition}[Quasi-concave function]
    Let $\cY\subseteq \bR^d$. A function $f: \cY \rightarrow \bR$ is quasi-concave if for every $\cA\subseteq\cY$ and every
    $y\in\CH(\cA)\cap\cY$,
    \[
        f(y)\geq\min_{z\in\cA}f(z).
    \]
\end{definition}

When $\cY$ is a convex set, this definition simplifies to requiring that the upper level sets of $f$ be convex, or alternatively that the restriction of $f$ to any line has no local minima. We need the more general definition since our domains are discrete.


In the context of private quasi-concave optimization, let $q:\cU^*\times\bR^d\rightarrow\bR$ be a function with sensitivity $\Delta_q$ in its first argument that is quasi-concave in its second argument. For a public finite candidate domain $\cY\subseteq\bR^d$, the goal is to design a differentially private algorithm that, given a dataset $S\in\cU^*$, returns $\hat{y}\in\CH(\cY)$ whose quality is close to $\max_{y\in\cY}q(S,y)$. We state our results for sensitivity $1$; the general case of $\Delta_q>0$ follows by rescaling.

We develop a 1-dimensional solver with an agnostic utility guarantee: it approximates the actual maximum up to additive error, without requiring its value or a lower bound on it as input. This differs from the multiplicative guarantee relative to a supplied quality promise in $\textsc{RecConcave}$~\cite{beimel2013private}. This is useful when no informative lower bound on the optimal quality is known in advance. The guarantee remains relative to the actual optimum, and, for a fixed sensitivity bound, the additive error is independent of its magnitude.

At a very high-level, our algorithm follows the same recursive idea  of \cite{beimel2013private}: it identifies the scale of an interval on which the quality is close to optimal by expressing scale selection as another QCO problem on a domain of size $O(\log|\cY|)$ and solving it recursively. Once this scale is identified, appropriate private selection procedures locate a point of high quality.

We then extend the solver to high dimensions by optimizing one coordinate at a time. At each iteration, the quality of a candidate coordinate is the maximum quality attainable by completing the remaining coordinates. This approach raises a subtle technical point, which is that we must extend the domain $\cY$ to a superset $\tilde\cY$ as we proceed to ensure that 
each resulting 1-dimensional quality function attains its maximum within its finite candidate domain; see Definition~\ref{def:extension} and the ensuing results. 
The errors accumulate over the $d$ coordinates, contributing a factor of $d$, while advanced composition contributes an additional $\sqrt{d}$ factor, up to logarithmic terms. Similar coordinate-by-coordinate reductions were previously used for private center points and learning halfspaces~\cite{pmlr-v99-beimel19a,kaplan2020private,nissim2026differentially}.

We remark that \citet{cohen2023optimal} also provide agnostic additive-error guarantees for one-dimensional QCO, with sharper domain-size dependence, through a reduction to a cumulatively private interior-point solver. Our coordinate-wise analysis requires explicit dependence on accumulated error, privacy and failure parameters, while their stated QCO guarantee does not fully expose these
parameter dependencies. We therefore give a direct recursive construction following \citet{beimel2013private}, with a self-contained, fully parameterized guarantee in Theorem~\ref{thm:1-dim-opt-new}.

The following theorem formalizes the guarantee of our algorithm. We provide the full construction and proof of the 1-dimensional solver, together with its extension to high dimensions, in Section~\ref{sec:agnostic-qco}.

\begin{theorem}\label{thm:high-dim-opt-1}
Let $\beta,\eps,\delta\in(0,1/2 
)$ be parameters,
$C>0$ be a sufficiently large constant, and let
$\cY\subseteq\bR^d$
be a finite domain with $|\cY|\geq 2$. 
Furthermore, let
$\cQ:\cU^*\times\cY
\rightarrow\bR$
be a discrete quasi-concave function with sensitivity $1$.
Then, there exists an $(\eps,\delta)$-DP algorithm such that, for any
\[
    \Delta\geq C d^{1.5}
    \frac{8^{\log^{*}|\cY|}\log^{*}|\cY|
    \sqrt{\log(1/\delta)}}{\eps}
    \log\left(
    \frac{d\log^{*}|\cY|}{\beta\delta}
    \right),
\]
and every dataset $S\in\cU^*$, its output
$\hat{y}\in\CH(\cY)$ satisfies
\[
    \cQ(S,\hat{y})
    \geq
    \max_{y\in\cY}\cQ(S,y)-\Delta
\]
with probability at least $1-\beta$.
\end{theorem}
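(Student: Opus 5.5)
We prove the theorem by reducing it to a one-dimensional agnostic solver, Theorem~\ref{thm:1-dim-opt-new}, applied once per coordinate. We take that theorem in the following form. For a finite $\cV\subseteq\bR$ and a sensitivity-$1$ quasi-concave $q$, it is $(\eps_0,\delta_0)$-DP and, with probability $1-\beta_0$, returns $\hat t\in\CH(\cV)$ with
\[
q(S,\hat t)\geq \max_{t\in\cV}q(S,t)-\Delta_0,
\qquad
\Delta_0=O\!\left(\tfrac{8^{\log^*|\cV|}\log^*|\cV|}{\eps_0}\log\tfrac{\log^*|\cV|}{\beta_0\delta_0}\right).
\]

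\textbf{Coordinatewise procedure.} We fix coordinates $y_1,\dots,y_d$ in order. Suppose $\hat y_1,\dots,\hat y_{i-1}$ are already fixed. For a candidate $t$ for coordinate $i$, we define the quality
\[
q_i(S,t)=\max\bigl\{\tilde\cQ(S,(\hat y_1,\dots,\hat y_{i-1},t,z)) : z\in \tilde\cY_{>i}\bigr\}.
\]
Here $\tilde\cQ$ is the canonical quasi-concave extension of $\cQ$ from Remark~\ref{rem:quasi-func-ext}, and $\tilde\cY$ is the extended domain of Definition~\ref{def:extension}. Three facts about $q_i$ are needed.
\begin{enumerate}
\item $q_i$ has sensitivity $1$, since a max of sensitivity-$1$ functions has sensitivity $1$.
\item $q_i$ is quasi-concave in $t$. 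For $t=\sum_j w_j t_j$, take maximizers $z_j$ and form $z=\sum_j w_j z_j$. Quasi-concavity of the extension then gives $q_i(t)\geq\min_j q_i(t_j)$, provided $z$ is an admissible completion. This is exactly what the extension to $\tilde\cY$ is designed to ensure.
\item The candidate set for coordinate $i$ is the finite projection $\tilde\cY_i$. Its size is bounded polynomially, say $|\tilde\cY_i|\leq|\cY|^{O(d)}$, so that $\log^*|\tilde\cY_i|=\log^*|\cY|+O(1)$. Consequently $8^{\log^*}$ changes only by a constant factor.
\end{enumerate}

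\textbf{Error telescoping.} Write $M_i=\max_t q_i(S,t)$, so that $M_1=\max_{y\in\cY}\cQ(S,y)$. Since $q_{i+1}$ maximizes over completions of $\hat y_{\le i}$, we have $M_{i+1}=q_i(S,\hat y_i)$. The one-dimensional guarantee gives $M_{i+1}\geq M_i-\Delta_0$ at each step. At the final step there are no free coordinates left, so $\cQ(S,\hat y)=M_{d+1}\geq M_1-d\Delta_0$. For the output to lie in $\CH(\cY)$ and for $\tilde\cQ$ to agree with the intended quality there, we rely on the structural results following Definition~\ref{def:extension}: points of $\tilde\cY$ lie in $\CH(\cY)$, and the extended maximum equals the original one.

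\textbf{Parameters.} We run each coordinate with privacy $(\eps_0,\delta_0)$, where $\eps_0=\eps/(C'\sqrt{d\log(1/\delta)})$ and $\delta_0=\delta/(2d)$, and with failure probability $\beta_0=\beta/d$. By advanced composition the whole procedure is $(\eps,\delta)$-DP, and a union bound over the $d$ coordinates gives success probability $1-\beta$. Substituting,
\[
d\Delta_0=O\!\left(d^{1.5}\,8^{\log^*|\cY|}\log^*|\cY|\,\frac{\sqrt{\log(1/\delta)}}{\eps}\,\log\frac{d\log^*|\cY|}{\beta\delta}\right),
\]
which matches the bound on $\Delta$ in the theorem.

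\textbf{Main obstacle.} The delicate step is the extension. We must construct $\tilde\cY$ so that three properties hold simultaneously: it is finite with controlled size, the convex combinations of completions used in the quasi-concavity argument stay inside it, and the maximum of $q_i$ is attained on its finite candidate set. Our first attempt is to close $\cY$ under the operation "fix a prefix to a convex combination of grid points, and take the projection hull on the suffix". We would then verify that the canonical extension $\tilde\cQ(y)=\max\{\min_{a\in A}\cQ(a): y\in\CH(A)\}$ is quasi-concave on all of $\bR^d$ and has sensitivity $1$. Both properties follow because $\tilde\cQ$ is a max-min of sensitivity-$1$ functions built from upper level sets.
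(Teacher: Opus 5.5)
\textbf{The gap is in your step 2 (quasi-concavity of $q_i$), and the extension you invoke does not close it.} Because $q_i$ maximizes only over a finite completion set, the averaged completion $z=\sum_j w_j z_j$ is generally not admissible. What you actually need is that, for every prefix the algorithm can produce, the finite maximum equals the maximum of the canonical extension over all real completions. The CH-closed extension of Definition~\ref{def:extension} says nothing about this, because a slice $\{x: x_{\le i}=(\hat y_1,\dots,\hat y_{i-1},t)\}$ is not the convex hull of a subset of $\cY$.

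Here is a concrete counterexample. Take $\cY=\{(0,0),(2,1),(1,5)\}\subseteq\bR^2$, with $\cQ(S,(0,0))=\cQ(S,(2,1))=1$ and $\cQ(S,(1,5))=0$, independent of $S$.
\begin{itemize}
\item Every nonempty intersection of faces of this triangle contains a vertex, so $\tilde\cY=\cY$ is a valid extension. It is also exactly what Lemma~\ref{lemma:ext2} outputs.
\item The canonical extension equals $1$ exactly on the segment from $(0,0)$ to $(2,1)$. The only point of that segment with first coordinate $1$ is $(1,\tfrac12)$.
\item Read $\tilde\cY_{>1}$ either as the projection $\{0,1,5\}$ or as the fibre of $\tilde\cY$ over $t$. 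In both cases your $q_1$ takes the values $1,0,1$ at $t=0,1,2$.
\end{itemize}
So $q_1$ is not quasi-concave, and Theorem~\ref{thm:1-dim-opt-new} gives nothing. Your fallback of closing $\cY$ under prefix convex combinations and projection hulls produces infinite sets as described, since it keeps whole convex slices.

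\textbf{The missing idea is to decouple the two roles.}
\begin{itemize}
\item \emph{Quasi-concavity.} The paper defines $\cQ_i(S,t)$ as a maximum of the canonical extension over all completions in $\bR^{d-i}$. Quasi-concavity then follows immediately from your own averaging argument, now with no admissibility constraint.
\item \emph{Finiteness.} This is recovered separately through the product grid $\cY'=\cY_1'\times\cdots\times\cY_d'$ of Lemma~\ref{lemma:coor-extension}, built from the coordinate grid of $\cY$. For each prefix in $\cY_1'\times\cdots\times\cY_{i-1}'$ and each nonempty $\cA$, one adds to $\cY_i'$ the minimal feasible $i$-th coordinate over $\CH(\cA)$. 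This keeps a single extreme point rather than a closure under averaging; in the example it adds $\tfrac12$ to $\cY_2'$.
\item \emph{Attainment.} The construction guarantees the real maximum is attained on $\cY'$ for every grid prefix. That is exactly what your telescoping $M_{i+1}=q_i(S,\hat y_i)$ and $M_1=\max_{\cY}\cQ$ require. It also requires the one-dimensional solver to return a point of $\cY_i'$ itself, as the paper's does, not merely of its hull; otherwise the next prefix leaves the grid.
\end{itemize}

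\textbf{Two further issues.}
\begin{itemize}
\item \emph{Size bound.} This grid only satisfies $|\cY_i'|\le 2^{N2^{i-1}}$, where $N$ is the size of the coordinate grid of $\cY$. The bound $\log^*|\cY_i'|\le\log^*|\cY|+O(1)$ uses $d<|\cY|$, which the paper secures by handling $|\cY|\le d+1$ with the exponential mechanism. Your $|\cY|^{O(d)}$ bound is unproved, and even if it held, it would not give $\log^*|\cY|+O(1)$ when $d\gg|\cY|$ without that case split.
\item \emph{Output in $\CH(\cY)$.} The product of coordinate candidates is not contained in $\CH(\cY)$; in the example, $(1,0)$ lies outside the triangle. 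The paper therefore maps any output outside $\CH(\cY)$ to a fixed public point of $\cY$. This is harmless, because the extension equals $\min_{\cY}\cQ$ there.
\end{itemize}
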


\section{Black-box evaluation using QCO}\label{sec:black-box-eval-qco}
In this section, we reduce private black-box evaluation to private quasi-concave optimization. The key is to associate every black-box function $f$ with a quasi-concave quality function of sensitivity at most $1$, without imposing any sensitivity assumption on $f$ itself. We introduce the CDI depth function and show that privately finding a point of sufficiently large depth yields the convex-hull guarantee required for black-box evaluation. This reduction gives our algorithms under approximate and pure differential privacy.

The following function is defined in the spirit of Tukey depth, but on the convex down-neighborhood image of a function.
\begin{definition}[CDI Depth]\label{def:depth}
    Let $\cU$ and $\cY\subseteq \bR^d$ be domain sets. For a set of points $S\in \cU^*$, a point $y \in \cY$, a function $f:\cU^* \to \cY$, and $h\in \bN$, the CDI depth function $\depth^f_h: \cU^* \times \cY \rightarrow \bR$ is defined by:
    \[\depth^f_h(S,y)=\min \left\{|S'|: S' \subseteq S, y\notin \CDI^f_h(S\setminus S')\right\}\]
\end{definition}

\begin{claim}\label{claim:depth-monotone-h}
For every $f:\cU^*\rightarrow\bR^d$, $S\in\cU^*$,
and $y\in\bR^d$, the CDI depth $\depth_h^f(S,y)$
is nonincreasing in $h$. Namely, for every
$h,h'\in\bN$ with $h\leq h'$, we have
$\depth_{h'}^f(S,y)\leq\depth_h^f(S,y)$.
\end{claim}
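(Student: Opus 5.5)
The plan is to show that the feasible sets in the minimization defining the depth are nested in the right direction. For fixed $h$, write
\[
\mathcal{F}_h=\{S'\subseteq S:\ y\notin \CDI^f_h(S\setminus S')\},
\]
so that $\depth^f_h(S,y)=\min\{|S'|:S'\in\mathcal{F}_h\}$. If we show $\mathcal{F}_h\subseteq\mathcal{F}_{h'}$ whenever $h\le h'$, the claim follows: a minimum over a larger family is no larger.

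The key step is that, for any fixed dataset $T$, the capped down-neighborhood shrinks as the cap grows. If $T''\subseteq T$ and $|T''|\ge h'$, then $|T''|\ge h$, so $M^{\downarrow}_{h'}(T)\subseteq M^{\downarrow}_{h}(T)$. Taking images under $f$ preserves this inclusion, and so does taking convex hulls, since the convex hull is monotone under inclusion of the generating set. Hence
\[
\CDI^f_{h'}(T)\subseteq\CDI^f_h(T)\quad\text{for every }T\in\cU^*.
\]

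Now apply this with $T=S\setminus S'$. If $S'\in\mathcal{F}_h$, then $y\notin\CDI^f_h(S\setminus S')$, so $y\notin\CDI^f_{h'}(S\setminus S')$, and therefore $S'\in\mathcal{F}_{h'}$. This gives $\mathcal{F}_h\subseteq\mathcal{F}_{h'}$ and hence $\depth^f_{h'}(S,y)\le\depth^f_h(S,y)$.

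The only point needing care is whether the minimum is always attained, i.e.\ whether the feasible family could be empty. Taking $S'=S$ gives $S\setminus S'=\emptyset$. For $h\ge1$, the set $M^{\downarrow}_h(\emptyset)$ is empty, so $\CDI^f_h(\emptyset)=\emptyset$ and $S'=S$ is feasible. For $h=0$ (if $0\in\bN$), the family $\mathcal{F}_0$ could be empty, in which case $\depth^f_0(S,y)=+\infty$ under the convention $\min\emptyset=+\infty$, and the inequality holds trivially. More generally, the inclusion $\mathcal{F}_h\subseteq\mathcal{F}_{h'}$ handles the empty case automatically under the $+\infty$ convention, so no separate argument is needed. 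I do not expect a real obstacle here; the whole proof is this inclusion chase.
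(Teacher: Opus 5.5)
Your proof is correct and follows the paper's argument: $M^{\downarrow}_{h'}(T)\subseteq M^{\downarrow}_{h}(T)$ gives $\CDI^f_{h'}(T)\subseteq\CDI^f_h(T)$. Hence every deletion set witnessing $y\notin\CDI^f_h(S\setminus S')$ also witnesses $y\notin\CDI^f_{h'}(S\setminus S')$, and minimizing over a larger family gives the inequality. Your extra remark on attainment of the minimum, including the $h=0$ case under the $\min\emptyset=+\infty$ convention, is a careful touch that the paper leaves implicit.
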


\begin{claim}\label{claim:depth-quasi}
    For every $f:\cU^* \to \bR^d$, $h\in \bN$ and $S\in \cU^*$ the CDI depth function $\depth^f_h(S,.)$ is quasi-concave w.r.t. its second argument.
    \end{claim}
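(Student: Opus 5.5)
We argue directly from the definition of quasi-concavity used in the paper: for any finite set $\cA\subseteq\cY$ and any $y\in\CH(\cA)\cap\cY$, we need $\depth^f_h(S,y)\geq\min_{z\in\cA}\depth^f_h(S,z)$. The plan is to prove the contrapositive at the level of the removal sets. Let $k=\depth^f_h(S,y)$. If no subset of $S$ witnesses $y\notin\CDI^f_h(\cdot)$, then $k$ is the minimum of an empty set, which we read as $+\infty$ (or as $|S|+1$, per the paper's convention), and the inequality is trivial.

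Otherwise, fix a witness $S'\subseteq S$ with $|S'|=k$ and $y\notin\CDI^f_h(S\setminus S')$. Since $\CDI^f_h(S\setminus S')$ is the convex hull of a set of points, it is convex. Because $y\in\CH(\cA)$ but $y$ is not in this convex set, not every $z\in\cA$ can lie in $\CDI^f_h(S\setminus S')$. Otherwise $\CH(\cA)\subseteq\CDI^f_h(S\setminus S')$ by convexity, which would put $y$ inside it. So there is some $z^\ast\in\cA$ with $z^\ast\notin\CDI^f_h(S\setminus S')$.

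The same set $S'$ therefore witnesses a removal that excludes $z^\ast$. By the definition of depth as a minimum over such sets, $\depth^f_h(S,z^\ast)\leq|S'|=k$. Hence $\min_{z\in\cA}\depth^f_h(S,z)\leq\depth^f_h(S,z^\ast)\leq\depth^f_h(S,y)$, which is the required inequality.

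Nothing here is a real obstacle. The only points needing care are bookkeeping. The empty-minimum case must be handled by the convention that depth is infinite, or at least $|S|+1$, when $y$ lies in every CDI. The claim as stated lets $y$ range over $\bR^d$ while Definition~\ref{def:depth} uses $\cY$; the argument never uses the domain, so it covers both. Neither the monotonicity of Claim~\ref{claim:g-monotone} nor any property of $f$ is needed, only the convexity of $\CDI^f_h(S\setminus S')$ for a single fixed removal set.
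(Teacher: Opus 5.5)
Your proof is correct and rests on the same key fact as the paper's: a removal set $S'$ that excludes $y$ from the convex set $\CDI^f_h(S\setminus S')$ must also exclude some point whose convex hull contains $y$. The paper argues by contradiction for two-point combinations $\lambda y_1+(1-\lambda)y_2$. You argue directly for an arbitrary set $\cA$, which matches the paper's discrete definition of quasi-concavity exactly.
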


\begin{claim}\label{claim:depth-sens}
    For every $f:\cU^* \to \bR^d$ and $h\in \bN$ and $y\in \bR^d$, CDI depth function $\depth^f_h(., y)$ has sensitivity $1$ w.r.t. its first argument. More generally, for every $S,S'\subseteq \bR^d$, we have $|\depth^f_h(S,y)-\depth^f_h(S',y)|\leq |S\setminus S'|+|S'\setminus S|$.
\end{claim}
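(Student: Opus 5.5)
It suffices to prove the general inequality; sensitivity $1$ follows because neighboring datasets satisfy $|S\setminus S'|+|S'\setminus S|\le 1$. The plan is to compare both depths to that of the common core $T=S\cap S'$ (multiset intersection), using the monotonicity of $\CDI^f_h$ from Claim~\ref{claim:g-monotone}. We then prove two one-sided bounds, one for adding elements and one for removing them.

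\emph{Removing elements.} Suppose $T\subseteq S$, and write $D=S\setminus T$. Take a minimizing set $R\subseteq T$ for $\depth^f_h(T,y)$, so $|R|=\depth^f_h(T,y)$ and $y\notin\CDI^f_h(T\setminus R)$. Set $R'=R\cup D\subseteq S$. Then $S\setminus R'=T\setminus R$, so $y\notin\CDI^f_h(S\setminus R')$. This gives
\[
\depth^f_h(S,y)\le |R|+|D| = \depth^f_h(T,y)+|S\setminus T|.
\]

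\emph{Adding elements.} Again with $T\subseteq S$, take a minimizer $R\subseteq S$ for $\depth^f_h(S,y)$, so $y\notin\CDI^f_h(S\setminus R)$. Put $R''=R\cap T\subseteq T$. Then $T\setminus R''\subseteq S\setminus R$, so Claim~\ref{claim:g-monotone} gives $\CDI^f_h(T\setminus R'')\subseteq\CDI^f_h(S\setminus R)$, and hence $y\notin\CDI^f_h(T\setminus R'')$. Since $|R''|\le |R|$,
\[
\depth^f_h(T,y)\le\depth^f_h(S,y).
\]

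\emph{Combining.} Let $T=S\cap S'$. Applying both bounds to the pair $(T,S)$ and to the pair $(T,S')$ gives
\[
\depth^f_h(S,y)\le \depth^f_h(T,y)+|S\setminus S'|\le \depth^f_h(S',y)+|S\setminus S'|,
\]
and symmetrically $\depth^f_h(S',y)\le\depth^f_h(S,y)+|S'\setminus S|$. Together these yield the claimed bound, in fact with the sharper $\max\{|S\setminus S'|,|S'\setminus S|\}$ in place of the sum.

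\emph{Main obstacle: well-definedness.} The argument needs the minimum in Definition~\ref{def:depth} to be attained. Taking $S'=S$ empties the dataset, so $M^\downarrow_h(\emptyset)$ is empty for $h\ge1$ and $\CDI$ is empty. Thus the feasible set is nonempty and the depth is a finite integer at most $|S|$. For $h=0$ the empty set may belong to $M^\downarrow_0$, and $y=f(\emptyset)$ is then never excluded; I would adopt the convention that the minimum of the empty set is $+\infty$, or assume $h\ge1$, and check that the inequality still holds under that convention. Multiset bookkeeping (counting multiplicities in $\setminus$ and $\cap$) must also be handled consistently, but it requires no new idea.
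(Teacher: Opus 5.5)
Your proof is correct and follows essentially the same route as the paper. It uses the two one-sided bounds for a sub-dataset $T\subseteq S$: monotonicity of the CDI gives $\depth^f_h(T,y)\le\depth^f_h(S,y)$, and padding a minimizer for $T$ with $S\setminus T$ gives the reverse inequality. These are then combined through $T=S\cap S'$.

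Your intersection $R\cap T$ and your remark that the minimum is attained for $h\ge 1$ tidy up two points the paper leaves implicit. As you note, the argument actually yields the sharper bound $\max\{|S\setminus S'|,|S'\setminus S|\}$.
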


The two preceding claims show that for any choice of $f$ and $h$, the function $\depth^f_h(S, y)$ has sensitivity at most 1 in its first argument and is quasi-concave in its second argument. Therefore, one can privately optimize the value of $y$ using the framework of private quasi-concave optimization. See \ref{proofs:sec:black-box-eval-qco} for the proof of these claims.

The utility guarantee of private quasi-concave optimization is relative to the best score over its finite candidate domain. We must therefore ensure that this domain contains a point with sufficiently large CDI depth. Such a point need not belong to the original output domain $\cY$. We address this by extending $\cY$ to a finite domain $\tilde{\cY}$ satisfying the property defined below.

\begin{definition}[CH-closed domain extension]\label{def:extension}
    Let $\cY \subseteq \bR^d$ be a nonempty set. We say $\Tilde{\cY} \subseteq \CH(\cY)$ is an extension of $\cY$ if for every $\cI \subseteq 2^{\cY}$ the following holds: if  $\cA_{\cI}=\bigcap_{S_i\in\cI}\CH(S_i) \neq \emptyset \quad$ then $\quad\cA_{\cI} \cap \Tilde{\cY}\neq \emptyset$.
\end{definition}

The following lemma shows that the extended domain contains a point with sufficiently large CDI depth. Unlike $\textsc{RecConcave}$~\cite{beimel2013private}, which takes a lower bound on the optimal quality as input, our algorithm does not require such a bound in advance. The bound follows from Helly's theorem and is used only
in the utility analysis of the evaluator and its applications. See~\ref{proof:thm:good-score} for the proof.
\begin{lemma}\label{thm:good-score}
    Let $\cY \subseteq \bR^d$ be a finite domain, and $\Tilde{\cY}$ be any extension of $\cY$. Then for every $f:\cU^* \to \cY$, $h\in \bN$ and $S\in \cU^*$, there exists a point $y^*\in \Tilde{\cY}$ satisfying $\depth^f_h(S,y^*)\geq \frac{|S|-h}{d+1}$.
\end{lemma}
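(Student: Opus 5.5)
Set $n=|S|$ and $k=\lceil \frac{n-h}{d+1}\rceil-1$, so that $k<\frac{n-h}{d+1}$. If $n\le h$ then the required bound is nonpositive and any $y^*\in\tilde\cY$ works, since depth is always nonnegative; we may therefore assume $n>h$, and also $k\ge 0$. It suffices to find a point $y^*\in\tilde\cY$ such that $y^*\in\CDI^f_h(S\setminus T)$ for every $T\subseteq S$ with $|T|\le k$. Indeed, any $S'$ with $y^*\notin\CDI^f_h(S\setminus S')$ must then have $|S'|\ge k+1\ge\frac{n-h}{d+1}$, which gives the claimed depth bound.

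Consider the family $\cI=\{f(M^{\downarrow}_h(S\setminus T)) : T\subseteq S,\ |T|\le k\}$. Each member is a subset of $\cY$, so $\cI\subseteq 2^{\cY}$, and the convex hull of each member is exactly $\CDI^f_h(S\setminus T)$. The family is finite, since $S$ is finite, although some members may be empty. By Definition~\ref{def:extension}, once we show that $\cA_\cI=\bigcap_{T}\CDI^f_h(S\setminus T)\neq\emptyset$, there is a point $y^*\in\cA_\cI\cap\tilde\cY$, and this is exactly the point we need.

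To prove that the intersection is nonempty, I will use Helly's theorem in $\bR^d$. The sets form a finite family of convex sets, so it is enough to show that any $d+1$ of them share a common point. Take $T_1,\dots,T_{d+1}$ and let $R=S\setminus(T_1\cup\dots\cup T_{d+1})$. Then $|R|\ge n-(d+1)k>h$, so $R\in M^{\downarrow}_h(S\setminus T_i)$ for every $i$, because $R\subseteq S\setminus T_i$ and $|R|\ge h$. Hence $f(R)$ lies in every $\CDI^f_h(S\setminus T_i)$, which proves the Helly condition. The same argument with a single $T$ also shows that each member of the family is nonempty.

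The only delicate point is the parameter bookkeeping. I need to choose $k$ so that $n-(d+1)k\ge h$ while $k+1\ge\frac{n-h}{d+1}$, and I need to handle degenerate cases such as $n<h$ or a family with fewer than $d+1$ sets. Helly's theorem still applies when the family has fewer than $d+1$ members. I expect the main obstacle to be getting this off-by-one right, so that the strict and non-strict inequalities line up with the definition of depth as a minimum. The Helly step itself is routine.
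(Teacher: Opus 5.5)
Your proof is correct and follows the same route as the paper. In both, you show that the common part $R$ of any $d+1$ sub-datasets, each missing few points, still has at least $h$ elements, so $f(R)$ lies in every corresponding $\CDI^f_h$; Helly's theorem then gives a nonempty intersection, and the CH-closed extension property places a point of it in $\tilde\cY$. The only difference is bookkeeping: the paper uses the real radius $\lambda=(|S|-h)/(d+1)$ directly, which allows up to $\lfloor\lambda\rfloor$ deletions and already gives $|S_0|\ge h$, whereas you use the smaller integer $k=\lceil\lambda\rceil-1$ and treat $|S|\le h$ separately; both choices are valid.
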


The following two lemmas construct finite extensions of any nonempty finite domain $\cY$ and bound their sizes. In particular, an extension can be chosen with only an additive constant increase in the iterated-logarithmic domain size, which appears in the utility bounds of our quasi-concave optimizers. The first construction follows directly from the definition, while the second requires a more careful geometric analysis. See~\ref{proof:lemma:ext1} and~\ref{proof:lemma:ext2} for the proofs.
\begin{lemma}\label{lemma:ext1}
    For every finite set $\cY \subseteq \bR^d$ there exists an extension $\Tilde{\cY}\subseteq \bR^d$ of size at most~$2^{2^{|\cY|}}$.
\end{lemma}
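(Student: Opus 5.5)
The construction follows the definition directly. Let $m=|\cY|$. A family $\cI\subseteq 2^{\cY}$ is a collection of subsets of $\cY$, and there are at most $2^{2^{m}}$ such families. For each family $\cI$ whose intersection $\cA_{\cI}=\bigcap_{S_i\in\cI}\CH(S_i)$ is nonempty, we pick one witness point $y_{\cI}\in\cA_{\cI}$ and set
\[
\Tilde{\cY}=\{y_{\cI}: \cI\subseteq 2^{\cY},\ \cA_{\cI}\neq\emptyset\}.
\]
By construction $|\Tilde{\cY}|\leq 2^{2^{m}}$, which is the claimed bound.

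Next we check that $\Tilde{\cY}$ is an extension in the sense of Definition~\ref{def:extension}. The intersection condition holds immediately: for every $\cI$ with $\cA_{\cI}\neq\emptyset$, the witness $y_{\cI}$ lies in $\cA_{\cI}\cap\Tilde{\cY}$. It remains to show $\Tilde{\cY}\subseteq\CH(\cY)$.

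Take a nonempty family $\cI$ and any $S_i\in\cI$. Then $y_{\cI}\in\CH(S_i)\subseteq\CH(\cY)$, since $S_i\subseteq\cY$. Two boundary cases need separate handling.

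\begin{itemize}
\item \emph{Empty family.} If $\cI=\emptyset$, the empty intersection should be read as the ambient space, or more safely as $\CH(\cY)$. This set is nonempty because $\cY\neq\emptyset$, so we choose the witness inside $\CH(\cY)$ itself, for example any point of $\cY$.
\item \emph{Empty set in the family.} If $\cI$ contains $\emptyset\subseteq\cY$, then $\CH(\emptyset)=\emptyset$. The intersection is therefore empty and no witness is required.
\end{itemize}

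If the convention for the empty family is taken to be all of $\bR^d$, the condition is still met, since our chosen witness lies in $\bR^d$ as well as in $\CH(\cY)$. This convention point is the only subtlety; the rest of the argument is pure counting.
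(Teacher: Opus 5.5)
Your proof is correct and follows the paper's argument: you pick one witness for each family $\cI$ with $\cA_{\cI}\neq\emptyset$, note that each witness lies in some $\CH(S_i)\subseteq\CH(\cY)$, and bound the size by the $2^{2^{|\cY|}}$ families. The only cosmetic difference is in the empty family: the paper observes that the singleton families $\{\{y\}\}$ already force $\cY\subseteq\tilde{\cY}$, so $\tilde{\cY}\neq\emptyset$, whereas you add an explicit witness from $\cY$ for it.
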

\begin{lemma}\label{lemma:ext2}
    For every finite set $\cY \subseteq \bR^d$ there exists an extension $\Tilde{\cY}\subseteq \bR^d$ of size at most $|\cY|^{d^2}$.
\end{lemma}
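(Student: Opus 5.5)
The plan is to take $\Tilde{\cY}$ to be the set of ``combinatorially determined'' points. These are the points of $\CH(\cY)$ that arise as the unique common point of at most $d$ affine flats, each flat being the affine hull of at most $d$ points of $\cY$. The proof then has two parts: a counting bound on $|\Tilde{\cY}|$, and a check of the extension property of Definition~\ref{def:extension}.

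\emph{Counting.} A point of $\Tilde{\cY}$ is specified by a list of at most $d$ flats, and each flat is specified by a list of at most $d$ points of $\cY$. Padding with repetitions, we may use lists of exactly $d$ flats and exactly $d$ points each. This is possible because repeating a flat does not change the intersection, and repeating a point does not change the affine hull. So every element of $\Tilde{\cY}$ is the image of some $d\times d$ array of elements of $\cY$, and $|\Tilde{\cY}|\le |\cY|^{d^2}$. By definition $\Tilde{\cY}\subseteq\CH(\cY)$.

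\emph{Extension property.} Fix $\cI\subseteq 2^{\cY}$ with $\cA_\cI=\bigcap_{S_i\in\cI}\CH(S_i)\ne\emptyset$. Only finitely many distinct $S_i$ exist, so $\cA_\cI$ is a nonempty compact convex polytope, and it has an extreme point $v$. For each $i$, let $F_i$ be the unique face of the polytope $\CH(S_i)$ that contains $v$ in its relative interior. I claim that $\bigcap_i \mathrm{aff}(F_i)=\{v\}$. Suppose instead a line through $v$ lies in every $\mathrm{aff}(F_i)$. Since $v\in\mathrm{relint}(F_i)$ for each $i$, a small segment of that line around $v$ lies in every $F_i\subseteq\CH(S_i)$, hence in $\cA_\cI$. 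This contradicts extremality of $v$.

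Each $F_i$ is the convex hull of a subset of $S_i\subseteq\cY$, so $\mathrm{aff}(F_i)$ is spanned by points of $\cY$. Flats equal to all of $\bR^d$ can be discarded without changing the intersection. Each remaining flat has dimension at most $d-1$, so it is the affine hull of at most $d$ affinely independent points of $\cY$. Finally, pick flats greedily, each strictly decreasing the dimension of the running intersection, which starts at $\bR^d$ (dimension $d$) and ends at $\{v\}$ (dimension $0$). At most $d$ flats are needed before the intersection is $\{v\}$. Hence $v\in\Tilde{\cY}$, and $v\in\cA_\cI$, so $\cA_\cI\cap\Tilde{\cY}\ne\emptyset$.

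The main obstacle is the face/flat step. Describing $\CH(S_i)$ by facet hyperplanes does not work, because lower-dimensional hulls need supporting hyperplanes that are not spanned by $\cY$. The remedy is to work with the affine hulls of the minimal faces containing $v$, which are always spanned by points of $\cY$. The step where rigour is needed is the relative-interior argument showing that these flats pin down $v$ exactly.
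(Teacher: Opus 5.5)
Your proof is correct and is essentially the paper's argument. The paper fixes the particular extreme point $x^*\in\argmax_{x\in\cA_\cI}\|x\|^2$, and instead of the minimal face $F_i$ it uses $\mathrm{aff}(T_i)$, where $T_i\subseteq S_i$ is the support of a strictly positive convex representation of $x^*$ (so $x^*\in\mathrm{relint}\,\CH(T_i)$ without any face theory); the perturbation contradiction, the greedy reduction to at most $d$ flats, and the $|\cY|^{d^2}$ count then match yours. The one case you leave implicit is the empty family $\cI=\emptyset$, which is covered by noting $\cY\subseteq\Tilde{\cY}$ via the one-point flats $\{y\}=\mathrm{aff}(\{y\})$.
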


Putting these ingredients together, we obtain our main algorithms for privately optimizing the CDI depth function.

   \begin{algorithm}
\caption{Approximate-DP CDI Depth Optimizer}\label{alg:main}
\begin{algorithmic}[1]
\Require Parameters $\beta,\delta,\eps\in(0,1)$, $h \in \bN$, an input domain $\cU$, a finite output domain $\cY \subseteq \bR^d$, a finite domain extension $\Tilde{\cY}$ of $\cY$, a dataset $S\in \cU^*$, a function $f:\cU^* \to \cY$.

\State Let $\depth_h^f: \cU^* \times \Tilde{\cY} \rightarrow \bR$ be defined as in Definition~\ref{def:depth}.

\State Run the High-dimensional quasi-concave optimization from Theorem~\ref{thm:high-dim-opt-1} with quality function $\depth_h^f$, and parameters $\beta,\eps,\delta$, dataset $S$, input domain $\cU$, output domain $\Tilde{\cY}$, and let the output be $\hat{y}$.

\State \Return $\hat{y}$.
\end{algorithmic}
\end{algorithm}

\begin{theorem}[Main, Approximate-DP]\label{thm:main}
Let $\beta,\eps,\delta\in(0,1/2)$ be parameters,
$C>0$ be a sufficiently large constant,
$\cU$ be an arbitrary domain, and
$\cY\subseteq\bR^d$ be a public finite domain 
with $|\cY|\geq 2$.
Let $\tilde{\cY}$ be a finite extension of $\cY$.
Furthermore, let $f:\cU^*\rightarrow\cY$ be a function,
$h\in\bN$ be a public parameter, and let
$\depth_h^f$ be defined as in Definition~\ref{def:depth}.
Then Algorithm~\ref{alg:main} is $(\eps,\delta)$-DP 
and, for any
\[
    \Delta\geq C d^{1.5}
    \frac{8^{\log^{*}|\tilde{\cY}|}
    \log^{*}|\tilde{\cY}|
    \sqrt{\log(2/\delta)}}{\eps}
    \log\left(
        \frac{d\log^{*}|\tilde{\cY}|}
        {\beta\delta}
    \right),
\]
and every dataset $S\in\cU^*$, its output
$\hat{y}\in\CH(\tilde{\cY})$ satisfies
\begin{align*}
    \depth_h^f(S,\hat{y})
    \geq \max_{y\in\tilde{\cY}}\depth_h^f(S,y)-\Delta
    \geq \frac{|S|-h}{d+1}-\Delta
\end{align*}
with probability at least $1-\beta$.
\end{theorem}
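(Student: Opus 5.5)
This theorem follows by composing earlier results, so the plan is mostly bookkeeping. First, I would check that the quality function $\depth_h^f:\cU^*\times\tilde{\cY}\to\bR$ satisfies the hypotheses of Theorem~\ref{thm:high-dim-opt-1} on the finite domain $\tilde{\cY}$.
\begin{itemize}
\item Claim~\ref{claim:depth-sens} gives sensitivity at most $1$ in the first argument. This holds uniformly over $f$ and $h$, so privacy does not depend on any property of the black-box $f$.
\item Claim~\ref{claim:depth-quasi} gives quasi-concavity in the second argument. It is stated for all $y\in\bR^d$, so its restriction to $\tilde{\cY}$ is discrete quasi-concave in the sense of the definition. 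Indeed, for $\cA\subseteq\tilde{\cY}$ and $y\in\CH(\cA)\cap\tilde{\cY}$, the inequality $\depth_h^f(S,y)\ge\min_{z\in\cA}\depth_h^f(S,z)$ is inherited directly.
\item Theorem~\ref{thm:high-dim-opt-1} requires $|\tilde{\cY}|\geq 2$. If $|\tilde{\cY}|=1$, the claim is trivial: the unique point is the maximizer and the output equals it. Otherwise, since $\tilde{\cY}\subseteq\CH(\cY)$ could in principle be smaller than $\cY$, I would note that one may always add points of $\cY$ to $\tilde{\cY}$ while keeping it an extension.
\end{itemize}

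Second, privacy. Algorithm~\ref{alg:main} consists solely of running the solver of Theorem~\ref{thm:high-dim-opt-1} with parameters $(\eps,\delta,\beta)$ on the quality function $\depth_h^f$, and $h$, $\cY$, $\tilde{\cY}$ are public. Hence $(\eps,\delta)$-DP follows directly from that theorem. The theorem's $\Delta$ uses $\sqrt{\log(2/\delta)}$ rather than $\sqrt{\log(1/\delta)}$. This only enlarges the admissible threshold, so any $\Delta$ meeting our bound also meets the one required in Theorem~\ref{thm:high-dim-opt-1}, with $|\cY|$ replaced by $|\tilde{\cY}|$.

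Third, utility. Theorem~\ref{thm:high-dim-opt-1} gives $\hat y\in\CH(\tilde{\cY})$ with $\depth_h^f(S,\hat y)\ge\max_{y\in\tilde{\cY}}\depth_h^f(S,y)-\Delta$ with probability at least $1-\beta$. The second inequality is deterministic and follows from Lemma~\ref{thm:good-score}. That lemma applies to any extension $\tilde{\cY}$ and yields $y^*\in\tilde{\cY}$ with $\depth_h^f(S,y^*)\ge\frac{|S|-h}{d+1}$, so the maximum over $\tilde{\cY}$ is at least this value.

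The only delicate point is interpretive: $\hat y$ may lie in $\CH(\tilde{\cY})\setminus\tilde{\cY}$, so $\depth_h^f(S,\hat y)$ must be meaningful there. Definition~\ref{def:depth} makes sense for every $y\in\bR^d$, since it only tests membership in convex sets. It agrees with the canonical quasi-concave extension that Theorem~\ref{thm:high-dim-opt-1} implicitly uses (Remark~\ref{rem:quasi-func-ext}), or at least dominates it, because depth is itself quasi-concave on all of $\bR^d$. I expect this consistency check to be the main thing to verify: the solver's guarantee is stated for the extended quality, and it must transfer to the actual CDI depth. I would argue that the canonical extension $\sup\{\min_{z\in\cA}\cQ(S,z): \hat y\in\CH(\cA),\cA\subseteq\tilde{\cY}\}$ is at most $\depth_h^f(S,\hat y)$ by Claim~\ref{claim:depth-quasi} applied on $\bR^d$.
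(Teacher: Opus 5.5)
Your proposal is correct and matches the paper's proof, which likewise just combines Claims~\ref{claim:depth-quasi} and~\ref{claim:depth-sens} with Theorem~\ref{thm:high-dim-opt-1} and Lemma~\ref{thm:good-score}. Your extra check, that the solver's guarantee on the canonical extension transfers to the native CDI depth by quasi-concavity on $\bR^d$, is a valid detail the paper leaves implicit; one small correction is that $\tilde{\cY}$ can never be smaller than $\cY$, since applying the extension property to the singleton families $\cI=\{\{y\}\}$ forces $\cY\subseteq\tilde{\cY}$, so $|\tilde{\cY}|\geq 2$ holds automatically and no padding is needed.
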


\begin{proof}
    Note that the function $\depth_h^f$ is quasi-concave (Claim~\ref{claim:depth-quasi}) and has sensitivity 1 (Claim~\ref{claim:depth-sens}).
    Also, using Theorem~\ref{thm:good-score}, we know that for every $S\in\cU^*$, and any extension $\Tilde{\cY}$ of $\cY$, it holds that $\max_{y\in\Tilde{\cY}} \depth_h^f\left(S,y\right) \geq \frac{|S|-h}{d+1}$. Putting together, and applying Theorem~\ref{thm:high-dim-opt-1} concludes the result.
\end{proof}

\begin{algorithm}
\caption{Approximate-DP Black-Box Evaluator}
\label{alg:noisy-cutoff-eval}
\begin{algorithmic}[1]
\Require Parameters $\beta,\eps,\delta\in(0,1)$,
a finite domain $\cY\subseteq\bR^d$ with
$|\cY|\geq2$, an input dataset $S\in\cU^*$,
and oracle access to $f:\cU^*\to\cY$.

\State Choose an extension $\tilde{\cY}$ using
Lemma~\ref{lemma:ext1}.

\State Set $\beta_0=\min\{\beta,1/4\}$,
$\eps_0=\eps/2$, and $\delta_0=\delta/2$.

\State Choose $\Delta$ satisfying the error bound in
Theorem~\ref{thm:main} for parameters
$\beta_0,\eps_0,\delta_0$ and domain $\tilde{\cY}$.

\State Set $R=\frac{1}{\eps_0}
      \log\left(1+\frac{e^{\eps_0}-1}{2\delta_0}\right)$,
    $K=\left\lceil(d+1)(\Delta+1)\right\rceil$.

\State Sample
$Z\sim\operatorname{TLap}(1,\eps_0,\delta_0)$
and compute $\tilde n=|S|+Z$. \Comment{Truncated Laplace (Definition~\ref{def:truncated-laplace}).}

\State Set $h=\lfloor\tilde n-R-K\rfloor$.

\If{$h\leq0$}
    \State \Return $f(\emptyset)$.
\EndIf

\State Run CDI Depth Optimizer (Algorithm~\ref{alg:main}) with cutoff $h$,
parameters $\beta_0,\eps_0,\delta_0$,
dataset $S$, function $f$, and extension $\tilde{\cY}$.

\State \Return its output.
\end{algorithmic}
\end{algorithm}

We are now ready to introduce our approximate-DP black-box evaluator in Algorithm~\ref{alg:noisy-cutoff-eval}, which satisfies the privacy and utility guarantees of Theorem~\ref{thm:main-eval-approx}. A private estimate of the dataset size determines whether to optimize the CDI depth or return $f(\emptyset)$. The cutoff ensures that all black-box calls use permitted subdatasets and that depth optimization yields the required utility with high probability. See Appendix~\ref{proof:thm:main-eval-approx} for the formal proof.


    \begin{algorithm}
\caption{Pure-DP CDI Depth Optimizer}\label{alg:main-pure}
\begin{algorithmic}[1]
\Require Parameters $\beta\in(0,1)$, $\eps>0$, $h \in \bN$, an input domain $\cU$, a finite output domain $\cY \subseteq \bR^d$, a finite domain extension $\Tilde{\cY}$ of $\cY$, a dataset $S\in \cU^*$, a function $f:\cU^* \to \cY$.

\State Let $\depth_h^f: \cU^* \times \Tilde{\cY} \rightarrow \bR$ be defined as in Definition~\ref{def:depth}.

\State Run the Exponential Mechanism~\ref{thm:exp-mech} with quality function $\depth_h^f$, and parameters $\beta,\eps$, dataset $S$, input domain $\cU$, output domain $\Tilde{\cY}$, and let the output be $\hat{y}$.

\State \Return $\hat{y}$.
\end{algorithmic}
\end{algorithm}

\begin{theorem}[Main, Pure-DP]\label{thm:main-pure}
Let $\beta\in(0,1),\eps>0$ be parameters,
$\cU$ be an arbitrary domain, and
$\cY\subseteq\bR^d$ be a public finite domain.
Let $\tilde{\cY}$ be a finite extension of $\cY$.
Furthermore, let $f:\cU^*\rightarrow\cY$ be a function,
$h\in\bN$ be a public parameter, and let
$\depth_h^f:\cU^*\times\tilde{\cY}\rightarrow\bR$
be defined as in Definition~\ref{def:depth}.
Then Algorithm~\ref{alg:main-pure} is $(\eps,0)$-DP.
Furthermore, for any
\[
    \Delta\geq
    \frac{2\log\left(|\tilde{\cY}|/\beta\right)}{\eps},
\]
and every dataset $S\in\cU^*$, its output
$\hat{y}\in\tilde{\cY}$ satisfies
\begin{align*}
    \depth_h^f(S,\hat{y})
    \geq \max_{y\in\tilde{\cY}}\depth_h^f(S,y)-\Delta
    \geq \frac{|S|-h}{d+1}-\Delta
\end{align*}
with probability at least $1-\beta$.
\end{theorem}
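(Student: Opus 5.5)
The plan mirrors the proof of Theorem~\ref{thm:main}, with the exponential mechanism in place of the high-dimensional QCO solver. I take the standard guarantees of the exponential mechanism (Theorem~\ref{thm:exp-mech}) as given. With a quality function of sensitivity at most $1$ over a finite candidate set $\tilde{\cY}$, sampling $y$ with probability proportional to $\exp(\eps\,\depth_h^f(S,y)/2)$ is $(\eps,0)$-DP. Moreover, with probability at least $1-\beta$, its output has quality at least $\max_{y\in\tilde{\cY}}\depth_h^f(S,y)-\frac{2\log(|\tilde{\cY}|/\beta)}{\eps}$.

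For privacy, the only thing to verify is that the quality function used in Algorithm~\ref{alg:main-pure} has sensitivity at most $1$ in its first argument, uniformly over $y\in\tilde{\cY}$. This is exactly Claim~\ref{claim:depth-sens}, which holds for every $f$ and $h$. The argument needs no assumption on the sensitivity of $f$, and $h$, $\tilde{\cY}$ and $\cY$ are public. Since $\tilde{\cY}$ is finite, the exponential mechanism is well defined, and pure $\eps$-DP follows. Quasi-concavity (Claim~\ref{claim:depth-quasi}) is not needed here: the exponential mechanism works for any bounded-sensitivity score over a finite set. Its output also lies in $\tilde{\cY}\subseteq\CH(\cY)$ by construction, which explains why the statement asserts $\hat y\in\tilde{\cY}$ rather than $\hat y\in\CH(\tilde{\cY})$.

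For utility, fix $S$ and take any $\Delta\geq 2\log(|\tilde{\cY}|/\beta)/\eps$. The exponential mechanism's accuracy bound gives $\depth_h^f(S,\hat y)\geq \max_{y\in\tilde{\cY}}\depth_h^f(S,y)-\Delta$ with probability at least $1-\beta$. Lemma~\ref{thm:good-score}, applied to the extension $\tilde{\cY}$, then gives $\max_{y\in\tilde{\cY}}\depth_h^f(S,y)\geq \frac{|S|-h}{d+1}$, which is the second inequality.

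I do not expect a genuine obstacle. The only care needed is matching the constant in the accuracy bound to the paper's stated form of Theorem~\ref{thm:exp-mech}, namely a $2/\eps$ factor and $\log(|\tilde{\cY}|/\beta)$ rather than $\log|\tilde{\cY}|+\log(1/\beta)$ with a different constant. If that theorem is stated with an $\eps/(2\Delta_q)$ exponent, the bound is exactly $\frac{2}{\eps}\log(|\tilde{\cY}|/\beta)$ as required.
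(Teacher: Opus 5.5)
Your proposal is correct and matches the paper's proof: Claim~\ref{claim:depth-sens} and the Exponential Mechanism (Theorem~\ref{thm:exp-mech}) give pure $\eps$-DP and the first inequality, and Lemma~\ref{thm:good-score} gives the second. The constant you were concerned about matches exactly, since Theorem~\ref{thm:exp-mech} is stated with error $2\log(|\tilde{\cY}|/\beta)/\eps$ for a sensitivity-$1$ score.
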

\begin{proof}
    Note that the function $\depth_h^f$ has sensitivity 1 (Claim~\ref{claim:depth-sens}).
Also, using Theorem~\ref{thm:good-score}, we know that for every $S\in\cU^*$, and any extension $\Tilde{\cY}$ of $\cY$, it holds that $\max_{y\in\Tilde{\cY}} \depth_h^f\left(S,y\right) \geq \frac{|S|-h}{d+1}$. Putting together and applying the Exponential Mechanism (Theorem~\ref{thm:exp-mech}) concludes the result.
\end{proof}

\begin{algorithm}
\caption{Pure-DP Black-Box Evaluator}
\label{alg:noisy-cutoff-eval-pure}
\begin{algorithmic}[1]
\Require Parameters $\beta\in(0,1)$, $\eps>0$,
a finite domain $\cY\subseteq\bR^d$ with
$|\cY|\geq2$, an input dataset $S\in\cU^*$,
and oracle access to $f:\cU^*\to\cY$.

\State Choose an extension $\tilde{\cY}$ using
Lemma~\ref{lemma:ext2}, and let $M=|\tilde{\cY}|$.

\State Set $\eta=\beta/2$, $\beta_0=\beta/2$,
$\eps_0=\eps/2$, and
$\delta_0=\min\left\{
    1/4,\,
    \eta(e^{\eps_0}-1)/M
\right\}$.

\State Set
$\Delta=\frac{2\log(M/\beta_0)}{\eps_0}$,
$K=\left\lceil(d+1)(\Delta+1)\right\rceil$, and
$R=\frac{1}{\eps_0}
\log\left(1+\frac{e^{\eps_0}-1}{2\delta_0}\right)$.

\State Sample $B\sim\operatorname{Bernoulli}(\eta)$.

\If{$B=1$}
    \State \Return a uniformly random point of
    $\tilde{\cY}$.
\EndIf

\State Sample
$Z\sim\operatorname{TLap}(1,\eps_0,\delta_0)$
and compute $\tilde n=|S|+Z$.

\State Set $h=\lfloor\tilde n-R-K\rfloor$.

\If{$h\leq0$}
    \State \Return $f(\emptyset)$.
\EndIf

\State Run the Pure-DP CDI Depth Optimizer
(Algorithm~\ref{alg:main-pure}) with cutoff $h$,
parameters $\beta_0,\eps_0$,
dataset $S$, function $f$, and extension
$\tilde{\cY}$.

\State \Return its output.
\end{algorithmic}
\end{algorithm}

Algorithm~\ref{alg:noisy-cutoff-eval-pure} presents our pure-DP black-box evaluator, which satisfies the privacy and utility guarantees of Theorem~\ref{thm:main-eval-pure}. As before, a private estimate of the dataset size determines whether to optimize the CDI depth or return $f(\emptyset)$. With a small, carefully chosen probability, the algorithm instead returns a uniformly random point from the extended domain. This additional randomization ensures pure differential privacy while preserving the utility guarantee. See Appendix~\ref{proof:thm:main-eval-pure} for the formal proof.

\section{Equivalency of Private Black-box Evaluation and QCO}
\label{sec:black-box-qco-equi}
In this section, we show that, up to an $O(d)$ factor in the error and deletion parameters and an additive constant in the iterated-logarithmic domain size, private QCO is complete for private black-box evaluation. A private QCO solver suffices to privately evaluate every function $f:\cU^*\rightarrow\cY$ under our convex-hull guarantee. Conversely, an evaluator that works for every such function must also solve private QCO. In this sense, the hardest instances of private black-box evaluation already arise from optimizers of quasi-concave quality functions.  We first introduce notation for the sample complexity of private QCO and private black-box evaluation.

\begin{definition}[Sample complexity of private QCO]
\label{def:qco-samples}
Let $\alpha,\beta,\delta,\eps\in(0,1)$, and let $d,L\geq1$ be integers.
Define
$n_{\mathrm{QCO}}(\alpha,\beta,\eps,\delta,d,L)$
as the smallest integer $N\geq1$ such that~\footnote{
If no such $N$ exists, set
$n_{\mathrm{QCO}}(\alpha,\beta,\eps,\delta,d,L)
=\infty$.},
for every input domain $\cU$ and every nonempty finite
domain $\cY\subseteq\bR^d$ satisfying
$\log^*|\cY|\leq L$, there exists a randomized oracle
algorithm $\cA$ with the following properties.

For every discrete quasi-concave quality function
$\cQ:\cU^*\times\cY\to\bR$ with sensitivity at most $1$:
\begin{enumerate}
    \item The algorithm $\cA(\cQ,\cdot)$ is
    $(\eps,\delta)$-DP on $\cU^*$.

    \item On every dataset $S\in\cU^*$, the algorithm
    outputs a point in $\CH(\cY)$ and accesses $\cQ$
    only through queries $\cQ(T,y)$ with
    $T\subseteq S$ and $y\in\cY$, on every run.

    \item For every dataset $S\in\cU^*$ satisfying
    $|S|\geq N$, its output $\hat y$ satisfies
    \[
        \cQ(S,\hat y)
        \geq
        \max_{y\in\cY}\cQ(S,y)-\alpha N
    \]
    with probability at least $1-\beta$.
\end{enumerate}
\end{definition}

\begin{definition}
[Sample complexity of private black-box evaluation]
\label{def:black-box-samples}
Let $\alpha,\beta,\delta,\eps\in(0,1)$, and let $d,L\geq1$ be integers.
Define
$n_{\mathrm{black\text{-}box}}
(\alpha,\beta,\eps,\delta,d,L)$
as the smallest integer $N\geq1$ such that~\footnote{
If no such $N$ exists, set
$n_{\mathrm{black\text{-}box}}
(\alpha,\beta,\eps,\delta,d,L)=\infty$.},
for every input domain $\cU$ and every nonempty finite
domain $\cY\subseteq\bR^d$ satisfying
$\log^*|\cY|\leq L$, there exists a randomized oracle
algorithm $\cA$ with the following properties.

For every fixed function $f:\cU^*\to\cY$:
\begin{enumerate}
    \item The algorithm $\cA(f,\cdot)$ is
    $(\eps,\delta)$-DP on $\cU^*$.

    \item On every dataset $S\in\cU^*$, the algorithm
    accesses $f$ only through queries $f(T)$ with
    $T\in N^\downarrow_{\lfloor\alpha N\rfloor}(S)$, on every run.

    \item For every dataset $S\in\cU^*$ satisfying
    $|S|\geq N$, its output $\hat y$ satisfies
    \[
        \hat y\in
        \CH\left(f(N^\downarrow_{\lfloor\alpha N\rfloor}(S))\right)
    \]
    with probability at least $1-\beta$.
\end{enumerate}
\end{definition}

The following theorem formalizes the equivalence discussed above.
See~\ref{proof:thm:black-box-qco-equi} for the proof.

\begin{theorem}
[From QCO to black-box evaluation and back]
\label{thm:black-box-qco-equi}
There exists a universal constant $C>0$ such that
the following holds.
Let $\beta,\eps\in(0,1)$,
$\delta,\alpha\in(0,1/2)$, and let $d,L\geq1$ be integers.
Then,
\begin{align*}
n_{\mathrm{QCO}}
    (2\alpha,\beta,\eps,\delta,d,L)
    &\leq
    n_{\mathrm{black\text{-}box}}
    (\alpha,\beta,\eps,\delta,d,L)\\
&\leq
\max\left\{
n_{\mathrm{QCO}}
    \left(
        \frac{\alpha}{2(d+1)},
        \beta,\frac{\eps}{2},\frac{\delta}{2},
        d,L+2
    \right),
\left\lceil
    \frac{C}{\alpha}
    \left(
        d+\frac{\log(1/\delta)}{\eps}
    \right)
\right\rceil
\right\}.
\end{align*}
\end{theorem}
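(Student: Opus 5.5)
The proof splits into the two inequalities of Theorem~\ref{thm:black-box-qco-equi}.

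\textbf{Lower bound.} We show $n_{\mathrm{QCO}}(2\alpha,\ldots)\le n_{\mathrm{black\text{-}box}}(\alpha,\ldots)$ by turning a black-box evaluator into a QCO solver. Let $N=n_{\mathrm{black\text{-}box}}(\alpha,\beta,\eps,\delta,d,L)$, fix $\cY$, and fix a sensitivity-$1$ quasi-concave $\cQ$. Define the black-box function $f_{\cQ}(T)=\arg\max_{y\in\cY}\cQ(T,y)$, with a fixed deterministic tie-breaking rule. This is a non-private optimizer. We run the evaluator $\cA$ on $(f_{\cQ},S)$.
\begin{itemize}
\item \emph{Privacy.} It is inherited directly, since $f_{\cQ}$ is a fixed function of the dataset.
\item \emph{Access.} Each evaluation of $f_{\cQ}(T)$ only requires queries $\cQ(T,y)$ with $T\subseteq S$, so the oracle restriction holds.
\item \emph{Utility.} Suppose $|S|\ge N$. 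With probability $1-\beta$ the output satisfies $\hat y\in\CH(\{f_{\cQ}(T):T\in N^\downarrow_{\lfloor\alpha N\rfloor}(S)\})$. For each such $T$, sensitivity $1$ gives $|\cQ(T,y)-\cQ(S,y)|\le\alpha N$ for every $y$. Hence
\[
\cQ(S,f_{\cQ}(T))\ge\cQ(T,f_{\cQ}(T))-\alpha N\ge\cQ(T,y^*)-\alpha N\ge\cQ(S,y^*)-2\alpha N .
\]
Quasi-concavity then transfers this bound to points of the convex hull: $\cQ(S,\hat y)\ge\min_T\cQ(S,f_{\cQ}(T))$. If $\hat y\notin\cY$, this uses the canonical extension of Remark~\ref{rem:quasi-func-ext}. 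This yields error $2\alpha N$.
\end{itemize}

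\textbf{Upper bound.} We mimic Algorithm~\ref{alg:noisy-cutoff-eval}, replacing Theorem~\ref{thm:main} with an arbitrary QCO solver.
\begin{itemize}
\item \emph{Extended domain.} Take the extension $\tilde\cY$ of Lemma~\ref{lemma:ext1}. It has size at most $2^{2^{|\cY|}}$, so $\log^*|\tilde\cY|\le L+2$.
\item \emph{Solver.} Let $N_0=n_{\mathrm{QCO}}(\alpha/(2(d+1)),\beta,\eps/2,\delta/2,d,L+2)$. We use the solver witnessing $N_0$ on quality $\depth^f_h$ over $\tilde\cY$. This quality is quasi-concave and has sensitivity $1$ by Claims~\ref{claim:depth-quasi} and~\ref{claim:depth-sens}.
\item \emph{Cutoff.} Spend the other half of the budget on a truncated-Laplace estimate $\tilde n$ of $|S|$. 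Set $h=\lfloor \tilde n-R-K\rfloor$ with $R=O(\log(1/\delta)/\eps)$, and return $f(\emptyset)$ if $h\le0$.
\item \emph{Access.} Queries of $\depth^f_h(T,y)$ only touch $f$ on subsets of size $\ge h$. Since $\tilde n-R\le|S|$ always holds under truncation, every queried set lies in $N^\downarrow_{|S|-h}(S)$. We need $|S|-h\le\alpha N$, so we choose $K\approx\alpha N/2$, noting that $|S|-h\le 2R+K+1$. That $2R+1\le\alpha N/2$ is exactly what the second term $\frac{C}{\alpha}(d+\log(1/\delta)/\eps)$ in the max is meant to buy.
\end{itemize}

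\textbf{Utility and privacy of the upper bound.}
\begin{itemize}
\item \emph{Depth guarantee.} Lemma~\ref{thm:good-score} gives $\max_{\tilde\cY}\depth^f_h\ge (|S|-h)/(d+1)\ge K/(d+1)$. The solver loses at most $\tfrac{\alpha}{2(d+1)}N_0\le\tfrac{\alpha N}{2(d+1)}$. So with $K$ slightly above $\alpha N/2$, $\hat y$ has positive depth, i.e.\ $\hat y\in\CDI^f_h(S)$. This is contained in $\CH(f(N^\downarrow_{\lfloor\alpha N\rfloor}(S)))$ by the access bound.
\item \emph{Size requirement.} The solver requires $|S|\ge N_0$, which holds when $|S|\ge N$.
\item \emph{Output outside $\tilde\cY$.} The solver outputs in $\CH(\tilde\cY)$, so we need the depth function on hull points. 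We define it via Definition~\ref{def:depth} directly, which makes sense for all $y\in\bR^d$, and verify that it agrees with the canonical quasi-concave extension.
\item \emph{Privacy.} Privacy follows by basic composition. However, the solver is called with the data-dependent $h$, so we argue via conditioning on $Z$. For fixed $h$, the mechanism is $(\eps/2,\delta/2)$-DP, and $h$ itself is released privately. This matches the $\eps/2,\delta/2$ split.
\end{itemize}

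\textbf{Main obstacle.} The hardest step is the parameter bookkeeping in the upper bound. The cutoff must simultaneously satisfy three constraints:
\begin{itemize}
\item it keeps all queries inside $N^\downarrow_{\lfloor\alpha N\rfloor}(S)$ deterministically, even though $N$ is only the max of the two terms;
\item it leaves enough depth margin $K/(d+1)$ to beat the additive error $\alpha N_0/(2(d+1))$;
\item it handles the case $h\le0$, which must not occur once $|S|\ge N$.
\end{itemize}
The first thing to try is fixing $K=\lceil\alpha N/2\rceil$ and checking that $C$ absorbs $R$ and the $+1$ rounding terms.
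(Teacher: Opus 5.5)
Your proposal is correct and follows the paper's proof essentially step for step. The lower bound goes through the non-private optimizer $f_{\cQ}$ with the same sensitivity-then-quasi-concavity chain giving error $2\alpha N$. The upper bound reruns the noisy-cutoff evaluator (Algorithm~\ref{alg:noisy-cutoff-eval}) with the solver witnessing $n_{\mathrm{QCO}}$ on the extension of Lemma~\ref{lemma:ext1}. Your choice $K>\alpha N/2$, combined with integrality of depth, is only a different bookkeeping of the paper's $K=\lceil (d+1)(\Delta+1)\rceil$.

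The one step you omit is in the lower bound. A black-box evaluator may output a point outside $\CH(\cY)$ on its failure event, so you must post-process any such output to a fixed public point of $\cY$, as the paper does. Otherwise the requirement of Definition~\ref{def:qco-samples} that the output lie in $\CH(\cY)$ on every run is not met.
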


\section{Instantiations}\label{sec:applications}
In this section, we instantiate our main black-box evaluator with non-private solvers to obtain private algorithms for center points and user-level quasi-concave and convex optimization.
These applications illustrate how our guarantees translate into utility bounds using only black-box access to the underlying solvers.

\subsection{Recovering 1-dimensional black-box of \cite{linder2025privately}}

For $d=1$, our convex-hull guarantee recovers the one-dimensional interval guarantee of the Sens-o-Matic wrapper of \cite{linder2025privately}, with our stated locality and failure parameters. Consequently, the same application reductions apply to our evaluator, including private mean estimation with error that adapts to the scale of the data and user-level convex optimization in one dimension. In these applications, utility follows by bounding how much the non-private statistic changes when a small number of records or users are removed.

\begin{theorem}[Sens-o-Matic Privacy Wrapper, \citet{linder2025privately}]
    Let $\eps,\delta,\beta\in (0,1)$, $\lambda\in \bN$, and $\cY\subseteq \bR$ be a finite domain. Then:
    \begin{itemize}
        \item There exists a \textbf{1-dimensional} $(\eps,\delta)$-DP $(\lambda,\beta)$-black-box evaluator with
    $\lambda = 2^{O\left(\log^* |\cY|\right)}\frac{\log(1/\beta\delta)}{\eps}$.

    \item There exists a \textbf{1-dimensional} $(\eps,0)$-DP $(\lambda,\beta)$-black-box evaluator with
    $\lambda 
    = O\left(\frac{\log(|\cY|/\beta)}{\eps}\right)$.
    \end{itemize}
\end{theorem}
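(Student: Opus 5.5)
We obtain the statement as the $d=1$ specialization of Theorems~\ref{thm:main-eval-approx} and~\ref{thm:main-eval-pure}. The first step is to check that for $d=1$ the notion of a $(\lambda,\beta)$-black-box evaluator in Definition~\ref{def:black-box} coincides with the Sens-o-Matic guarantee. In $\bR$ the convex hull of a finite set is the interval between its minimum and maximum. Hence $\hat y\in\CH(f(N^\downarrow_\lambda(S)))$ is exactly the requirement that $\hat y$ lie between $\min$ and $\max$ of $f$ over the $\lambda$-down-neighborhood. The oracle-access condition is the same in both formulations. So any evaluator our theorems provide for $d=1$ is a Sens-o-Matic evaluator with the same locality $\lambda$ and failure probability $\beta$.

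For the pure-DP item we plug $d=1$ into Theorem~\ref{thm:main-eval-pure}. This gives $\lambda=O(\log(|\cY|/\beta)/\eps)$ directly. One can also see it from Algorithm~\ref{alg:noisy-cutoff-eval-pure}: for $d=1$ the extension of Lemma~\ref{lemma:ext2} may simply be taken to be $\cY$. Indeed, an intersection of intervals with endpoints in $\cY$ is either empty or an interval whose left endpoint lies in $\cY$, so $\cY$ is already an extension. The constant in $\lambda$ is then governed by $\Delta=2\log(M/\beta_0)/\eps_0$, $K=\lceil 2(\Delta+1)\rceil$, and $R=O(\log(M/\beta)/\eps)$ with $M=|\cY|$. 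All of these are $O(\log(|\cY|/\beta)/\eps)$.

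For the approximate-DP item we plug $d=1$ into Theorem~\ref{thm:main-eval-approx}. The factors $d^{2.5}$ disappear, leaving $\tilde O(8^{\log^*|\cY|}\log(1/\beta\delta)/\eps)$. The one point needing care is that the extension used in Algorithm~\ref{alg:noisy-cutoff-eval} changes the iterated logarithm of the domain size only by an additive constant. Here we again take $\tilde{\cY}=\cY$ for $d=1$, so $\log^*|\tilde\cY|=\log^*|\cY|$. It then remains to absorb the polylogarithmic factors: $\log^*|\cY|$, $\sqrt{\log(1/\delta)}$, and $\log(\log^*|\cY|/\beta\delta)$, coming from Theorem~\ref{thm:main}, together with $R=O(\log(1/\delta)/\eps)$. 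The term $\log^*|\cY|\cdot 8^{\log^*|\cY|}$ is $2^{O(\log^*|\cY|)}$. The product $\sqrt{\log(1/\delta)}\log(\log^*|\cY|/(\beta\delta))$ is bounded by $O(\log(1/\beta\delta))\cdot O(\log\log^*|\cY|)$ when $\delta<1/2$, and the extra $\log\log^*|\cY|$ factor is also absorbed into $2^{O(\log^*|\cY|)}$.

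The main obstacle is this last bookkeeping step: verifying that the mixed term $\sqrt{\log(1/\delta)}\,\log(1/(\beta\delta))$ does not exceed $O(\log(1/\beta\delta))$ up to factors swallowed by $2^{O(\log^*|\cY|)}$. This is not automatic, because $\sqrt{\log(1/\delta)}\log(1/\delta)$ is superlinear in $\log(1/\delta)$. If the direct bound fails, I would instead appeal to the one-dimensional solver of Theorem~\ref{thm:1-dim-opt-new}. With $d=1$ no advanced composition across coordinates is needed, so basic privacy accounting applies and the $\sqrt{\log(1/\delta)}$ factor should not appear at all. Running Algorithm~\ref{alg:noisy-cutoff-eval} with that solver then yields the stated $2^{O(\log^*|\cY|)}\log(1/\beta\delta)/\eps$.
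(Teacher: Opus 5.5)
You follow the route the paper has in mind. The paper gives no separate proof of this cited result, only the remark that the $d=1$ case of Theorems~\ref{thm:main-eval-approx} and~\ref{thm:main-eval-pure} recovers it. Your pure-DP argument, including the observation that $\cY$ is already an extension when $d=1$, is complete.

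For the approximate-DP item, there is a genuine flaw in the direct plug-in. Your sentence asserting that $\sqrt{\log(1/\delta)}\,\log(\log^*|\cY|/(\beta\delta))$ is $O(\log(1/\beta\delta))\cdot O(\log\log^*|\cY|)$ is false, and the direct specialization really does lose a factor. Algorithm~\ref{alg:high-dim-quasi} sets $\eps'=\eps/(2\sqrt{2d\log(2/\delta)})$ for advanced composition even when $d=1$. So the $\tilde O$ in Theorem~\ref{thm:main-eval-approx} hides a $\sqrt{\log(1/\delta)}$ factor, which cannot be absorbed into $2^{O(\log^*|\cY|)}$ because $\delta$ is independent of $|\cY|$.

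Your fallback is therefore not optional: it is the proof. It is also what makes the paper's ``recovers'' remark literally true. It goes through as you describe:
\begin{itemize}
\item Run Algorithm~\ref{alg:noisy-cutoff-eval} with the depth optimizer replaced by the solver of Theorem~\ref{thm:1-dim-opt-new}, applied to $\depth_h^f(S,\cdot)$ on the ordered domain $\tilde{\cY}=\cY$.
\item This quality function is quasi-concave with sensitivity $1$ by Claims~\ref{claim:depth-quasi} and~\ref{claim:depth-sens}, and it has a point of depth at least $(|S|-h)/2$ by Lemma~\ref{thm:good-score}.
\item With $L=\log^*|\cY|$, the solver is $(\eps_0,\delta_0)$-DP, outputs a point of $\cY$, and has error $\Delta=O\bigl(8^{L}L\log(L/(\beta\delta))/\eps\bigr)$. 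Absorbing $L$ and $\log L$ into $2^{O(L)}$, this is $2^{O(L)}\log(1/\beta\delta)/\eps$.
\item The proof of Theorem~\ref{thm:main-eval-approx} uses only these properties of the depth optimizer, so it applies verbatim.
\end{itemize}
This gives $\lambda=\lceil K+2R+1\rceil=O(\Delta+R)$ with $R=O(\log(1/\delta)/\eps)$, which is the stated bound.
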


\subsection{Private center point}
A natural geometric application of our black-box evaluator is private center point computation.

Center points generalize the median to higher dimensions: every closed halfspace containing a center point of $S$ contains at least $|S|/(d+1)$ points (See Theorem~\ref{thm:center-point-org}). Finding such points privately is a basic problem in private geometry, with applications to learning halfspaces. The next, is the formal definition of Tukey depth.

\begin{definition}[Tukey depth]\label{def:tukey-depth}
For a dataset $S\in(\bR^d)^*$ and a point $y\in\bR^d$,
the Tukey depth of $y$ with respect to $S$ is defined by
\[
    \TD(S,y)
    =
    \min\left\{
    |S\cap H|:
    H\subseteq\bR^d \text{ is a closed halfspace containing } y
    \right\},
\]
where points are counted with multiplicity.
\end{definition}

\citet{pmlr-v99-beimel19a} give an $(\eps,\delta)$-DP algorithm achieving Tukey depth at least $|S|/(2(d+1))$ with sample complexity $\tilde{O}\left(d^{2.5}2^{O(\log^*|\cY|)}/\eps\right)$. Our framework recovers this sample-complexity bound and yields an explicit additive approximation to the maximum Tukey depth.  

A useful observation is that Tukey depth coincides with a special case of CDI depth for \emph{internal} functions, whose outputs lie in the convex hull of their inputs. The following lemma formalizes this connection and is particularly useful for analyzing black-box evaluation of such functions.

A function $f: (\bR^d)^* \rightarrow \bR^d$ is called \emph{internal} if $f(S)\in \CH(S)$ for any $S\in (\bR^d)^*$. These functions are also often referred to as location statistics (e.g. mean, median).

\begin{lemma}\label{lemma:depth-td}
    Let $\cU\subseteq \bR^d$ be a domain, and $S\in \cU^*$ be a dataset. For any internal function $f:\cU\rightarrow \bR^d$, any $h\in \bN$, and $y\in \bR^d$, we have $\depth_h^f(S,y)\leq \TD(S,y)=\depth_1^f(S,y)$.
\end{lemma}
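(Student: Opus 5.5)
We prove the inequality and the equality separately; both reduce to comparing, for a removed set $S'\subseteq S$, the hull $\CDI^f_h(S\setminus S')$ with $\CH(S\setminus S')$, and using halfspace separation.

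\textbf{Step 1: $\depth_h^f(S,y)\leq \TD(S,y)$.} Fix a closed halfspace $H\ni y$ attaining $\TD(S,y)$, and let $S'=S\cap H$, so $|S'|=\TD(S,y)$. Every $T\subseteq S\setminus S'$ lies in the open complement $H^c$, which is convex. Since $f$ is internal, $f(T)\in\CH(T)\subseteq H^c$. (Empty $T$ needs a convention: if $h\geq 1$, then $M^\downarrow_h$ excludes $\emptyset$; if $h=0$, we use the convention $f(\emptyset)\in\CH(\emptyset)$ vacuously, or assume $h\geq1$ as in the statement's $\bN$.) Hence $\CDI^f_h(S\setminus S')\subseteq H^c$, so $y\notin\CDI^f_h(S\setminus S')$. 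By the definition of depth as a minimum, $\depth_h^f(S,y)\leq |S'|=\TD(S,y)$. The same argument with $h=1$ gives $\depth_1^f(S,y)\leq\TD(S,y)$.

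\textbf{Step 2: $\depth_1^f(S,y)\geq\TD(S,y)$.} The key observation is that $M^\downarrow_1(R)$ contains every singleton $\{x\}$ with $x\in R$. Internality forces $f(\{x\})\in\CH(\{x\})=\{x\}$, so $f(\{x\})=x$. Thus $\CDI^f_1(R)\supseteq\CH(R)$, and by internality the reverse inclusion also holds, so $\CDI^f_1(R)=\CH(R)$.

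Now take any $S'$ with $y\notin\CH(S\setminus S')$. Because $S\setminus S'$ is finite, $\CH(S\setminus S')$ is compact and convex. By strict separation there is a closed halfspace $H\ni y$ disjoint from $S\setminus S'$. Then $S\cap H\subseteq S'$, which gives $\TD(S,y)\leq|S\cap H|\leq|S'|$. Minimizing over such $S'$ yields $\TD(S,y)\leq\depth_1^f(S,y)$.

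\textbf{Remaining subtleties.} The main point is the separation step and the degenerate cases. If $S\setminus S'=\emptyset$, any closed halfspace through $y$ works only if it avoids $S$. In that case take $S'=S$: we get $|S'|=|S|\geq\TD(S,y)$, which is consistent, and the minimum handles this. Multiplicities are handled by counting $S\cap H$ as a multiset. Combining Steps 1 and 2 gives the chain $\depth_h^f(S,y)\leq\TD(S,y)=\depth_1^f(S,y)$. Claim~\ref{claim:depth-monotone-h} offers an alternative route to the inequality once the equality is established.
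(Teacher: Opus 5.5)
Your proof is correct and follows essentially the same route as the paper: internality keeps every $f(T)$ with $T\subseteq S\setminus S'$ inside $\CH(S\setminus S')$ (you use the open halfspace $H^c$, which contains it), giving the inequality, and the singletons $f(\{x\})=x$ give $\CDI^f_1(S\setminus S')=\CH(S\setminus S')$, giving the equality. The only real addition is that you spell out the halfspace-separation argument behind $\TD(S,y)=\min\{|S'|: S'\subseteq S,\ y\notin\CH(S\setminus S')\}$, which the paper uses without comment. Like the paper, you genuinely need $h\geq 1$: no convention can place $f(\emptyset)$ in $\CH(\emptyset)=\emptyset$, so drop that alternative and keep the $h\geq1$ assumption.
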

See \ref{proof:lemma:depth-td} for the proof.

Let $\cY\subseteq\bR^d$ be a nonempty finite domain.
Define the black-box function
$f_{\operatorname{center}}:\cY^*\rightarrow\cY$ by
\[
    f_{\operatorname{center}}(S)
    \in
    \argmax_{y\in\cY}\TD(S,y)
\]
for every $S\in\cY^*$, where ties are broken
deterministically. Observe that
$f_{\operatorname{center}}$ is internal on nonempty
datasets. This allows us to use the Tukey depth
characterization of $\depth_h^{f_{\operatorname{center}}}$
from the previous lemma, yielding the following result.
\begin{corollary}[Center point]\label{lemma:tm-improved}
Let $\cY\subseteq\bR^d$ be a finite domain, and $\tilde{\cY}$ be its extension from Lemma~\ref{lemma:ext1}.
$\beta,\eps,\delta\in(0,1)$, and let
$f_{\operatorname{center}}$ be defined as above.
Then the algorithm $\cA$ from~\ref{alg:main}
is $(\eps,\delta)$-DP, and for every dataset
$S\in\cY^*$, its output
$\hat{y}:=\cA(f_{\operatorname{center}},S,\beta,\eps,\delta,\cY,\tilde{\cY})
\in\bR^d$
satisfies
\[
    \TD(S,\hat{y})
    \geq
    \max_{y\in\bR^d}\TD(S,y)-\Delta
    \geq
    \frac{|S|}{d+1}-\Delta
\]
with probability at least $1-\beta$, where $\Delta=\tilde{O}\left(
    d^{1.5}
    \frac{8^{\log^*|\cY|}}{\eps}
    \log\left(\frac{1}{\beta\delta}\right)
    \right)$.
\end{corollary}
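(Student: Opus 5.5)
The plan is to run Algorithm~\ref{alg:main} with $f=f_{\operatorname{center}}$ and cutoff $h=1$, and then convert the CDI-depth guarantee of Theorem~\ref{thm:main} into a Tukey-depth guarantee using Lemma~\ref{lemma:depth-td}. \emph{Privacy} is immediate. Theorem~\ref{thm:main} states that Algorithm~\ref{alg:main} is $(\eps,\delta)$-DP for every function $f$ and every public $h$. Here $f_{\operatorname{center}}$ is a fixed deterministic function of the dataset (ties are broken deterministically), so nothing more is needed.

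\emph{Utility.} With $h=1$, Theorem~\ref{thm:main} gives, with probability at least $1-\beta$,
\[
\depth_1^{f}(S,\hat y)\geq \max_{y\in\tilde\cY}\depth_1^{f}(S,y)-\Delta .
\]
The function $f_{\operatorname{center}}$ is internal on nonempty datasets, since its output maximizes Tukey depth and hence lies in $\CH(S)$. I must check that Lemma~\ref{lemma:depth-td} tolerates the empty set. With $h=1$, the set $M^\downarrow_1$ excludes $\emptyset$, so only nonempty subsets enter $\CDI$, and the lemma applies. It gives $\depth_1^{f}(S,y)=\TD(S,y)$ for every $y$, including the output $\hat y\in\CH(\tilde\cY)\subseteq\bR^d$. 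Therefore
\[
\TD(S,\hat y)\geq \max_{y\in\tilde\cY}\TD(S,y)-\Delta .
\]

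\emph{The main obstacle} is to replace $\max_{y\in\tilde\cY}$ by $\max_{y\in\bR^d}$. I will use the extension property of Definition~\ref{def:extension}. Let $t=\max_{y\in\bR^d}\TD(S,y)$; the maximum is attained because $\TD$ is integer valued and bounded. The set of points of depth at least $t$ equals the intersection, over all subsets $S'\subseteq S$ with $|S'|\le t-1$ removed, of $\CH(S\setminus S')$. This is the standard Tukey-region characterization, and it is also exactly the definition of $\depth_1^f$ via Lemma~\ref{lemma:depth-td}. Since $S\in\cY^*$, each of these hulls is $\CH$ of a subset of $\cY$. The intersection is nonempty, so the extension property yields a point of $\tilde\cY$ in it. Hence $\max_{\tilde\cY}\TD=\max_{\bR^d}\TD$. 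The centerpoint theorem (Theorem~\ref{thm:center-point-org}) gives $t\ge |S|/(d+1)$; alternatively, Lemma~\ref{thm:good-score} with $h=1$ gives $(|S|-1)/(d+1)$, and the integrality of $\TD$ tightens this.

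\emph{Error bound.} Finally, I substitute Lemma~\ref{lemma:ext1} into the bound of Theorem~\ref{thm:main}. Since $|\tilde\cY|\le 2^{2^{|\cY|}}$, we have $\log^*|\tilde\cY|\le\log^*|\cY|+2$. The factors $8^{\log^*|\tilde\cY|}\log^*|\tilde\cY|$ therefore become $O(8^{\log^*|\cY|})$ times a $\log^*$ factor, which is absorbed in $\tilde O$. Combining $\sqrt{\log(1/\delta)}\cdot\log(d\log^*|\cY|/(\beta\delta))$ gives $\tilde O(\log(1/\beta\delta))$. This yields $\Delta=\tilde O\bigl(d^{1.5}8^{\log^*|\cY|}\eps^{-1}\log(1/\beta\delta)\bigr)$. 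If the theorem requires $\beta,\eps,\delta<1/2$, I will rescale the parameters by constants.
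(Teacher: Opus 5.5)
Your proposal is correct and follows the paper's proof essentially step for step: run Algorithm~\ref{alg:main} with $h=1$, identify $\depth_1^{f_{\operatorname{center}}}$ with $\TD$ via Lemma~\ref{lemma:depth-td}, pass from $\max_{\tilde{\cY}}$ to $\max_{\bR^d}$ using the Tukey-region characterization together with the extension property, and finish with the centerpoint theorem and $\log^*|\tilde{\cY}|\leq\log^*|\cY|+2$. One correction to your parenthetical alternative: from the \emph{stated} bound $(|S|-1)/(d+1)$ of Lemma~\ref{thm:good-score}, integrality only gives $\lceil(|S|-1)/(d+1)\rceil$, which falls short of $|S|/(d+1)$ when $|S|\equiv 1 \pmod{d+1}$ (e.g.\ $|S|=d+2$), so you need the centerpoint theorem as the paper does (or the strict inequality $\depth>(|S|-h)/(d+1)$ that the lemma's proof actually yields).
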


See \ref{proof:lemma:tm-improved} for the proof.

\subsection{From convex-hull guarantees to optimization}  \label{sec:black-box-opt}

A useful observation is that the convex-hull guarantee of a black-box evaluator translates directly into optimization guarantees. For quasi-concave quality functions and convex losses, the private output is at least as good as the worst black-box output over nearby subdatasets, with all outputs evaluated on the original dataset $S$. The following claim formalizes this observation.

\begin{claim}\label{claim:generic-opt-black-box} 
Let $\cU$ be an input domain, $\cY\subseteq\bR^d$ be a finite output domain, and $\cA$ be an $(\eps,\delta)$-DP, $(\lambda,\beta)$-black-box evaluator. Then, for \textbf{every} black-box function $f:\cU^*\rightarrow\cY$ and dataset $S\in\cU^*$, with probability at least $1-\beta$, its output $\hat{y}=\cA(f,S)$ satisfies the following simultaneously: 

\begin{quote} \textit{(1)} $\cQ(S,\hat{y})\geq \min_{S'\in N^{\downarrow}_{\lambda}(S)} \cQ(S,f(S'))$ for \textbf{any} quality function $\cQ:\cU^*\times\bR^d\rightarrow\bR$ that is quasi-concave in its second argument.

\textit{(2)} $\cL(S,\hat{y})\leq \max_{S'\in N^{\downarrow}_{\lambda}(S)} \cL(S,f(S'))$ for \textbf{any} loss function $\cL:\cU^*\times\bR^d\rightarrow\bR$ that is convex in its second argument. 
\end{quote} 
\end{claim}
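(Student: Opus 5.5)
The plan is to condition on the single good event given by Definition~\ref{def:black-box}. With probability at least $1-\beta$ we have $\hat y\in \CH(f(N^{\downarrow}_{\lambda}(S)))$. On this event both conclusions follow deterministically, for every $\cQ$ and $\cL$ at once, because the event does not depend on the choice of quality or loss function. This is what makes the guarantees hold ``simultaneously.''

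Fix the event, and let $V=f(N^{\downarrow}_{\lambda}(S))\subseteq\cY$. The set $V$ is finite, since $S$ is finite and hence has finitely many subsets. By definition of the convex hull, $\hat y=\sum_{i=1}^k w_i v_i$ with $v_i\in V$, $w_i>0$ and $\sum_i w_i=1$, where each $v_i=f(S_i)$ for some $S_i\in N^{\downarrow}_{\lambda}(S)$.

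For (1), apply the quasi-concavity definition to $\cQ(S,\cdot)$ on the domain $\bR^d$ with $\cA=\{v_1,\dots,v_k\}$. Since $\hat y\in\CH(\cA)\cap\bR^d$, we get $\cQ(S,\hat y)\ge\min_i \cQ(S,v_i)$. This is at least $\min_{S'\in N^{\downarrow}_{\lambda}(S)}\cQ(S,f(S'))$, because each $v_i$ is of the form $f(S')$ for some $S'$ in the neighborhood. The minimum exists since the neighborhood is finite.

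For (2), apply Jensen's inequality for finite convex combinations (that is, convexity by induction on $k$) to $\cL(S,\cdot)$. This gives $\cL(S,\hat y)\le\sum_i w_i\cL(S,v_i)\le\max_i \cL(S,v_i)\le \max_{S'\in N^{\downarrow}_{\lambda}(S)}\cL(S,f(S'))$.

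There is no real obstacle here. The only point needing care is that the quasi-concavity definition is stated for a domain $\cY$, so I use it with domain $\bR^d$, where $\hat y$ lives. The finite-combination form of the definition then applies directly, and no canonical extension is needed.
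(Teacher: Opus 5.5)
Your proposal is correct and follows essentially the same argument as the paper: condition on the event $\hat y\in\CH(f(N^{\downarrow}_{\lambda}(S)))$, write $\hat y$ as a finite convex combination of values $f(S_i)$ with $S_i\in N^{\downarrow}_{\lambda}(S)$, and then apply quasi-concavity for (1) and Jensen's inequality for (2). Your remark that this event does not depend on $\cQ$ or $\cL$ makes explicit why the two guarantees hold simultaneously, a point the paper leaves implicit.
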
  

These guarantees require no sensitivity bound on $\cQ$ or $\cL$. Moreover, the evaluator need not know or evaluate either function; it only requires black-box access to $f$. To apply the claim, it therefore suffices to control the quality of $f(S')$, measured on $S$, when at most $\lambda$ elements are deleted. See~\ref{proof:claim:generic-opt-black-box} for the proof.  

In the next two subsections, we directly instantiate this claim for user-level quasi-concave and convex optimization, treating each user's dataset as a single element of the input domain.

\subsection{User-Level Private Quasi-Concave Optimization}  

A useful application of our black-box evaluator is quasi-concave optimization when no finite bound on the sensitivity of the quality function is available. Standard sensitivity-based private QCO algorithms require such a bound to rescale the objective to have sensitivity at most $1$. Without this bound, the same approach cannot be applied directly. Our black-box evaluator avoids this requirement: it only accesses a non-private optimizer, and requires no bound on the range or sensitivity of the quality function.  

This setting arises naturally in user-level privacy, where each user may hold an arbitrary number of data points. Even when each individual data point has a bounded effect on the objective, adding or removing one user's entire dataset may change its value by an arbitrarily large amount. By treating each user's dataset as a single element of the input domain, our evaluator provides user-level privacy without imposing a uniform bound on the users' contribution sizes. Its utility guarantee compares the quality of the private output with that of non-private solutions obtained after removing at most $\lambda$ users, with all solutions evaluated on the original dataset.  

We start by recalling user-level differential privacy.

\begin{definition}[User-Level Privacy]\label{def:user-level}
Let $\eps,\delta\in(0,1)$ be parameters. An algorithm $\cA$ satisfies $(\eps,\delta)$-user-level differential privacy if it satisfies $(\eps,\delta)$-differential privacy with respect to the neighboring relation where two datasets $S,S'\in((\bR^d)^*)^*$ are considered neighbors if one can be obtained from the other by adding or removing a single user's entire dataset.
\end{definition}

We now formally define the user-level quasi-concave
optimization setting and state the resulting guarantee.

Let $\cX$ be an input domain and
$\cY\subseteq\bR^d$ be a public finite nonempty output domain.
Let $\cQ:(\cX^*)^*\times\bR^d\rightarrow\bR$
be an arbitrary quality function such that
$\cQ(T,\cdot)$ is quasi-concave for every users' dataset
$T\in(\cX^*)^*$.
Define the black-box optimizer
$f_{\cQ}:(\cX^*)^*\rightarrow\cY$ by
$f_{\cQ}(T)\in\argmax_{y\in\cY}\cQ(T,y)$,
where ties are broken deterministically.

\begin{corollary}[User-level quasi-concave optimization]
\label{cor:user-level-quasi-opt}
Let $\beta,\eps,\delta\in(0,1)$ be parameters,
and let $\cQ$ and $f_{\cQ}$ be defined as above.
Then, the black-box evaluator from~\ref{thm:main-eval-approx}
is $(\eps,\delta)$-user-level DP.
Furthermore, for every users' dataset
$S=(S_1,S_2,\cdots,S_n)\in(\cX^*)^*$,
if
$n\geq\lambda=\tilde{O}\left(
    d^{2.5}
    \frac{8^{\log^*|\cY|}}{\eps}
    \log\left(\frac{1}{\beta\delta}\right)
\right)$,
its output $\hat{y}=\cA(f_{\cQ},S)$ satisfies
\begin{align*}
    \cQ(S,\hat{y})
    \geq
    \min_{S'\in N^{\downarrow}_{\lambda}(S)}
    \cQ(S,f_{\cQ}(S'))
\end{align*}
with probability at least $1-\beta$.
\end{corollary}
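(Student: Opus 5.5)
The plan is to obtain the corollary as a direct instantiation of Theorem~\ref{thm:main-eval-approx} together with Claim~\ref{claim:generic-opt-black-box}, with the input domain taken to be $\cU=\cX^*$. Each user's dataset $S_i$ is then a single element of $\cU$, and a users' dataset $S=(S_1,\dots,S_n)$ is an element of $\cU^*$.

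First, privacy. Under the identification $\cU=\cX^*$, the user-level neighboring relation of Definition~\ref{def:user-level} (adding or removing one user's entire dataset) coincides exactly with the ordinary add/remove neighboring relation on $\cU^*$. The evaluator of Theorem~\ref{thm:main-eval-approx} is $(\eps,\delta)$-DP on $\cU^*$ for \emph{every} black-box $f:\cU^*\to\cY$, with no sensitivity assumption on $f$. Applying this to $f=f_{\cQ}$ therefore gives $(\eps,\delta)$-user-level DP at once. It does not matter that $\cQ$ may have unbounded sensitivity at the user level, because privacy never refers to $\cQ$. I also note that $f_{\cQ}$ is a well-defined map into $\cY$: $\cY$ is finite and nonempty, and ties are broken deterministically.

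Second, utility. By Theorem~\ref{thm:main-eval-approx}, the evaluator is a $(\lambda,\beta)$-black-box evaluator with the stated $\lambda=\tilde O(d^{2.5}8^{\log^*|\cY|}\log(1/\beta\delta)/\eps)$. So with probability at least $1-\beta$ its output satisfies
\[
\hat y\in\CH\bigl(f_{\cQ}(N^\downarrow_\lambda(S))\bigr).
\]
I then apply part (1) of Claim~\ref{claim:generic-opt-black-box} with quality function $\cQ$, which is quasi-concave in its second argument by assumption. This yields $\cQ(S,\hat y)\geq\min_{S'\in N^\downarrow_\lambda(S)}\cQ(S,f_{\cQ}(S'))$ on the same event. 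If one wants to spell this out directly: $\hat y$ is a convex combination of finitely many points $f_{\cQ}(S'_j)$, and quasi-concavity of $\cQ(S,\cdot)$ on $\bR^d$ gives $\cQ(S,\hat y)\ge\min_j\cQ(S,f_{\cQ}(S'_j))$.

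The main point to check is that the hypothesis $n\geq\lambda$ is enough. It guarantees that $N^\downarrow_\lambda(S)$ is a meaningful family; it may include the empty dataset, but the convex-hull guarantee holds regardless. Beyond that, the only subtlety is that quasi-concavity is needed on all of $\bR^d$, not just on $\cY$, since $\hat y$ may lie in $\CH(\cY)\setminus\cY$. The corollary's setup grants this, since it takes $\cQ:(\cX^*)^*\times\bR^d\to\bR$. If only discrete quasi-concavity on $\cY$ were available, I would invoke the canonical extension of Remark~\ref{rem:quasi-func-ext}.
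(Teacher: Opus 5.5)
Your proposal is correct and is the paper's own argument: take $\cU=\cX^*$ so that user-level neighbors become ordinary add/remove neighbors in $\cU^*$, obtain privacy and the convex-hull guarantee from Theorem~\ref{thm:main-eval-approx} applied to $f_{\cQ}$, and conclude with part (1) of Claim~\ref{claim:generic-opt-black-box}. As you partly note, the hypothesis $n\geq\lambda$ is never actually used, since the evaluator's convex-hull guarantee holds for every dataset.
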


This is a direct consequence of Claim~\ref{claim:generic-opt-black-box} and Theorem~\ref{thm:main-eval-approx}, treating each user's dataset $S_i\in\cX^*$ as a single element of the input domain. The minimum in the guarantee accounts for the worst choice of at most $\lambda$ users to remove, with every resulting solution evaluated on the original dataset $S$. Thus, whenever these non-private solutions retain high quality on $S$, the private output does as well. This holds without bounding the range or sensitivity of $\cQ$, even when users hold different amounts of data and contribute very differently to the objective.  

A useful baseline is the sample-and-aggregate framework, which runs the non-private optimizer on disjoint blocks of users and privately aggregates the resulting solutions. We compare this baseline with our black-box evaluator in subsection~\ref{subsec:sample-agg}.

We next instantiate this guarantee for the user-level center point problem.

Let $\cY\subseteq\bR^d$ be a finite domain,
and let $\tilde{\cY}$ be a finite extension of $\cY$.
For a users' dataset
$S=(S_1,S_2,\cdots,S_n)\in(\cY^*)^*$,
define the flattened dataset by
$\operatorname{flat}(S):=\biguplus_{i\in[n]}S_i$,
where the union counts multiplicities.
Define the center point solver
$f_{\operatorname{center}}:(\cY^*)^*\rightarrow\tilde{\cY}$
by
\[
    f_{\operatorname{center}}(S)
    \in\argmax_{y\in\tilde{\cY}}
    \TD\left(\operatorname{flat}(S),y\right),
\]
where ties are broken deterministically.

\begin{corollary}[User-level center point]
\label{cor:user-level-center}
Let $\beta,\eps,\delta\in(0,1)$ be parameters,
$\cY\subseteq\bR^d$ be a finite domain,
and $\tilde{\cY}$ be a finite extension of $\cY$
from Lemma~\ref{lemma:ext1}.
Let
$f_{\operatorname{center}}$
be the black-box optimizer defined above.
Then the black-box evaluator $\cA$  from~\ref{thm:main-eval-approx}
is $(\eps,\delta)$-user-level DP.

Furthermore, let $S=(S_1,S_2,\cdots,S_n)$ be any users'
dataset, where $S_i\in\cY^{m_i}$ for every $i\in[n]$.
Let $M=\sum_{i\in[n]}m_i$ be the total number of points,
and let $m_{i_1}\geq m_{i_2}\geq\cdots\geq m_{i_n}$
denote the users' contribution sizes in non-increasing order.
Then, if
$n\geq\lambda=\tilde{O}\left(
        d^{2.5}
        \frac{8^{\log^*|\cY|}}{\eps}
        \log\left(\frac{1}{\beta\delta}\right)
    \right)$,
its output $\hat{y}=\cA(f_{\operatorname{center}},S)$
satisfies
\begin{align*}
    \TD\left(\operatorname{flat}(S),\hat{y}\right)
    \geq
    \max\Bigg\{
        \max_{y\in\bR^d}
        \TD\left(\operatorname{flat}(S),y\right)
        -\sum_{j\in[\lambda]}m_{i_j},
        \frac{M-\sum_{j\in[\lambda]}m_{i_j}}{d+1}
    \Bigg\}
\end{align*}
with probability at least $1-\beta$.
In particular, if $m_i=m>0$ for every $i\in[n]$,
then, on the same event,
\[
    \TD\left(\operatorname{flat}(S),\hat{y}\right)
    \geq \frac{(n-\lambda)m}{d+1}.
\]
\end{corollary}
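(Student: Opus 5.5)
Privacy is immediate from Theorem~\ref{thm:main-eval-approx}: the evaluator is $(\eps,\delta)$-DP for every black-box function. Treating each user's dataset $S_i\in\cY^*$ as a single element of the input domain $\cU=\cY^*$ makes the neighboring relation exactly user-level adjacency (Definition~\ref{def:user-level}). The only caveat is that $f_{\operatorname{center}}$ takes values in $\tilde{\cY}$ rather than $\cY$. I will therefore run the evaluator with output domain $\tilde{\cY}$. Lemma~\ref{lemma:ext1} gives $|\tilde{\cY}|\le 2^{2^{|\cY|}}$, so $\log^*|\tilde{\cY}|\le\log^*|\cY|+2$, and $8^{\log^*|\tilde{\cY}|}=O(8^{\log^*|\cY|})$. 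This keeps $\lambda$ of the stated order.

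For utility I will apply Claim~\ref{claim:generic-opt-black-box}(1) with quality $\cQ(T,y):=\TD(\operatorname{flat}(T),y)$. This function is quasi-concave in $y$: its upper level sets are intersections of open halfspace complements, i.e.\ convex sets (Tukey regions). With probability $1-\beta$ we get
\[
\TD(\operatorname{flat}(S),\hat y)\ \ge\ \min_{S'\in N^\downarrow_\lambda(S)}\TD\bigl(\operatorname{flat}(S),f_{\operatorname{center}}(S')\bigr).
\]
It then suffices to lower bound $\TD(\operatorname{flat}(S),f_{\operatorname{center}}(S'))$ for each $S'$ obtained by deleting at most $\lambda$ users. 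Let $D\le\sum_{j\in[\lambda]}m_{i_j}$ be the number of points removed. Since $\operatorname{flat}(S')\subseteq\operatorname{flat}(S)$, halfspace counts only grow, so $\TD(\operatorname{flat}(S),y)\ge\TD(\operatorname{flat}(S'),y)$.

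It remains to show $\TD(\operatorname{flat}(S'),f_{\operatorname{center}}(S'))\ge\max\{\max_y\TD(\operatorname{flat}(S),y)-D,\ (M-D)/(d+1)\}$. For the first term, deleting $D$ points lowers every halfspace count by at most $D$. Hence $\max_{y}\TD(\operatorname{flat}(S'),y)\ge\max_y\TD(\operatorname{flat}(S),y)-D$. For the second term, the centerpoint theorem (Theorem~\ref{thm:center-point-org}) gives $\max_y\TD(\operatorname{flat}(S'),y)\ge(M-D)/(d+1)$.

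The main obstacle is that $f_{\operatorname{center}}$ maximizes only over $\tilde{\cY}$, not over $\bR^d$. I must show the maximum over $\tilde{\cY}$ equals the maximum over $\bR^d$. The set of points with Tukey depth at least $k$ is the intersection of the convex hulls of all subsets of size greater than $|T|-k$ of $\operatorname{flat}(S')$. Each such subset is a subset of $\cY$, so its hull has the form $\CH(S_i)$ with $S_i\subseteq\cY$. By Definition~\ref{def:extension}, whenever this intersection is nonempty it meets $\tilde{\cY}$. Multiplicities cause no trouble, since the convex hull of a multiset equals that of its support.

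I will take $k=\max_y\TD$, which is attained since depth takes integer values. Then some point of $\tilde{\cY}$ achieves the global maximum depth, and combining the displays gives the main bound. The equal-size case follows by substituting $m_{i_j}=m$ and $M=nm$ into the second term.
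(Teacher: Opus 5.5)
Your proposal is correct and follows essentially the same route as the paper: you treat each user's dataset as one element of $\cU=\cY^*$, run the evaluator with output domain $\tilde{\cY}$, apply the quasi-concave convex-hull guarantee to $\TD(\operatorname{flat}(\cdot),\cdot)$, and bound each $\TD(\operatorname{flat}(S),f_{\operatorname{center}}(S'))$ using three facts: Tukey depth does not decrease when points are added, deleting $D$ points costs at most $D$, and the center point theorem. The key step, that $f_{\operatorname{center}}$ attains the global maximum depth, is handled exactly as in the paper, by writing superlevel sets as intersections of convex hulls of subsets of $\cY$ and invoking the extension property; the empty-dataset case, where the maximum depth is $0$, is trivial because $\tilde{\cY}\neq\emptyset$.
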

See~\ref{proof:cor:user-level-center} for the proof.

The two terms in the bound provide complementary
guarantees. The first compares the output with the maximum Tukey depth over $\cY$, with an additive loss
of at most $\sum_{j\in[\lambda]}m_{i_j}$, the total number of points held by the $\lambda$ largest contributors. The second preserves the usual
$1/(d+1)$ center point guarantee for the number of points remaining after removing these users.

For comparison, the ordinary center point guarantee from Corollary~\ref{lemma:tm-improved}, applied to the flattened dataset of size $M$, gives depth at least $M/(d+1)-\Delta$ under item-level privacy. Here, privacy
protects each user's entire dataset, and the guaranteed
depth is at least
$\frac{M}{d+1}
    -
    \frac{\sum_{j\in[\lambda]}m_{i_j}}{d+1}$.
    
Thus, the loss relative to the usual center point benchmark is controlled explicitly by the largest users' contributions, without requiring a uniform upper bound on their sizes.

In particular, when $m_i=m>0$ for every $i\in[n]$, we have $\TD(\operatorname{flat}(S),\hat{y})     \geq     \frac{(n-\lambda)m}{d+1}$.  

For fixed dimension, output domain, and privacy and failure parameters, $\lambda$ is independent of $m$. Thus, with $n>\lambda$ fixed, increasing the number of points contributed by each user increases the guaranteed depth linearly, without requiring more users. The privacy guarantee continues to protect each user's entire dataset, regardless of its size.  This illustrates the user-level nature of the result: the required number of users is independent of $m$, while the guaranteed depth grows with their total contribution. 
See \ref{proof:cor:user-level-center} for the formal proof.

\subsection{User-Level Private Convex Optimization}  

Our black-box evaluator also provides a generic approach to user-level private convex optimization. It applies to arbitrary convex user loss functions and convex regularizers, without requiring a uniform bound on the users' contribution sizes or on the sensitivity of the objective. This includes objectives formed from bounded per-example loss terms when each user may contribute arbitrarily many examples.  

The evaluator $\cA$ only requires black-box access to a non-private optimizer with outputs in the finite domain $\cY$. It need not know or evaluate the loss $\cL$, and requires no access to gradients or subgradients. Moreover, no known Lipschitz bound, smoothness, or differentiability assumption is needed.  

For comparison, representative prior results on user-level private convex optimization \cite{pmlr-v202-ghazi23a,pmlr-v238-liu24g} analyze Lipschitz losses with a fixed number of examples per user; their stochastic optimization guarantees also assume i.i.d. sampling. These works already accommodate nonsmooth losses. Our corollary provides a dataset-wise empirical-loss guarantee for arbitrary user datasets, with the excess loss controlled by the loss variation associated with at most $\lambda$ deleted users. Bounds on individual loss terms can then be used to quantify this error, without being provided to the evaluator.

Another useful baseline is the sample-and-aggregate framework, which runs the non-private optimizer on disjoint blocks of users and privately aggregates the resulting solutions. We compare this baseline with our black-box evaluator in subsection~\ref{subsec:sample-agg}.

We now formally define the user-level convex optimization
setting and state the resulting guarantee.

Let $\cX$ be an input domain and
$\cY\subseteq\bR^d$ be a finite output domain.
Let $\phi_i:\cX^*\times\bR^d\rightarrow\bR$ be individual
loss functions that are convex in their second argument,
and let $r:\bR^d\rightarrow\bR$ be a convex regularizer.
For a users' dataset
$S=(S_1,S_2,\ldots,S_n)\in(\cX^*)^*$,
define the empirical risk function
$\cL:(\cX^*)^*\times\bR^d\rightarrow\bR$
and the black-box optimizer
$f_{\cL}:(\cX^*)^*\rightarrow\cY$ by
\begin{align*}
    \cL(S,y)
    &=\sum_{i\in[n]}\phi_i(S_i,y)+r(y),
    &
    f_{\cL}(S)
    &\in\argmin_{y\in\cY}\cL(S,y),
\end{align*}
where ties are broken deterministically.
For a subdataset $S'\subseteq S$, define
$\cL(S',y)=\sum_{S_i\in S'}\phi_i(S_i,y)+r(y)$,
using the original user indices.

\begin{corollary}\label{cor:user-level-co}
Let $\beta,\eps,\delta\in(0,1)$ be parameters,
and let $\cL$ and $f_{\cL}$ be defined as above.
Then, the black-box evaluator from
Theorem~\ref{thm:main-eval-approx}
is $(\eps,\delta)$-user-level DP.

Furthermore, let $S=(S_1,S_2,\ldots,S_n)$ be any users'
dataset, where $S_i\in\cX^{m_i}$ for every $i\in[n]$.
Then, if $n\geq\lambda=\tilde{O}\left(
        d^{2.5}
        \frac{8^{\log^*|\cY|}}{\eps}
        \log\left(\frac{1}{\beta\delta}\right)
    \right)$,
the output $\hat{y}=\cA(f_{\cL},S)$ satisfies
\begin{align*}
    \cL(S,\hat{y})
    &\leq
    \max_{S'\in N^{\downarrow}_{\lambda}(S)}
    \cL(S,f_{\cL}(S'))\\
    &\leq
    \min_{y\in\cY}\cL(S,y)
    +
    \max_{S'\in N^{\downarrow}_{\lambda}(S)}
    \sum_{S_i\in S\setminus S'}
    \left[
        \phi_i(S_i,f_{\cL}(S'))
        -
        \phi_i(S_i,f_{\cL}(S))
    \right]
\end{align*}
with probability at least $1-\beta$.
\end{corollary}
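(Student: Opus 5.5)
Privacy is immediate: Theorem~\ref{thm:main-eval-approx} gives an $(\eps,\delta)$-DP evaluator with respect to the neighboring relation on $\cU^*$. Taking $\cU=\cX^*$, so that each user's dataset $S_i$ is a single element, this relation is exactly the user-level relation of Definition~\ref{def:user-level}. Hence $\cA(f_{\cL},\cdot)$ is $(\eps,\delta)$-user-level DP for every choice of the $\phi_i$ and $r$; no property of $f_{\cL}$ is needed.

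For the first inequality, I would apply Claim~\ref{claim:generic-opt-black-box}(2) with loss $\cL$, taking the users as input elements. Since each $\phi_i(S_i,\cdot)$ and $r$ are convex, $\cL(S,\cdot)$ is convex in its second argument. The claim then gives, on the $1-\beta$ event where $\hat y\in\CH(f_{\cL}(N^\downarrow_\lambda(S)))$,
\[
\cL(S,\hat y)\le \max_{S'\in N^\downarrow_\lambda(S)}\cL(S,f_{\cL}(S')).
\]
The underlying reason is that a convex function on a convex hull is bounded by its maximum at the generating points. The hypothesis $n\ge\lambda$ only ensures the evaluator's regime is meaningful; the claim itself needs no size condition.

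For the second inequality, I would fix $S'\in N^\downarrow_\lambda(S)$ and write $y'=f_{\cL}(S')$ and $y^*=f_{\cL}(S)$, so that $\cL(S,y^*)=\min_{y\in\cY}\cL(S,y)$. Because subdatasets keep the original user indices, $\cL$ decomposes as
\[
\cL(S,y)=\cL(S',y)+\sum_{S_i\in S\setminus S'}\phi_i(S_i,y).
\]
Since $y'$ minimizes $\cL(S',\cdot)$ over $\cY$ and $y^*\in\cY$, we have $\cL(S',y')\le\cL(S',y^*)$. Therefore
\[
\cL(S,y')\le \cL(S',y^*)+\sum_{S\setminus S'}\phi_i(S_i,y') = \cL(S,y^*)+\sum_{S\setminus S'}\bigl[\phi_i(S_i,y')-\phi_i(S_i,y^*)\bigr].
\]
Taking the maximum over $S'$ on both sides gives the stated bound.

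There is no real obstacle. The only care needed is the bookkeeping of multiset subdatasets with original indices, so that the regularizer $r$ appears exactly once on each side and cancels, and so that duplicated user datasets are paired with their own $\phi_i$.
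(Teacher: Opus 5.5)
Your proposal is correct and follows the paper's proof essentially step for step. You get privacy from Theorem~\ref{thm:main-eval-approx} with $\cU=\cX^*$, the first inequality from Claim~\ref{claim:generic-opt-black-box}(2) via convexity of $\cL(S,\cdot)$, and the second from the decomposition $\cL(S,y)=\cL(S',y)+\sum_{S_i\in S\setminus S'}\phi_i(S_i,y)$ combined with $\cL(S',f_{\cL}(S'))\le\cL(S',f_{\cL}(S))$ and a maximum over $S'$.
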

This is essentially due to Claim~\ref{claim:generic-opt-black-box} and Theorem~\ref{thm:main-eval-approx}. See \ref{proof:cor:user-level-co} for the formal proof.

\subsection{Comparison with the sample-and-aggregate baseline}\label{subsec:sample-agg}  

A natural baseline for private black-box evaluation is the sample-and-aggregate framework of \citet{nissim2007smooth}. This framework applies a non-private procedure to smaller blocks of data and privately aggregates the resulting outputs. For quasi-concave quality functions and convex loss
functions, a private center point algorithm is a
natural choice of aggregator. The convex-hull guarantee of the aggregator preserves any lower bound on quality, or upper bound on loss, satisfied by all block outputs.  

Let $S\in\cU^*$ be a dataset of size $n$, and let
$f:\cU^*\rightarrow\cY$ be an arbitrary black-box function.

The sample-and-aggregate baseline randomly partitions $S$ into
$\lambda=\Tilde{O}\left(d^{2.5} 8^{\log^* |\cY|}\frac{\log(1/\beta\delta)}{\eps}\right)$\footnote{With a sufficiently large
constant for both our evaluator (Theorem~\ref{thm:main-eval-approx}) and private center point
aggregation of $\lambda$ points (Corollary~\ref{lemma:tm-improved}).} disjoint blocks $T_1,\ldots,T_\lambda$, using a
data-independent assignment under which a neighboring
change affects at most one block. It evaluates $f$ on
each block and privately aggregates the outputs using
a center point (Corollary~\ref{lemma:tm-improved}). With probability at least $1-\beta$,
the resulting output satisfies
\[
    \hat{y}_{\mathrm{SA}}
    \in\CH\left(\{f(T_1),\ldots,f(T_\lambda)\}\right).
\]
Consequently, for any quasi-concave quality function
$\cQ$ and any convex loss function $\cL$, we have
\begin{align*}
    \cQ(S,\hat{y}_{\mathrm{SA}})
    &\geq
    \min_{i\in[\lambda]}\cQ(S,f(T_i)),\\
    \cL(S,\hat{y}_{\mathrm{SA}})
    &\leq
    \max_{i\in[\lambda]}\cL(S,f(T_i)).
\end{align*}
On the other hand, let $\hat{y}_{\mathrm{BB}}=\cA(f,S)$
denote the output of our $(\lambda,\beta)$-black-box
evaluator (Theorem~\ref{thm:main-eval-approx}). By Claim~\ref{claim:generic-opt-black-box},
with probability at least $1-\beta$, we have
\begin{align*}
    \cQ(S,\hat{y}_{\mathrm{BB}})
    &\geq
    \min_{S'\in N^{\downarrow}_{\lambda}(S)}
    \cQ(S,f(S')),\\
    \cL(S,\hat{y}_{\mathrm{BB}})
    &\leq
    \max_{S'\in N^{\downarrow}_{\lambda}(S)}
    \cL(S,f(S')).
\end{align*}

For balanced blocks, the baseline therefore relies on
solutions computed on approximately $n/\lambda$ elements,
whereas our guarantee is in terms of solutions computed
on at least $n-\lambda$ elements. In particular, suppose
that $f(T)$ achieves a desired quality or loss, measured
on $S$, for every $T\subseteq S$ with $|T|\geq h$.
Then sufficient dataset sizes for the two reductions are
\[
    n_{\mathrm{SA}}=\lambda h,
    \qquad
    n_{\mathrm{BB}}=h+\lambda.
\]
Thus, under this uniform-subset guarantee, the baseline
incurs a \textbf{multiplicative} overhead, while our
evaluator incurs an \textbf{additive} overhead. This comparison applies to both item-level and user-level privacy,
by taking $\cU=\cX$ or $\cU=\cX^*$, respectively.

\section*{AI Disclosure}
The authors proved all theorems and wrote all proofs in this work. AI (GPT-Sol 5.6) was used to check the proofs and identified some edge cases in the algorithms and theorems, which the authors subsequently addressed. AI was also used to refine the language and grammar.

\printbibliography

\appendix

\section{Preliminaries}
This section collects the differential privacy tools and geometric facts used in our constructions and analyses.
\subsection{Privacy tool-box}
\begin{theorem}[Advanced composition, \citet{dwork2010boosting}]\label{thm:advanced-composition}
    Let $\eps,\delta>0$, and $M_1,\dots,M_t : \cU^* \rightarrow \cY$ be $(\eps,\delta)$-differentially private mechanisms. Then the mechanism that takes $S \in \cX^n$ as input and outputs $(M_1(S),\dots,M_t(S))$ is $(\eps',t\delta + \delta')$-differentially private, where $\eps' = \eps\sqrt{2t\log(1/\delta')} + t\eps (e^\eps -1)$ for every $\delta' > 0$.
\end{theorem}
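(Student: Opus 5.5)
The statement just before this point is the advanced composition theorem of \citet{dwork2010boosting}, cited as a tool. My plan is to follow the standard privacy-loss random variable argument rather than reprove anything specific to this paper.

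Fix neighboring $S,S'$ and an output sequence $o=(o_1,\dots,o_t)$. The privacy loss is
\[
L(o)=\log\frac{\Pr[(M_i(S))_i=o]}{\Pr[(M_i(S'))_i=o]} .
\]
Because the mechanisms are run independently (or adaptively, in the general version), $L$ decomposes as a sum $\sum_i L_i$ of per-mechanism losses. The first step is a decomposition lemma for each $(\eps,\delta)$-DP mechanism. It writes $M_i(S)$ as a mixture: with probability $1-\delta$ it is distributed as $\tilde M_i(S)$, and with probability $\delta$ it is arbitrary. The pair $\tilde M_i(S),\tilde M_i(S')$ is then pure $\eps$-indistinguishable, so on the good part $|L_i|\le\eps$ pointwise. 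A union bound over the $t$ bad events costs an additive $t\delta$.

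On the good event, I will bound the expectation of each $L_i$. For distributions whose log-likelihood ratio is bounded by $\eps$ in both directions, the KL divergence satisfies $\mathbb{E}[L_i]\le \eps(e^\eps-1)$. Summing gives a drift of at most $t\eps(e^\eps-1)$. In the adaptive case, conditioned on past outputs, the centered losses $L_i-\mathbb{E}[L_i\mid o_{<i}]$ form a martingale difference sequence bounded in $[-2\eps,2\eps]$; more carefully, each lies in an interval of length $2\eps$.

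Azuma--Hoeffding then gives $\Pr[\sum_i L_i > t\eps(e^\eps-1)+\eps\sqrt{2t\log(1/\delta')}]\le\delta'$. The constant $\sqrt{2}$ comes from the interval length $2\eps$ in Hoeffding's lemma. The last step is the standard conversion from a tail bound on privacy loss to $(\eps',\delta')$-indistinguishability. For any event $A$, split $A$ into outputs where $L\le\eps'$ and the rest: the first part contributes at most $e^{\eps'}\Pr[M(S')\in A]$, and the second at most $\delta'$. Adding the $t\delta$ from the decomposition step yields $(\eps',t\delta+\delta')$.

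I expect the main obstacle to be the KL bound $\eps(e^\eps-1)$ combined with the decomposition lemma. Getting exactly this constant requires the symmetrization trick: write $D(P\|Q)\le D(P\|Q)+D(Q\|P)=\sum_o (P(o)-Q(o))\log\frac{P(o)}{Q(o)}$, then bound each term using $|P-Q|\le (e^\eps-1)\min(P,Q)$. Since this is a cited classical result, in the paper I would simply invoke it by reference.
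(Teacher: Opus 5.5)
The paper gives no proof of this statement; it imports it from \citet{dwork2010boosting}, so your closing remark matches what the paper does. Your ingredients are right: the $\eps(e^\eps-1)$ KL bound via symmetrization, and Azuma--Hoeffding with increments in an interval of length $2\eps$, which is what produces $\eps\sqrt{2t\log(1/\delta')}$. The gap is in how you combine the $\delta$-decomposition with the final conversion, and that is exactly the step that decides whether the additive term is $t\delta$.

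\textbf{The false step.} You define $L$ from the original laws and then claim $|L_i|\le\eps$ ``on the good part.'' This is false: the good part is a distribution, not a set of outputs. At an output $o$ in its support, the ratio $\Pr[M_i(S)=o]/\Pr[M_i(S')=o]$ still includes the bad components and can be arbitrarily large.

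\textbf{What the tail bound actually gives.} Azuma only controls the loss between the decomposed laws $\tilde M_i(S),\tilde M_i(S')$. It therefore yields
$\Pr[\tilde M(S)\in A]\le e^{\eps'}\Pr[\tilde M(S')\in A]+\delta'$,
with $\tilde M(S')$ rather than $M(S')$ on the right. If you return to $M(S')$ by the same union-bound (total-variation) reasoning you used on the $S$ side, you pay an extra $e^{\eps'}t\delta$.

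\textbf{Why the $S'$ side cannot simply be left alone in your mixture framing.} Suppose $M_i(S')$ puts mass at most $\delta$ on an output that $M_i(S)$ never produces. Then no $\tilde P_i$ with $(1-\delta)\tilde P_i\le\Pr[M_i(S)=\cdot\,]$ can be pure-$\eps$-close to $M_i(S')$.

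\textbf{Two standard repairs.} You need to state one of them explicitly.
\begin{enumerate}
\item[(i)] \emph{Modify only the $S$ side, by a total-variation replacement rather than a mixture} (the approach of \citet{dwork2010boosting}). For each $i$ and each history, write $P_i,Q_i$ for the conditional laws of $M_i(S),M_i(S')$. Find $P'_i$ with $\mathrm{TV}(P_i,P'_i)\le\delta$ and $e^{-\eps}Q_i\le P'_i\le e^{\eps}Q_i$, against the \emph{unmodified} $Q_i$. To construct it, truncate $P_i$ into $[e^{-\eps}Q_i,e^{\eps}Q_i]$ and rebalance the mass. This costs
$\max\{\sum(P_i-e^\eps Q_i)_+,\ e^{-\eps}\sum(Q_i-e^\eps P_i)_+\}\le\delta$.
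Run the KL and Azuma argument on the loss between this hybrid and the real $M(S')$. The only additive cost is then the $t\delta$ from the hybrid on the $S$ side.
\item[(ii)] \emph{Keep a two-sided mixture, but with matched weights.} Use the same weight $1-\delta$ on both sides, with $\tilde P_i,\tilde Q_i$ pure-$\eps$-indistinguishable. Such a decomposition exists, but it is a nontrivial lemma. Then use the measure domination $(1-\delta)^t\tilde M(S')\le M(S')$ to get
$\Pr[M(S)\in A]\le(1-\delta)^t\Pr[\tilde M(S)\in A]+t\delta\le e^{\eps'}\Pr[M(S')\in A]+\delta'+t\delta$.
\end{enumerate}
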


\begin{theorem}[Exponential Mechanism \citep{mcsherry2007mechanism}]\label{thm:exp-mech}
    Let $\eps,\Delta>0$, $\beta \in (0,1)$ be parameters. Let $\cU$ be an arbitrary domain and $\cY$ be a finite nonempty domain, and $S\in \cU^*$ be an arbitrary dataset. Let $\cQ: \cU^* \times \cY \rightarrow \mathbb{R}$ be a quality function with sensitivity $\Delta$. Then there is an algorithm, called the Exponential Mechanism, that is $(\varepsilon, 0)$-DP with the following property. For every $S \in \cU^*$, it outputs an element $\hat{y} \in \cY$ satisfying
    \begin{equation*}
        \cQ(S,\hat{y}) \geq \max_{y \in \cY} \cQ(S,y) - \frac{2\Delta \log(|\cY|/\beta)}{\eps}
    \end{equation*}
    with probability at least $1 - \beta$.
\end{theorem}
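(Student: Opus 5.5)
The statement just before this point is Theorem~\ref{thm:exp-mech}, the Exponential Mechanism of \citet{mcsherry2007mechanism}. I will prove it with the standard construction.

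\emph{Mechanism.} On input $S$, output $y\in\cY$ with probability
\[
p_S(y)\propto \exp\!\left(\frac{\eps\,\cQ(S,y)}{2\Delta}\right).
\]
This is well defined because $\cY$ is finite and nonempty.

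\emph{Privacy.} Fix neighboring datasets $S,S'$. Sensitivity $\Delta$ gives $|\cQ(S,y)-\cQ(S',y)|\le\Delta$ for every $y$. So each unnormalized weight changes by a factor of at most $e^{\eps/2}$ between $S$ and $S'$. The normalizing sum $Z_S=\sum_{y}\exp(\eps\cQ(S,y)/2\Delta)$ is a sum of such weights, so it also changes by a factor of at most $e^{\eps/2}$. Combining the two bounds, $p_S(y)\le e^{\eps}p_{S'}(y)$ pointwise. Summing over any set $A$ gives $(\eps,0)$-DP.

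\emph{Utility.} Let $\mathrm{OPT}=\max_{y}\cQ(S,y)$ and $t=\frac{2\Delta\log(|\cY|/\beta)}{\eps}$. Call $y$ bad if $\cQ(S,y)<\mathrm{OPT}-t$.
\begin{itemize}
\item Each bad $y$ has unnormalized weight below $\exp(\eps(\mathrm{OPT}-t)/2\Delta)$.
\item There are at most $|\cY|$ bad points.
\item $Z_S\ge\exp(\eps\,\mathrm{OPT}/2\Delta)$, since the maximizer alone contributes that much.
\end{itemize}
Hence
\[
\Pr[\text{bad}]\le |\cY|\,e^{-\eps t/(2\Delta)} = |\cY|\cdot\frac{\beta}{|\cY|}=\beta .
\]
With probability at least $1-\beta$ the output satisfies $\cQ(S,\hat y)\ge \mathrm{OPT}-t$, which is the stated bound.

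The only point needing care is that the normalizing constant is used in both directions of the privacy argument. This is why the exponent uses $2\Delta$ rather than $\Delta$, and it gives the factor $2$ in the error term. Nothing else is delicate.
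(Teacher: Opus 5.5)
Your proof is correct, and it is the standard McSherry--Talwar argument; the paper gives no proof of its own here and only cites the result, so there is nothing further to compare. One cosmetic remark: if $\log$ is read as base $2$ (as in the paper's iterated-log appendix), your final equality becomes the inequality $|\cY|\,e^{-\log_2(|\cY|/\beta)}\le\beta$, because $\log_2 x\ge\ln x$ for $x=|\cY|/\beta\ge 1$, so the conclusion is unaffected.
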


\begin{definition}[Truncated Laplace distribution]
\label{def:truncated-laplace}
Let $s,\eps>0$ and $\delta\in(0,1/2)$, and set
\mbox{$R=\frac{s}{\eps}\log\left(1+\frac{e^{\eps}-1}{2\delta}\right)$}.
Then, the Truncated Laplace distribution is denoted by $\operatorname{TLap}(s,\eps,\delta)$ with the following density:
\[
    p(z)=
    \begin{cases}
        \displaystyle
        \frac{\eps}
        {2s(1-e^{-\eps R/s})}
        e^{-\eps|z|/s},
        & |z|\leq R,\\
        0,&\text{otherwise}.
    \end{cases}
\]
\end{definition}

\begin{lemma}[Truncated Laplace mechanism,
Theorem 1 of \citet{geng2020tight}]
\label{lem:truncated-laplace}
Let $\delta\in (0,1)$, $\eps>0$, and $q:\cU^*\to\bR$ have sensitivity at most $s$.
If $Z\sim\operatorname{TLap}(s,\eps,\delta)$,
then $q(S)+Z$ is $(\eps,\delta)$-DP.
Moreover, $|Z|\leq R$ almost surely.
\end{lemma}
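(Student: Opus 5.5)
The bound $|Z|\leq R$ holds by construction, because the density $p$ in Definition~\ref{def:truncated-laplace} vanishes outside $[-R,R]$. For privacy, I would fix neighboring datasets $S,S'$ and set $a=q(S)$, $b=q(S')$, $t=b-a$, so that $|t|\leq s$. Fix a measurable set $A\subseteq\bR$. The plan is to compare
\[
\Pr[a+Z\in A]=\int_{A-a}p(z)\,dz
\qquad\text{with}\qquad
\Pr[b+Z\in A]=\int_{A-a}p(z-t)\,dz,
\]
where the second form comes from the change of variables $z\mapsto z-t$. I would split the domain of integration $(A-a)\cap[-R,R]$ into two parts: the points $z$ with $z-t\in[-R,R]$, and the boundary strip where $z-t\notin[-R,R]$.

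\textbf{Interior part.} Both $z$ and $z-t$ lie in the support, so the densities have the same normalizing constant. Hence
\[
p(z)/p(z-t)=e^{-\eps(|z|-|z-t|)/s}\leq e^{\eps|t|/s}\leq e^{\eps}.
\]
Integrating over this part gives at most $e^{\eps}\Pr[b+Z\in A]$.

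\textbf{Boundary strip.} This is where the choice of $R$ enters. Without loss of generality take $t\geq0$. Then the strip is contained in $[-R,-R+t)\subseteq[-R,-R+s)$, so by symmetry its $p$-mass is at most $\int_{R-s}^{R}p(z)\,dz$. With $c=\frac{\eps}{2s(1-e^{-\eps R/s})}$, I would compute
\[
\int_{R-s}^{R}p(z)\,dz
= c\cdot\frac{s}{\eps}\left(e^{-\eps(R-s)/s}-e^{-\eps R/s}\right)
=\frac{e^{\eps}-1}{2\left(e^{\eps R/s}-1\right)}.
\]
Substituting $e^{\eps R/s}=1+\frac{e^{\eps}-1}{2\delta}$, which is exactly the defining choice of $R$, shows that this mass equals $\delta$.

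\textbf{Conclusion.} Adding the two parts gives $\Pr[q(S)+Z\in A]\leq e^{\eps}\Pr[q(S')+Z\in A]+\delta$. The argument is symmetric in $S$ and $S'$, so this establishes $(\eps,\delta)$-DP. The only step that requires care is the boundary-mass calculation, and the definition of $R$ is tailored so that this mass is exactly $\delta$. Everything else is the standard Laplace density-ratio argument.
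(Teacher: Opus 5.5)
Your proof is correct. It is the standard argument behind Theorem~1 of \citet{geng2020tight}, which the paper cites rather than reproves: the density ratio is at most $e^{\eps}$ wherever both shifted densities are supported, and the choice of $R$ makes the mass of the width-$s$ boundary strip exactly $\delta$.

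One step is implicit. Your closed form for $\int_{R-s}^{R}p(z)\,dz$ uses $|z|=z$ on $[R-s,R]$, i.e.\ $R\geq s$, which is equivalent to $\delta\leq 1/2$. The restriction $\delta\in(0,1/2)$ in Definition~\ref{def:truncated-laplace} guarantees this, but the lemma's stated range $\delta\in(0,1)$ does not. To cover larger $\delta$, replace the equality by the pointwise bound $p(z)\leq c\,e^{-\eps z/s}$, which still gives strip mass at most $\delta$.
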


\begin{theorem}[GAP-MAX, additive-error form of Theorem IV.6. of \cite{bun2021private}] \label{thm:gap-max}
Let $\eps>0,\beta,\delta\in(0,1)$ and $t\geq 1$ be public parameters, let $\cU$ be an arbitrary domain, and let $\cY$ be a finite nonempty domain. Let $\score\colon \cU^* \times \cY \to \bR$ be a sensitivity-1 function with respect to its first argument. Then there is an algorithm, called the GAP-MAX algorithm, that is $(\eps,\delta)$-DP with the following property. For every $S \in \cU^*$ and $\Delta>0$, if
\begin{equation*}
    \left|\left\{y \in \cY: \score(S,y) \geq \sup_{y \in \cY} \score(S,y) - 5\Delta\right\}\right| \leq t
\end{equation*}
then the output of the algorithm $\hat{y}=\text{GAP-MAX}(\cY,S,\score,\Delta,\beta,\eps,\delta,t)$ satisfies:
\begin{align*}
\prob{\score(S,\hat{y}) \geq \sup_{y \in \cY}\score(S,y) - \Delta} \geq 1-\beta
\end{align*}
provided $\Delta = \Omega\left(1+\frac{\min\{\log|\cY|,\log(1/\delta)\}+\log(t/\beta)}{\eps}\right)$. 
\end{theorem}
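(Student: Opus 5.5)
The plan is to obtain the statement as a direct specialization of the gap-based GAP-MAX guarantee of \citet{bun2021private}, supplemented by the exponential mechanism for the $\log|\cY|$ branch of the minimum. I recall Theorem IV.6 of \citet{bun2021private} in its native form, taking its parameters as given rather than re-deriving them. There is an $(\eps,\delta)$-DP algorithm which, whenever the gap $\score(S,y_{(1)})-\score(S,y_{(t+1)})$ between the largest and the $(t+1)$-st largest scores is at least some $\gamma=\Omega\bigl(1+(\log(1/\delta)+\log(t/\beta))/\eps\bigr)$, outputs with probability $1-\beta$ a point whose score is at least $\score(S,y_{(1)})-\gamma$. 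If $|\cY|\le t$, the gap is vacuous and the exponential mechanism alone suffices. The work is to translate the ``at most $t$ near-maximizers'' hypothesis into this gap hypothesis, with constants chosen so that $\gamma\le\Delta$.

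The key steps are as follows. First, since $\cY$ is finite, the supremum is a maximum $\mathrm{OPT}$. The hypothesis says that at most $t$ elements have score at least $\mathrm{OPT}-5\Delta$. Therefore, if $|\cY|>t$, the $(t+1)$-st largest score is strictly below $\mathrm{OPT}-5\Delta$, so the gap exceeds $5\Delta\ge\Delta$. Second, I set the gap parameter of Bun et al.\ to $\gamma=\Delta$. This is admissible because $\Delta$ is assumed to be $\Omega(1+(\log(1/\delta)+\log(t/\beta))/\eps)$ with a large enough hidden constant. Their theorem then yields $\score(S,\hat y)\ge\mathrm{OPT}-\Delta$ with probability $1-\beta$. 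The factor $5$ is slack: it only needs to be at least the constant multiple of $\gamma$ that their gap condition requires, so whatever their exact constant is, it is absorbed here. Third, for the branch where $\log|\cY|<\log(1/\delta)$, I run the exponential mechanism (Theorem~\ref{thm:exp-mech}) instead. It is $(\eps,0)$-DP and has error $2\log(|\cY|/\beta)/\eps\le\Delta$ regardless of the gap. Fourth, the choice between the two algorithms depends only on the public quantities $|\cY|$, $\delta$ and $t$, so privacy is preserved. This explains the $\min$ in the stated requirement.

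The main obstacle is the bookkeeping of constants and of the regime in the original statement. Bun et al.\ phrase their result with a specific gap threshold, and possibly for $\score$ taking integer values or normalized differently. I would first check that their threshold scales as claimed, and that the additive ``$1+$'' term covers the rounding or discretization they use. If their theorem is stated only for the exact top-$t$ gap with a multiplicative constant $c$, I would instead set $\gamma=5\Delta/c$ and choose the hidden constant in the $\Omega$ so that $\gamma\le\Delta$.
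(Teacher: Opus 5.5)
Your proposal is correct and follows essentially the paper's route. The paper gives no independent proof of this statement; it imports it from Theorem IV.6 of \citet{bun2021private} as a black box, exactly as you do, and your translation of the ``at most $t$ candidates within $5\Delta$ of optimal'' hypothesis into a top-versus-$(t+1)$-st gap, together with the public switch to the exponential mechanism when $|\cY|\le t$ or $\log|\cY|<\log(1/\delta)$, is sound bookkeeping on top of that same citation.
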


\subsection{Convex analysis}
\begin{theorem}[Helly's theorem~\cite{helly1923mengen}]\label{thm:helly}
    Let $\{S_i\}_{i\in \cI} \subseteq \bR^d$ be a finite family of convex sets. If for every subset of indices $\cJ \subseteq \cI$ of size at most $|\cJ|\leq d+1$ we have a non-empty intersection $\bigcap_{j \in \cJ} S_j \neq \emptyset$, then the intersection of all of the sets is also non-empty $\bigcap_{i \in \cI} S_i \neq \emptyset$.
\end{theorem}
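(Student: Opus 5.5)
We prove Helly's theorem by induction on the number of sets $m=|\cI|$, using Radon's lemma as the only geometric input. The lemma says that any $m\geq d+2$ points $x_1,\dots,x_m\in\bR^d$ admit a partition $[m]=I_1\cup I_2$ with $\CH(\{x_i\}_{i\in I_1})\cap\CH(\{x_j\}_{j\in I_2})\neq\emptyset$. We establish it first.

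\emph{Proof of Radon's lemma.} Consider the homogeneous linear system
\[
\sum_{i=1}^m \mu_i x_i=0,\qquad \sum_{i=1}^m\mu_i=0 .
\]
It has $d+1$ equations and $m\geq d+2$ unknowns, so it has a nontrivial solution $\mu$. Set $I_1=\{i:\mu_i>0\}$ and $I_2=\{i:\mu_i\leq 0\}$. Since $\sum_i\mu_i=0$ and $\mu\neq 0$, both $I_1$ and the set of strictly negative coordinates are nonempty. Let $W=\sum_{i\in I_1}\mu_i=-\sum_{j\in I_2}\mu_j>0$. Then the point
\[
p=\sum_{i\in I_1}\frac{\mu_i}{W}x_i=\sum_{j\in I_2}\frac{-\mu_j}{W}x_j
\]
is a convex combination on both sides, so $p$ lies in both convex hulls. (Zero coefficients on the $I_2$ side are simply dropped, which is compatible with the paper's convention of positive weights.)

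\emph{Induction.} If $m\leq d+1$, the hypothesis applied to $\cJ=\cI$ gives the claim directly. Now let $m\geq d+2$ and assume the statement holds for all families of $m-1$ convex sets. Fix $k\in\cI$. Every subfamily of size at most $d+1$ of $\{S_i\}_{i\neq k}$ is also a subfamily of the original family, so it has nonempty intersection. By the induction hypothesis, $\bigcap_{i\neq k}S_i\neq\emptyset$; choose a point $x_k$ in it.

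This gives $m\geq d+2$ points. Radon's lemma yields a partition $\cI=I_1\cup I_2$ and a point $p$ in both $\CH(\{x_i\}_{i\in I_1})$ and $\CH(\{x_j\}_{j\in I_2})$.

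We claim $p\in S_k$ for every $k\in\cI$. Suppose $k\in I_1$. For every $j\in I_2$ we have $j\neq k$, hence $x_j\in S_k$ by the choice of $x_j$. Since $S_k$ is convex, $p\in\CH(\{x_j\}_{j\in I_2})\subseteq S_k$. The case $k\in I_2$ is symmetric, using the $I_1$ side. Therefore $p\in\bigcap_{i\in\cI}S_i$, which completes the induction.

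The only step with real content is Radon's lemma, and within it the key point is writing the affine dependence as a linear system with more unknowns than equations. The induction step is then a routine convexity argument.
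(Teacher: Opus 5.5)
Your proof is correct. The paper gives no proof of this statement: it quotes Helly's theorem as a classical result with a citation, and uses it only once, in the proof of Lemma~\ref{thm:good-score}.

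Your argument is the standard one. You obtain Radon's lemma from a nontrivial solution of the $(d+1)$-equation homogeneous system in $m\geq d+2$ unknowns, then induct on the number of sets and apply Radon to the points $x_k\in\bigcap_{i\neq k}S_i$. The details that matter are all in place:
\begin{itemize}
\item both parts of the Radon partition are nonempty, because $\sum_i\mu_i=0$ with $\mu\neq 0$;
\item repeated points $x_k$ cause no trouble, since the linear-algebra argument works for indexed points;
\item dropping zero weights reconciles the construction with the paper's positive-weight definition of $\CH$.
\end{itemize}
Your version is self-contained and covers exactly the finite-family form the paper needs. In Lemma~\ref{thm:good-score} the sets are indexed by $N^{\downarrow}_{\lambda}(S)$, which is finite because $S$ is finite, so no compactness assumption is required.
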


\begin{theorem}[Center Point Theorem~\cite{rado1946measure}]\label{thm:center-point-org}
Let $S\in(\bR^d)^*$ be a nonempty dataset.
Then there exists a point $y\in\CH(S)$ such that
\[
    \TD(S,y)\geq\frac{|S|}{d+1}.
\]
Equivalently, every closed halfspace containing $y$
contains at least $|S|/(d+1)$ points of $S$.
\end{theorem}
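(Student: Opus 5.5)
The plan is to derive the Center Point Theorem from Helly's theorem (Theorem~\ref{thm:helly}). Write $n=|S|$, counting points with multiplicity. Let $\mathcal{H}$ be the collection of closed halfspaces $H$ with $|S\cap H|>\frac{dn}{d+1}$. To each such $H$ attach the convex set $C_H=\CH(S\cap H)$. Since $S$ is finite, there are only finitely many distinct sets $S\cap H$, so $\{C_H\}_{H\in\mathcal{H}}$ is a finite family of convex sets in $\bR^d$. The family is nonempty: a closed halfspace containing all of $S$ lies in $\mathcal{H}$ and contributes $C_H=\CH(S)$.

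The first step is to verify the Helly hypothesis. Take any $H_1,\dots,H_{k}\in\mathcal{H}$ with $k\leq d+1$. Each complement $\bR^d\setminus H_j$ contains fewer than $n-\frac{dn}{d+1}=\frac{n}{d+1}$ points of $S$. Hence the union of the $k$ complements contains fewer than $k\cdot\frac{n}{d+1}\leq n$ points of $S$, counting multiplicity. So some point $x\in S$ lies in every $H_j$, and therefore $x\in\bigcap_j C_{H_j}$. Helly's theorem then yields a point $y\in\bigcap_{H\in\mathcal{H}}C_H$. In particular $y\in\CH(S)$.

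The second step is to show $\TD(S,y)\geq\frac{n}{d+1}$. Suppose instead that some closed halfspace $H'=\{x:\langle a,x\rangle\geq b\}$ contains $y$ and satisfies $|S\cap H'|<\frac{n}{d+1}$. Then the open halfspace $\{x:\langle a,x\rangle<b\}$ contains more than $\frac{dn}{d+1}$ points of $S$. Because $S$ is finite, we can pick $b'<b$ with $\langle a,x\rangle\leq b'$ for all of these points. The closed halfspace $H''=\{x:\langle a,x\rangle\leq b'\}$ then lies in $\mathcal{H}$. It follows that $y\in C_{H''}\subseteq H''$, so $\langle a,y\rangle\leq b'<b$. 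This contradicts $y\in H'$.

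The main subtlety is this last closed-versus-open issue. The depth is defined through closed halfspaces, whereas the complementary region is open; finiteness of $S$ resolves it through the strict shift from $b$ to $b'$. The other point needing care is the multiplicity counting in the Helly step, where the strict inequality is what guarantees a common point.
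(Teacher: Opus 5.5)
The paper gives no proof of this statement; it is quoted as a classical result of Rado in the preliminaries. Your argument is the standard Helly-based proof, and it is correct.

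The key steps all hold:
\begin{itemize}
\item The family $\{\CH(S\cap H)\}_{H\in\mathcal H}$ is finite and nonempty.
\item The union bound over at most $d+1$ complements, each holding strictly fewer than $n/(d+1)$ points, gives the Helly hypothesis with multiplicities respected.
\item The shift from $b$ to some $b'<b$ correctly handles the passage from the closed halfspace $H'$ to the complementary open halfspace.
\end{itemize}

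The paper's own Helly argument in Lemma~\ref{thm:good-score} uses the same counting template: any $d+1$ large subdatasets still share at least $h$ points. Specialized to internal functions via Lemma~\ref{lemma:depth-td} with $h=1$, however, it would only give $\TD(S,y)\geq(|S|-1)/(d+1)$. Your direct argument yields the sharp $|S|/(d+1)$ bound, which the paper later uses in the proofs of Corollaries~\ref{lemma:tm-improved} and~\ref{cor:user-level-center}.
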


\section{Omitted proofs from section~\ref{sec:intro-black-box}}
\begin{claim}\label{proof:claim:g-monotone}
    Proof of Claim~\ref{claim:g-monotone}.
\end{claim}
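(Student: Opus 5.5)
The plan is to prove Claim~\ref{claim:g-monotone} by first showing that the index sets themselves are nested, and then pushing this inclusion through the image under $f$ and the convex hull. Both of those operations preserve inclusion, so the claim follows in three short steps.

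\textbf{Step 1 (nested neighborhoods).} Fix $S_1\subseteq S_2$ and $h\in\bN$. Take any $S'\in M^{\downarrow}_{h}(S_1)$. By definition, $S'\subseteq S_1$ and $|S'|\geq h$. Transitivity of multiset inclusion gives $S'\subseteq S_2$, and the size condition does not involve the ambient set. Hence $S'\in M^{\downarrow}_{h}(S_2)$, so
\[
M^{\downarrow}_{h}(S_1)\subseteq M^{\downarrow}_{h}(S_2).
\]

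\textbf{Step 2 (images).} Using the paper's notation $f(M^{\downarrow}_{h}(S))=\bigcup_{S'\in M^{\downarrow}_{h}(S)} f(S')$, a union over a smaller index family is contained in the union over a larger one. Therefore
\[
f\left(M^{\downarrow}_{h}(S_1)\right)\subseteq f\left(M^{\downarrow}_{h}(S_2)\right).
\]

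\textbf{Step 3 (convex hulls).} The convex hull is monotone under inclusion. Indeed, any finite convex combination $\sum_i w_i x_i$ with every $x_i$ in the smaller set is also a convex combination of points of the larger set. Applying this to the inclusion from Step 2 gives $\CDI^f_h(S_1)\subseteq\CDI^f_h(S_2)$.

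There is no real obstacle here; the argument only unwinds definitions. The one point needing care is that subsets are taken as multisets, since datasets in $\cU^*$ are multisets. The inclusion $S'\subseteq S_1\subseteq S_2$ must therefore be read with multiplicities, and transitivity still holds in that setting. A possibly empty $M^{\downarrow}_{h}(S_1)$, when $|S_1|<h$, is harmless: its image and hull are then empty, and the empty set is contained in anything.
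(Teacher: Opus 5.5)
Your proof is correct and follows the same route as the paper: nest the capped down-neighborhoods, pass the inclusion through the union that defines the image, and then through the convex hull. Your remark on the empty case $|S_1|<h$ is a small improvement over the paper's write-up, which restricts to $h\in[|S|]$ without comment.
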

\begin{proof} 
    Note that, if $S_1\subseteq S_2$, then for every $h\in[|S|]$ we have $M^{\downarrow}_{h}(S_1)\subseteq M^{\downarrow}_{h}(S_2)$ by definition, and $\left(\bigcup_{S'\in M^{\downarrow}_{h}(S_1)} f(S')\right) \subseteq \left(\bigcup_{S'\in M^{\downarrow}_{h}(S_2)} f(S')\right)$. Therefore, the convex hull of the left-hand side is contained in the convex hull of the right-hand side.
\end{proof}

\begin{claim}\label{proof:claim:generic-opt-black-box}
    Proof of Claim~\ref{claim:generic-opt-black-box}.
\end{claim}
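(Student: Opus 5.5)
The plan is to work on the high-probability event of Definition~\ref{def:black-box}: with probability at least $1-\beta$ the evaluator's output satisfies $\hat y\in\CH(f(N^{\downarrow}_{\lambda}(S)))$. This event depends only on $f$, $S$ and the randomness of $\cA$, not on $\cQ$ or $\cL$. So once we fix one realization on this event, both conclusions (1) and (2) hold for \emph{every} quasi-concave $\cQ$ and \emph{every} convex $\cL$ at once, and no union bound is needed. On this event we write $\hat y=\sum_{i=1}^k w_i f(S_i)$ with $S_i\in N^{\downarrow}_{\lambda}(S)$, $w_i>0$ and $\sum_i w_i=1$. This finite representation exists by the definition of the convex hull used in the paper.

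For (1), fix $S$ and consider $g(\cdot)=\cQ(S,\cdot)$, which is quasi-concave on $\bR^d$. Take $\cA_0=\{f(S_1),\dots,f(S_k)\}\subseteq\bR^d$. Then $\hat y\in\CH(\cA_0)$, and the definition of quasi-concavity, applied with domain $\bR^d$, gives $g(\hat y)\ge\min_{z\in\cA_0}g(z)$. Each $z\in\cA_0$ has the form $f(S_i)$ with $S_i\in N^{\downarrow}_{\lambda}(S)$. Hence $\min_{z\in\cA_0}g(z)\ge\min_{S'\in N^{\downarrow}_{\lambda}(S)}\cQ(S,f(S'))$, and this minimum exists because $N^{\downarrow}_{\lambda}(S)$ is finite.

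For (2), we use Jensen's inequality for finite convex combinations, which follows from convexity by induction on $k$. It gives $\cL(S,\hat y)\le\sum_i w_i\cL(S,f(S_i))\le\max_i\cL(S,f(S_i))\le\max_{S'\in N^{\downarrow}_{\lambda}(S)}\cL(S,f(S'))$. Alternatively, (2) follows from (1) by taking $\cQ=-\cL$, since convex functions have convex sublevel sets and so their negatives are quasi-concave.

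No step here is a real obstacle. The only delicate point is the order of quantifiers: we must make clear that the single probability-$(1-\beta)$ event is fixed before $\cQ$ and $\cL$ are chosen. We also need the quasi-concavity definition to apply with $\cY=\bR^d$, so that $\hat y$, which may lie outside the finite domain $\cY$, is covered.
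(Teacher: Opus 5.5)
Your proposal is correct and follows essentially the same route as the paper. Both arguments work on the event $\hat y\in\CH(f(N^{\downarrow}_{\lambda}(S)))$, write $\hat y$ as a finite convex combination of values $f(S_i)$ with $S_i\in N^{\downarrow}_{\lambda}(S)$, and then apply quasi-concavity of $\cQ(S,\cdot)$ for (1) and Jensen's inequality for $\cL(S,\cdot)$ for (2). Your explicit remark that this single event does not depend on $\cQ$ or $\cL$ is what justifies the word ``simultaneously''; the paper leaves this point implicit, so stating it is a useful clarification.
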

\begin{proof}
By the definition of a $(\lambda,\beta)$-black-box evaluator, with probability at least $1-\beta$, we have:
\begin{align*}
\hat{y}:=\cA(f,S)\in\CH\left(f\left(N^{\downarrow}_{\lambda}(S)\right)\right).
\end{align*}
By the definition of the convex hull, there exist $S_1,\cdots,S_k\in N^{\downarrow}_{\lambda}(S)$ and coefficients $\lambda_1,\cdots,\lambda_k\geq 0$ such that $\sum_{i\in[k]}\lambda_i=1$ and
$\hat{y}=\sum_{i\in[k]}\lambda_i f(S_i)$.
Since $\cQ(S,.)$ is quasi-concave, we have:
\begin{align*}
\cQ(S,\hat{y})
&=\cQ\left(S,\sum_{i\in[k]}\lambda_i f(S_i)\right)\\
&\geq\min_{i\in[k]}\cQ(S,f(S_i))\\
&\geq\min_{S'\in N^{\downarrow}_{\lambda}(S)}\cQ(S,f(S')).
\end{align*}
Similarly, since $\cL(S,.)$ is convex, we have:
\begin{align*}
\cL(S,\hat{y})
&=\cL\left(S,\sum_{i\in[k]}\lambda_i f(S_i)\right)\\
&\leq\sum_{i\in[k]}\lambda_i\cL(S,f(S_i))\\
&\leq\max_{i\in[k]}\cL(S,f(S_i))\\
&\leq\max_{S'\in N^{\downarrow}_{\lambda}(S)}\cL(S,f(S')).
\end{align*}
\end{proof}

\section{Omitted proofs from section~\ref{sec:black-box-eval-qco}}\label{proofs:sec:black-box-eval-qco}

\begin{claim}\label{proof:claim:depth-monotone-h}
    Proof of Claim~\ref{claim:depth-monotone-h}.
\end{claim}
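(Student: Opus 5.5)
The plan is to show that the witnessing family in the definition of $\depth^f_{h}$ only grows as $h$ decreases, by comparing the capped down-neighborhoods directly. The starting point is set inclusion: for $h\le h'$ and any dataset $T$,
\[
M^{\downarrow}_{h'}(T)=\{T'\subseteq T:|T'|\geq h'\}\subseteq\{T'\subseteq T:|T'|\geq h\}=M^{\downarrow}_{h}(T).
\]
Applying $f$ keeps this inclusion, so $f(M^{\downarrow}_{h'}(T))\subseteq f(M^{\downarrow}_{h}(T))$. Taking convex hulls, which is monotone under inclusion, gives $\CDI^f_{h'}(T)\subseteq\CDI^f_h(T)$ for every $T\in\cU^*$.

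Next, I would use this with $T=S\setminus S'$ for an arbitrary $S'\subseteq S$. If $y\notin\CDI^f_h(S\setminus S')$, then $y\notin\CDI^f_{h'}(S\setminus S')$ by the inclusion above. So the family over which $\depth^f_h(S,y)$ takes its minimum,
\[
\{S'\subseteq S: y\notin\CDI^f_h(S\setminus S')\},
\]
is contained in the corresponding family for $h'$. A minimum of $|S'|$ over a larger family can only be smaller, so $\depth^f_{h'}(S,y)\le\depth^f_h(S,y)$.

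The only thing to check is that these minima exist. Taking $S'=S$ leaves $S\setminus S'=\emptyset$. If $h\geq 1$, then $M^{\downarrow}_h(\emptyset)$ is empty, its convex hull is empty, and so every $y$ lies outside it. Hence both families are nonempty and both minima are finite integers. If $h=0$ is allowed and the $h$-family is empty, I would adopt the convention $\min\emptyset=+\infty$, and the inequality still holds trivially. I do not expect any real obstacle: the argument is pure monotonicity, and the one point needing care is this empty-set convention.
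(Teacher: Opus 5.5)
Your proposal is correct and follows essentially the same argument as the paper: the inclusion $M^{\downarrow}_{h'}(T)\subseteq M^{\downarrow}_h(T)$ gives $\CDI^f_{h'}(T)\subseteq\CDI^f_h(T)$, so every deletion witnessing depth at $h$ also witnesses it at $h'$, and minimizing over the larger family gives $\depth^f_{h'}(S,y)\le\depth^f_h(S,y)$. Your extra check that the minima exist, using $S'=S$ for $h\ge1$ and the convention $\min\emptyset=+\infty$ for $h=0$, is a harmless refinement that the paper leaves implicit.
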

\begin{proof}
For every dataset $T\in\cU^*$, we have
$M^{\downarrow}_{h'}(T)\subseteq M^{\downarrow}_h(T)$,
and hence $\CDI_{h'}^f(T)\subseteq\CDI_h^f(T)$.
Therefore, every $S'\subseteq S$ satisfying
$y\notin\CDI_h^f(S\setminus S')$ also satisfies
$y\notin\CDI_{h'}^f(S\setminus S')$.
Taking the minimum of $|S'|$ over such deletions gives
$\depth_{h'}^f(S,y)\leq\depth_h^f(S,y)$.
\end{proof}

\begin{claim}\label{proof:claim:depth-quasi}
    Proof of Claim~\ref{claim:depth-quasi}.
\end{claim}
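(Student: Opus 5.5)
We need to show that for fixed $S$ and $h$, the function $\depth^f_h(S,\cdot)$ is quasi-concave. By the paper's definition, this means: for every set $\cA$ of points and every $y\in\CH(\cA)$, we have $\depth^f_h(S,y)\geq\min_{z\in\cA}\depth^f_h(S,z)$. The plan is to reason directly with the deletion sets in Definition~\ref{def:depth} and argue by contraposition.

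Let $k=\depth^f_h(S,y)$. By definition there is a deletion set $S'\subseteq S$ with $|S'|=k$ and $y\notin\CDI^f_h(S\setminus S')$. (The set of deletions removing $y$ from the CDI is nonempty: taking $S'=S$ leaves no subsets of size at least $h$ once $h\geq1$. If it were empty, the depth would be $+\infty$ by convention and the inequality is trivial.)

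The key observation is that $\CDI^f_h(S\setminus S')$ is a convex set. Since $y\in\CH(\cA)$ but $y\notin\CDI^f_h(S\setminus S')$, not every $z\in\cA$ can lie in $\CDI^f_h(S\setminus S')$. Otherwise $\CH(\cA)\subseteq\CDI^f_h(S\setminus S')$ by convexity, which would force $y$ into the CDI. So some $z\in\cA$ satisfies $z\notin\CDI^f_h(S\setminus S')$. The same deletion set $S'$ then witnesses $\depth^f_h(S,z)\leq|S'|=k$, and hence
\[
\min_{z'\in\cA}\depth^f_h(S,z')\leq\depth^f_h(S,z)\leq k=\depth^f_h(S,y),
\]
which is exactly quasi-concavity.

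Two small points need checking. First, the domain issue: the definition is stated on a discrete domain, and we take $y\in\CH(\cA)\cap\cY$ (or any $y\in\bR^d$). The argument never uses discreteness, so it works on any domain. Second, the degenerate cases: if $\cA$ is empty there is nothing to prove, and an empty CDI is still convex. The argument is essentially immediate; the only step requiring care is the convexity step, namely that membership in a convex set is inherited by convex combinations.
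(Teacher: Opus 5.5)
Your proof is correct and is essentially the paper's argument. Both take a minimum-size deletion set $S'$ witnessing the depth of the interior point, then use convexity of $\CDI^f_h(S\setminus S')$ to show that the generating points cannot all remain inside it. The differences are cosmetic but slightly in your favor: you argue directly rather than by contradiction, and you treat an arbitrary set $\cA$, matching the paper's set-based definition of quasi-concavity, whereas the paper proves only the two-point case $\lambda y_1+(1-\lambda)y_2$ on $\bR^d$ and leaves the extension to general convex hulls to the standard induction.
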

\begin{proof}
        Let's assume by contradiction that there are $y_1,y_2\in \bR^d$, and $\lambda \in [0,1]$ such that $t^* = \depth^f_h\left(S,\lambda y_1 + (1-\lambda) y_2\right) < \min\left\{\depth^f_h\left(S,y_1\right),\depth^f_h\left(S, y_2\right)\right\}$. This means that there exists a subset of points $S' \subseteq S$ of size $|S'|=t^*$ such that the point $\lambda y_1 + (1-\lambda) y_2$ would not be in $\CDI^f_h(S\setminus S')$. Since $t^* < \min\left\{\depth^f_h\left(S,y_1\right),\depth^f_h\left(S, y_2\right)\right\}$, we should still have $y_1,y_2\in \CDI^f_h(S\setminus S')$. Since $\CDI^f_h(S\setminus S')$ is a convex set, it should contain any linear combination of $y_1$ and $y_2$ which is a contradiction.
    \end{proof}

\begin{claim}\label{proof:claim:depth-sens}
   Proof of Claim~\ref{claim:depth-sens}. 
\end{claim}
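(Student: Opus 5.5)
The plan is to derive the general inequality $|\depth^f_h(S,y)-\depth^f_h(S',y)|\leq |S\setminus S'|+|S'\setminus S|$ first; the sensitivity-$1$ statement is then the special case of neighboring datasets. I would argue in two steps, via the intermediate set $S\cap S'$. The first step handles deletion: for $T\subseteq S$, show $\depth^f_h(S,y)-|S\setminus T|\leq\depth^f_h(T,y)\leq\depth^f_h(S,y)$. Applying this to both $S$ and $S'$ with $T=S\cap S'$ gives
\[
|\depth^f_h(S,y)-\depth^f_h(S',y)|\leq|\depth^f_h(S,y)-\depth^f_h(S\cap S',y)|+|\depth^f_h(S\cap S',y)-\depth^f_h(S',y)|,
\]
and the two terms on the right are at most $|S\setminus S'|$ and $|S'\setminus S|$ respectively, by the triangle inequality. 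Throughout, set differences are multiset differences, consistent with the paper's convention.

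For the upper bound $\depth^f_h(T,y)\leq\depth^f_h(S,y)$, I would take a minimizing deletion $D\subseteq S$ for $S$, so that $|D|=\depth^f_h(S,y)$ and $y\notin\CDI^f_h(S\setminus D)$. Then $D':=D\cap T$ is a valid deletion set for $T$, with $T\setminus D'\subseteq S\setminus D$. By the monotonicity of Claim~\ref{claim:g-monotone}, $\CDI^f_h(T\setminus D')\subseteq\CDI^f_h(S\setminus D)$, so $y\notin\CDI^f_h(T\setminus D')$. Hence $\depth^f_h(T,y)\leq|D'|\leq|D|$.

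For the lower bound, I would take a minimizing deletion $D'\subseteq T$ for $T$ and extend it to the deletion $D=D'\cup(S\setminus T)$ for $S$. Then $S\setminus D=T\setminus D'$, so $y\notin\CDI^f_h(S\setminus D)$, and therefore $\depth^f_h(S,y)\leq|D'|+|S\setminus T|$.

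The main obstacle is the edge case where the minimum in Definition~\ref{def:depth} is over an empty set. Since $\CDI^f_h(\emptyset)$ is empty for $h\geq1$, deleting all of $S$ is always feasible, so the minimum is well defined and at most $|S|$. I would check that this holds in both steps. The only other care needed is that the multiset bookkeeping is done consistently.
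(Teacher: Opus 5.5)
Your proposal is correct and follows the paper's proof essentially step for step. The paper also first shows, for $S_1\subseteq S_2$, that depth is monotone under inclusion (via Claim~\ref{claim:g-monotone}) and increases by at most $|S_2\setminus S_1|$ (by appending $S_2\setminus S_1$ to an optimal deletion set for $S_1$), and then combines these two bounds through $S\cap S'$. Two details in your version are slightly more careful than the paper, which leaves them implicit: you intersect the deletion set with $T$ so that it is a genuine subset of $T$, and you check that deleting all of $S$ is feasible when $h\geq 1$.
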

 \begin{proof}
         We first show that for every $y\in \bR^d$ and $S_1,S_2\subseteq \bR^d$ such that $S_1\subseteq S_2$, we have 
         \begin{enumerate}[label=\Roman*.]
             \item $\depth^f_h(S_1,y)\leq \depth^f_h(S_2,y)$
             \item $\depth^f_h(S_2,y)\leq \depth^f_h(S_1,y) + |S_2\setminus S_1|$
        \end{enumerate}
         
         To prove (I), let $\Bar{S}$ be a set of size $\depth^f_h(S_2,y)$ such that $y \notin \CDI^f_h(S_2\setminus \Bar{S})$. Since $(S_1\setminus \Bar{S}) \subseteq (S_2\setminus \Bar{S})$, using the monotonicity property of function $\CDI^f_h$ (Claim~\ref{claim:g-monotone}), we can say $\CDI^f_h(S_1\setminus \Bar{S}) \subseteq \CDI^f_h(S_2\setminus \Bar{S})$ which implies $y$ cannot be in $\CDI^f_h(S_1\setminus \Bar{S})$ either. Therefore, $\depth^f_h(S_1,y)\leq |\Bar{S}| = \depth^f_h(S_2,y)$.
         
        To prove (II), let $\Tilde{S}$ be a set of size $\depth^f_h(S_1,y)$ such that $y \notin \CDI^f_h(S_1\setminus \Tilde{S})$. 
        Note that $S_1\setminus \Tilde{S} = S_2 \setminus \left((S_2\setminus S_1)\cup \Tilde{S}\right)$, therefore $y\notin \CDI^f_h\left(S_2 \setminus \left((S_2\setminus S_1)\cup \Tilde{S}\right)\right)$, which means $\depth^f_h(S_2,y)\leq |(S_2\setminus S_1)\cup \Tilde{S}| = |S_2\setminus S_1|+ |\Tilde{S}| = |S_2\setminus S_1|+ \depth^f_h(S_1,y)$.
        
        Putting (I) and (II) together gives $\depth^f_h(S_1,y) \leq \depth^f_h(S_2,y) \leq \depth^f_h(S_1,y)+|S_2\setminus S_1|$ when $S_1 \subseteq S_2$.
        Now let $S,S'\subseteq \bR^d$ be two arbitrary sets. We have $(S\cap S') \subseteq S$, which implies $\depth^f_h(S\cap S',y) \leq \depth^f_h(S,y) \leq \depth^f_h(S\cap S',y)+|S\setminus (S\cap S')| = \depth^f_h(S\cap S',y)+|S\setminus S'|$. Similarly, $\depth^f_h(S\cap S',y) \leq \depth^f_h(S',y) \leq \depth^f_h(S\cap S',y)+|S'\setminus (S\cap S')| = \depth^f_h(S\cap S',y)+|S'\setminus S|$. Combining together, concludes the claim.
    \end{proof}

\begin{lemma}\label{proof:lemma:ext1}
    Proof of Lemma~\ref{lemma:ext1}.
\end{lemma}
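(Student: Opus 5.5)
The construction reads directly off Definition~\ref{def:extension}. For each family $\cI\subseteq 2^{\cY}$, write $\cA_{\cI}=\bigcap_{S_i\in\cI}\CH(S_i)$. Whenever $\cA_{\cI}\neq\emptyset$, we choose one representative point $y_{\cI}\in\cA_{\cI}$. We then set
\[
\Tilde{\cY}=\left\{y_{\cI}:\ \cI\subseteq 2^{\cY},\ \cA_{\cI}\neq\emptyset\right\}.
\]
If we want to be safe with the convention for the empty family, we either add one point of $\cY$ to $\Tilde{\cY}$ or exclude $\cI=\emptyset$; neither choice changes the size bound.

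We then check the two requirements of the definition.

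\emph{Containment in $\CH(\cY)$.} Take any $y_{\cI}$ with $\cI$ nonempty. Pick some $S_i\in\cI$. Then $y_{\cI}\in\CH(S_i)\subseteq\CH(\cY)$, since $S_i\subseteq\cY$ and the convex hull is monotone.

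\emph{The extension property.} Let $\cI$ be any family with $\cA_{\cI}\neq\emptyset$. By construction $y_{\cI}\in\Tilde{\cY}$ and $y_{\cI}\in\cA_{\cI}$, so $\cA_{\cI}\cap\Tilde{\cY}\neq\emptyset$.

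\emph{Size bound.} $\Tilde{\cY}$ has at most one point per family $\cI\subseteq 2^{\cY}$. Since $|2^{\cY}|=2^{|\cY|}$, the number of families is $2^{2^{|\cY|}}$, which gives $|\Tilde{\cY}|\le 2^{2^{|\cY|}}$.

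The only subtle step is the degenerate case $\cI=\emptyset$, where the empty intersection could be read as all of $\bR^d$. There a representative must be chosen inside $\CH(\cY)$, for example a point of $\cY$, which $\cY\neq\emptyset$ guarantees exists. If this requires one extra point beyond the $2^{2^{|\cY|}}$ families, we absorb it by noting that the family $\{\emptyset\}$ has empty intersection and so contributes no point. The count therefore still stays within $2^{2^{|\cY|}}$.
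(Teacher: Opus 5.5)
Your proposal is correct and follows the paper's proof: choose one representative point in each nonempty $\cA_{\cI}$, note that each representative lies in $\CH(\cY)$, and count at most one point per family $\cI\subseteq 2^{\cY}$ to get the bound $2^{2^{|\cY|}}$. The only difference is in the empty family. The paper observes that the singleton families $\{\{y\}\}$ force $\cY\subseteq\tilde{\cY}$, so $\tilde{\cY}\neq\emptyset$ with no extra point. Your explicit representative (within the count, with slack from $\{\emptyset\}$) works equally well.
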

\begin{proof}
        We construct $\Tilde{\cY}$ as follows.
For every nonempty family $\cI\subseteq 2^{\cY}$
such that $\cA_{\cI}\neq\emptyset$, add an arbitrary
member of $\cA_{\cI}$ to $\Tilde{\cY}$.
Every selected point belongs to $\CH(\cY)$.
By construction, every nonempty family with
$\cA_{\cI}\neq\emptyset$ satisfies
$\cA_{\cI}\cap\Tilde{\cY}\neq\emptyset$.
Moreover, taking $\cI=\{\{y\}\}$ for each $y\in\cY$
shows that $\cY\subseteq\Tilde{\cY}$.
Thus, $\Tilde{\cY}$ is nonempty, which also handles
the empty family $\cI=\emptyset$.
Finally, the number of selected points is at most
$2^{2^{|\cY|}}$.
\end{proof}

\begin{lemma}\label{proof:lemma:ext2}
    Proof of Lemma~\ref{lemma:ext2}.
\end{lemma}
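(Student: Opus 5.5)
The plan is to reduce the extension property to intersections of \emph{affine subspaces spanned by points of $\cY$}, exploiting the polyhedral structure of the sets involved. Each $\CH(S_i)$ with $S_i\subseteq\cY$ is a polytope whose vertices lie in $\cY$. Any nonempty intersection $\cA_\cI=\bigcap_{S_i\in\cI}\CH(S_i)$ is therefore a nonempty polytope. It is cut out by finitely many linear inequalities, each defining a facet hyperplane of some $\CH(S_i)$. Such a hyperplane is the affine hull of at most $d$ affinely independent points of $\cY$, so there are at most $\binom{|\cY|}{d}\le|\cY|^d$ candidate hyperplanes in total, independent of $\cI$. (Lower-dimensional $\CH(S_i)$ are handled by also including the affine hulls of subsets of $\cY$, which adds equalities of the same type.) The key claim is that every nonempty polytope $\cA_\cI$ has a vertex. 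This vertex is the unique point of an intersection of $d$ linearly independent constraint hyperplanes, each drawn from this fixed finite family $\cH$.

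I would therefore define $\tilde{\cY}$ as the set of all points $p\in\CH(\cY)$ that arise as the unique intersection point of some $d$ hyperplanes from $\cH$ with linearly independent normals. Points of $\cY$ themselves are included, or arise automatically. Membership $\tilde{\cY}\subseteq\CH(\cY)$ holds by discarding points outside $\CH(\cY)$. Every vertex of $\cA_\cI$ lies in $\CH(\cY)$, so nothing needed is lost.

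For correctness: $\cA_\cI$ is a nonempty bounded polyhedron in $\bR^d$, so it has an extreme point $v$. At an extreme point, the active constraints have rank $d$. Choose $d$ of them with independent normals; $v$ is their unique common point, hence $v\in\tilde{\cY}\cap\cA_\cI$. When some $\CH(S_i)$ is lower-dimensional, I would write it as an affine hull (equalities, each expressible via hyperplanes through points of $\cY$) intersected with relative facet inequalities. The relative facets extend to full hyperplanes spanned by $d$ points of $\cY$ after adding auxiliary points. To avoid needing points outside $\cY$, I would instead count the constraints as affine subspaces spanned by subsets of $\cY$ and take vertices as intersections of such flats that yield a single point. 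The rank argument still produces a vertex as the intersection of active flats.

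For the size bound, each point is determined by a choice of $d$ hyperplanes, each determined by at most $d$ points of $\cY$. This gives at most $(|\cY|^{d})^{d}=|\cY|^{d^2}$ points, up to the slack needed to absorb lower-order terms (e.g.\ allowing repeated points when fewer than $d$ are needed). The main obstacle is this bookkeeping for degenerate cases: lower-dimensional hulls, and facet hyperplanes not uniquely spanned by $d$ affinely independent points of $\cY$. Handling them while keeping the count at exactly $|\cY|^{d^2}$ is the delicate step. I would first try encoding every flat as the affine hull of an ordered $d$-tuple of $\cY$ with repetitions allowed. That gives $|\cY|^d$ flats and $|\cY|^{d^2}$ tuples of $d$ flats.
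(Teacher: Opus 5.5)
Your eventual construction matches the paper's exactly: flats $\operatorname{aff}(\{y_1,\ldots,y_d\})$ over ordered $d$-tuples of $\cY$ with repetitions, and $\tilde{\cY}$ the points of $\CH(\cY)$ that are the unique common point of $d$ such flats. Your LP-vertex argument is also sound when every $\CH(S_i)$ is full-dimensional. The gap is the general case, and it is not bookkeeping. Here $\cI$ ranges over arbitrary subfamilies of $2^{\cY}$, so lower-dimensional hulls (singletons, segments, coplanar sets in $\bR^3$) are unavoidable. For these, as you observe, the constraints cannot be written as halfspaces bounded by hyperplanes spanned by points of $\cY$; if $\cY$ is two points in $\bR^3$, no such hyperplane exists at all. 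Your fallback, that ``the rank argument still produces a vertex as the intersection of active flats,'' is precisely the claim that needs proof. The LP rank argument does not transfer, because flats are equality constraints of varying codimension. You must specify which $\cY$-spanned flat through the vertex is \emph{active} for each $S_i$, why these flats meet only at the vertex, and why $d$ of them suffice.

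The missing argument is short, and it is how the paper proceeds. Let $x^*$ be an extreme point of $\cA_{\cI}$; the paper takes a maximizer of $\|x\|^2$ over this compact set. For each $S_i\in\cI$, write $x^*=\sum_{y\in T_i}\lambda_{i,y}y$ with $T_i\subseteq S_i$ and all $\lambda_{i,y}>0$, and let $M_i=\operatorname{aff}(T_i)$; this is your active flat. Suppose $x^*+v\in\bigcap_i M_i$ with $v\neq0$. Then $v=\sum_{y\in T_i}\mu_{i,y}y$ with $\sum_{y}\mu_{i,y}=0$. Strict positivity of the $\lambda_{i,y}$ gives $x^*\pm\eps v\in\CH(T_i)\subseteq\CH(S_i)$ for all $i$ and small $\eps>0$, contradicting extremality, so $\bigcap_i M_i=\{x^*\}$. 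Next, pick flats greedily, each excluding some point of the current intersection. The dimension strictly drops at each step, so at most $d$ of the $M_i$ already intersect in $\{x^*\}$. Each chosen $M_i$ is a proper affine subspace, hence the affine hull of at most $d$ affinely independent points of $T_i\subseteq\cY$. Padding by repetition, both within each tuple and in the list of flats, shows $x^*\in\tilde{\cY}\cap\cA_{\cI}$. This gives $|\tilde{\cY}|\le(|\cY|^d)^d$ exactly, with no slack needed, and $\cY\subseteq\tilde{\cY}$ handles $\cI=\emptyset$.
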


\begin{proof}
    For a nonempty finite set $T\subseteq\bR^d$, define its
    affine hull by
    $
        \operatorname{aff}(T)
        =
        \left\{
            \sum_{y\in T}\alpha_y y :
            \alpha_y\in\bR,\quad \sum_{y\in T}\alpha_y=1
        \right\}.
    $

    Let $n=|\cY|$, and 
    $
        \mathcal L
        =
        \left\{
            \operatorname{aff}\left(\{y_1,\ldots,y_d\}\right) :
            (y_1,\ldots,y_d)\in\cY^d
        \right\}$,
    where repetitions are allowed. Define the extension as:
    \[
        \Tilde{\cY}
        =
        \left\{
            x\in \CH(\cY) :
            \{x\}=\bigcap_{j=1}^d L_j
            \text{ for some }L_1,\ldots,L_d\in\mathcal L
        \right\}.
    \]
    Each ordered $d$-tuple of members of $\mathcal L$
    contributes at most one point, so
    $|\Tilde{\cY}|
        \le |\mathcal L|^d
        \le (n^d)^d
        = n^{d^2}$.

    Now, we show the extension property. Since
    $\operatorname{aff}(\{y,\ldots,y\})=\{y\}$, we have
    $\cY\subseteq\Tilde{\cY}$, which handles the empty family.
    Now let $\cI\subseteq 2^{\cY}$ be nonempty and suppose
    $\cA_{\cI}\neq\emptyset$.
    Each $\CH(S_i)$ is compact, so $\cA_{\cI}$ is compact.
    Let $x^*\in\argmax_{x\in \cA_{\cI}} \|x\|^2$.

    For each $S_i\in\cI$, there exists  a convex representation of $x^*$
    with non-zero coefficients, so we can write:
    \[
        x^*=\sum_{y\in T_i}\lambda_{i,y}y,
        \qquad
        T_i\subseteq S_i,\quad
        \lambda_{i,y}>0,\quad
        \sum_{y\in T_i}\lambda_{i,y}=1.
    \]
    Let $M_i=\operatorname{aff}(T_i)$.
    We will show that $\bigcap_{S_i\in\cI}M_i=\{x^*\}$. 
    Trivially, we have $x^* \in \bigcap_{S_i\in\cI}M_i$. It's enough to show that there is no point other than $x^*$ in the intersection.
    By contradiction, choose a nonzero vector $v$ such that $x^*+v$
    belongs to every $M_i$. By the definition of affine hull, there are coefficients $\alpha_{i,y}$ satisfying
    $x^*+v=\sum_{y\in T_i}\alpha_{i,y}y$, and $\sum_{y\in T_i}\alpha_{i,y}=1$.

Now, let $\mu_{i,y}=\alpha_{i,y}-\lambda_{i,y}$, and observe that $v=\sum_{y\in T_i}\mu_{i,y}y$, and $\sum_{y\in T_i}\mu_{i,y}=0$.
    
    All $\lambda_{i,y}$ are strictly positive and there are only
    finitely many coefficients. Hence we can choose
    $\eps>0$ small enough that
    $\lambda_{i,y}\pm\eps\mu_{i,y}\ge0$ for every $i,y$.
    As a result, for every $i$ we have:
    \[
        x^*\pm\eps v
        =
        \sum_{y\in T_i}
        (\lambda_{i,y}\pm\eps\mu_{i,y})y
        \in\CH(T_i)\subseteq\CH(S_i)
    \]
    Therefore, both $x^*\pm\eps v$  belong to
    $\cA_{\cI}$, contradicting $x^*\in\argmax_{x\in \cA_{\cI}} \|x\|^2$. This is because:
    \[
        \frac{\|x^*+\eps v\|^2+
                    \|x^*-\eps v\|^2}{2}
        =\|x^*\|^2+\eps^2\|v\|^2
        >\|x^*\|^2.
    \]

Thus, we showed that $\bigcap_{S_i\in\cI}M_i=\{x^*\}$. It remains to show that $x^*\in\Tilde{\cY}$.
    By definition, we must express $\{x^*\}$ as the intersection
    of exactly $d$ members of $\mathcal L$.

    First, we show that there exists a subset of the $M_i$'s of size at most $d$ whose
    intersection is $\{x^*\}$. Start with $V_0=\bR^d$.
    Whenever $V_j\neq\{x^*\}$, choose $z\in V_j\setminus\{x^*\}$.
    Since $\bigcap_{S_i\in\cI}M_i=\{x^*\}$, some $M_i$ does
    not contain $z$. Set
    $V_{j+1}=V_j\cap M_i$.
    
    This intersection contains $x^*$ but excludes $z$, so it
    is a proper affine subspace of $V_j$. Hence
    $\dim(V_{j+1})<\dim(V_j)$.
    Starting from dimension $d$, after at most $d$ steps
    we therefore reach $\{x^*\}$. Denote the selected spaces by
    $M_{i_1},\ldots,M_{i_k}$, where $1\le k\le d$. Thus
    $\{x^*\}=\bigcap_{j=1}^k M_{i_j}$.

    Second, we show that every selected space belongs to
    $\mathcal L$. Fix such a space $M_i=\operatorname{aff}(T_i)$.
    Because it decreased the current intersection, it cannot
    equal $\bR^d$; write $r=\dim(M_i)\le d-1$.
    Choose $y_0\in T_i$. The vectors
    $\{y-y_0:y\in T_i\}$ span the directions of $M_i$,
    so we can choose $y_1,\ldots,y_r\in T_i$ such that
    $y_1-y_0,\ldots,y_r-y_0$ form a basis. Consequently,
    $M_i=\operatorname{aff}(\{y_0,\ldots,y_r\})$.
    This uses $r+1\le d$ points of $\cY$. Repeating points
    if necessary gives a $d$-tuple with the same affine hull,
    proving that $M_i\in\mathcal L$.

    We have expressed $\{x^*\}$ as the intersection of
    $k\le d$ members of $\mathcal L$. If $k<d$, repeat one
    of these members until there are exactly $d$; repetitions
    do not change the intersection. Since $x^*\in\cA_{\cI}\subseteq\CH(\cY)$ and
$\{x^*\}$ is the intersection of exactly $d$ members
of $\mathcal L$, we have $x^*\in\Tilde{\cY}$.
Hence, $\cA_{\cI}\cap\Tilde{\cY}\neq\emptyset$.

\end{proof}

\begin{lemma}\label{proof:thm:good-score}
    Proof of Lemma~\ref{thm:good-score}.
\end{lemma}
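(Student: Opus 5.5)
The plan is to prove Lemma~\ref{thm:good-score} by Helly's theorem (Theorem~\ref{thm:helly}) applied to the family of CDIs of subdatasets obtained by deleting few points, and then to use the extension property (Definition~\ref{def:extension}) to move the common point into $\Tilde{\cY}$. Set $t=\lceil (|S|-h)/(d+1)\rceil$; if $t\le 0$ the claim is trivial, since depth is nonnegative and $\Tilde{\cY}\neq\emptyset$. Otherwise, consider the finite family $\{\CDI^f_h(S\setminus T): T\subseteq S,\ |T|\le t-1\}$. A point $y$ in the intersection of all these sets satisfies $y\in\CDI^f_h(S\setminus T)$ for every deletion of fewer than $t$ elements, so $\depth^f_h(S,y)\ge t\ge (|S|-h)/(d+1)$.

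The Helly step goes as follows. Take any $d+1$ members, indexed by $T_1,\dots,T_{d+1}$, each with $|T_j|\le t-1$. Then $U=S\setminus\bigcup_j T_j$ satisfies $|U|\ge |S|-(d+1)(t-1)$. I need $|U|\ge h$ so that $U\in M^\downarrow_h(S\setminus T_j)$ for every $j$. This gives $f(U)\in\bigcap_j \CDI^f_h(S\setminus T_j)$, so the $d+1$ sets intersect. The choice of $t$ must make $(d+1)(t-1)\le |S|-h$ hold. With $t=\lfloor (|S|-h)/(d+1)\rfloor+1$ we get $(d+1)(t-1)\le |S|-h$, and also $t\ge (|S|-h)/(d+1)$, so I will use this $t$ instead of the ceiling. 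This requires $|S|\ge h$; if $|S|<h$ the bound is negative and trivial. Helly then gives a nonempty total intersection.

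To land the point in $\Tilde{\cY}$, write each $\CDI^f_h(S\setminus T)=\CH(A_T)$ with $A_T=f(M^\downarrow_h(S\setminus T))\subseteq\cY$, a subset of $\cY$. The family $\cI=\{A_T\}$ is then a subfamily of $2^{\cY}$, and $\cA_\cI\neq\emptyset$ by Helly. Definition~\ref{def:extension} therefore yields $y^*\in\cA_\cI\cap\Tilde{\cY}$, and this $y^*$ has the required depth.

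The main obstacle is the off-by-one bookkeeping in the choice of $t$. A related edge case is that $A_T$ may be empty: if $|S\setminus T|<h$, then $M^\downarrow_h$ is empty. This cannot happen here, because $|S\setminus T|\ge |S|-t+1\ge h$ whenever $t-1\le |S|-h$, which holds since $d+1\ge1$.
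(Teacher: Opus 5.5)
Your proposal is correct and follows essentially the same route as the paper. You apply Helly's theorem to the CDIs of all subdatasets with few deletions, certify each $(d+1)$-wise intersection by a common subdataset of size at least $h$, and use the extension property to place the common point in $\Tilde{\cY}$; the differences are cosmetic (your integer threshold $t=\lfloor(|S|-h)/(d+1)\rfloor+1$ and explicit witness $f(U)$ versus the paper's real parameter $\lambda=(|S|-h)/(d+1)$ and monotonicity of $\CDI^f_h$ on $S_0=\bigcap_i S_i$), and you also make the trivial case $|S|<h$ explicit, which the paper leaves implicit.
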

\begin{proof}
    Let $\lambda = \frac{|S|-h}{d+1}$. We need to show that there exists a point $y^*\in \Tilde{\cY}$, such that for every set $S'\subseteq S$ of size $|S'|\geq |S|-\lambda$, we have $y^* \in \CDI_h^f(S')$. In other words, it is enough to show that $\left[\bigcap_{S'\in N^{\downarrow}_{\lambda}(S)}\CDI_h^f(S')\right] \cap \Tilde{\cY}\neq \emptyset$.

Let $S_1,\cdots, S_{d+1}\in N^{\downarrow}_{\lambda}(S)$ be $d+1$ sets of size at least $|S|-\lambda$, and $S_0 = \bigcap_{i\in[d+1]}S_i$ be their intersection. We have $|S_0|\geq |S|-\sum_{i\in[d+1]}|S\setminus S_i|= |S|-(d+1)\lambda = h$, which implies that $S_0 \in M^{\downarrow}_{h}(S_0) \neq \emptyset$ and as a result $\CDI_h^f(S_0)\neq \emptyset$. Putting together with the monotonicity property of function $\CDI_h^f(.)$ (Lemma~\ref{claim:g-monotone}), we have $\CDI_h^f(S_0) \subseteq \CDI_h^f(S_i)$ for every $i\in[d+1]$, which implies $\emptyset \neq \CDI_h^f(S_0)\subseteq \bigcap_{i\in[d+1]}\CDI_h^f(S_i)$. Now, by Helly's theorem~\ref{thm:helly}, we can say that $\bigcap_{S'\in N^{\downarrow}_{\lambda}(S)}\CDI_h^f(S')\neq \emptyset$.

Furthermore, we can write:
\begin{align*}
\bigcap_{S'\in N^{\downarrow}_{\lambda}(S)}\CDI_h^f(S') &= \bigcap_{S'\in N^{\downarrow}_{\lambda}(S)}\CH(f(M^{\downarrow}_{h}(S')))\\
&= \bigcap_{ \substack{S' \subseteq S \\ \text{s.t. } |S'| = \left\lceil |S| - \lambda \right\rceil} } \CH(f(M^{\downarrow}_{h}(S')))
\end{align*}

Now, we use the definition~\ref{def:extension}, and set $\cI=
    \left\{
        f(M^{\downarrow}_h(S')):
        S'\in N^{\downarrow}_{\lambda}(S)
    \right\}
    \subseteq 2^{\cY}$, and $\cA_{\cI}=\bigcap_{\cT\in \cI} \CH(\cT)$. Since $\cA_{\cI}=\bigcap_{S'\in N^{\downarrow}_{\lambda}(S)}\CDI_h^f(S') \neq \emptyset$, it concludes that $\left[\bigcap_{S'\in N^{\downarrow}_{\lambda}(S)}\CDI_h^f(S')\right] \cap \Tilde{\cY}\neq \emptyset$.
     \end{proof}

\begin{theorem}\label{proof:thm:main-eval-approx}
    Proof of Theorem~\ref{thm:main-eval-approx}.
\end{theorem}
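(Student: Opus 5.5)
We will show that Algorithm~\ref{alg:noisy-cutoff-eval} satisfies the three requirements of Definition~\ref{def:black-box}, with $\lambda$ of order $R+K$ plus a small slack. The argument has three parts: privacy, locality of queries, and utility.

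\textbf{Privacy.} The algorithm is an adaptive composition of two mechanisms.
\begin{itemize}
\item The first releases $\tilde n=|S|+Z$. The map $S\mapsto|S|$ has sensitivity $1$, so by Lemma~\ref{lem:truncated-laplace} this release is $(\eps_0,\delta_0)$-DP.
\item Conditioned on $\tilde n$, the cutoff $h$ is a public function of $\tilde n$. The second mechanism is then either the constant output $f(\emptyset)$ (when $h\le 0$) or Algorithm~\ref{alg:main} run with a fixed $h$.
\item For fixed $h$, the quality $\depth_h^f$ has sensitivity $1$ (Claim~\ref{claim:depth-sens}) and is quasi-concave (Claim~\ref{claim:depth-quasi}) for \emph{every} $f$. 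Theorem~\ref{thm:main} therefore gives $(\eps_0,\delta_0)$-DP for this step regardless of $f$.
\end{itemize}
Basic composition then yields $(\eps,\delta)$-DP.

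\textbf{Locality.} Since $|Z|\le R$ almost surely, we have $h\le |S|+R-R-K=|S|-K$. We also need a lower bound on $h$. Evaluating $\depth_h^f$ only requires $f$ on subsets of size at least $h$, so the algorithm queries only sets in $N^\downarrow_{|S|-h}(S)$. Using $h\ge \tilde n-R-K-1\ge |S|-2R-K-1$, every query lies in $N^\downarrow_{\lambda}(S)$ with $\lambda:=2R+K+1$.

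When $h\le 0$, the algorithm returns $f(\emptyset)$. This requires $|S|\le 2R+K+1=\lambda$, so $\emptyset\in N^\downarrow_\lambda(S)$ and the output trivially lies in the required hull. The depth optimizer may also query sets $T\subseteq S$ with $|T|\ge h$ of every size; in particular it may query $S$ itself. These are all fine for the same reason.

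\textbf{Utility.} Suppose $h\ge1$. By Theorem~\ref{thm:main}, with probability $1-\beta_0$ we have
\[
\depth_h^f(S,\hat y)\ge \frac{|S|-h}{d+1}-\Delta\ge \frac{K}{d+1}-\Delta\ge 1 .
\]
Since the depth is positive, deleting zero elements does not remove $\hat y$, i.e.\ $\hat y\in\CDI_h^f(S)=\CH(f(M^\downarrow_h(S)))$. By the locality bound, $M^\downarrow_h(S)\subseteq N^\downarrow_\lambda(S)$, so $\hat y\in\CH(f(N^\downarrow_\lambda(S)))$.

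One subtlety needs a check. Definition~\ref{def:depth} is stated for $y\in\cY$, but $\hat y$ lies in $\CH(\tilde\cY)$, not necessarily in $\tilde\cY$. The definition makes sense for any point of $\bR^d$, and Theorem~\ref{thm:main} is stated for $\hat y\in\CH(\tilde\cY)$, so we use that form. The failure probability is $\beta_0\le\beta$.

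\textbf{Parameter bound.} It remains to substitute parameters.
\begin{itemize}
\item $R=O(\log(1/\delta)/\eps)$.
\item $K=O(d\Delta)$, where $\Delta=\tilde O\!\left(d^{1.5}8^{\log^*|\tilde\cY|}\sqrt{\log(1/\delta)}\log(1/\beta\delta)/\eps\right)$.
\item The extension from Lemma~\ref{lemma:ext1} has size at most $2^{2^{|\cY|}}$, so $\log^*|\tilde\cY|\le\log^*|\cY|+2$. This changes $8^{\log^*}$ only by a constant factor.
\end{itemize}
Together these give $\lambda=\tilde O\!\left(d^{2.5}8^{\log^*|\cY|}\log(1/\beta\delta)/\eps\right)$, where the polylog factors (including $\sqrt{\log(1/\delta)}$) are absorbed in $\tilde O$.

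The main obstacle is this locality and cutoff bookkeeping. We must check that the random cutoff $h$ never permits queries outside $N^\downarrow_\lambda(S)$, on every run and not only with high probability. This is where the truncation of the Laplace noise, $|Z|\le R$, is essential, together with the handling of the floor and of the $h\le0$ branch.
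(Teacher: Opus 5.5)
Your proposal is correct and follows the paper's proof essentially step for step. Privacy comes from adaptive composition of the truncated-Laplace size estimate with Theorem~\ref{thm:main} at a fixed cutoff. Locality comes from $|Z|\le R$, which gives $K\le |S|-h< K+2R+1$, and $|S|<K+2R+1$ when $h\le 0$. Utility comes from depth at least $K/(d+1)-\Delta\ge 1$, which implies $\hat y\in\CDI_h^f(S)\subseteq\CH(f(N^\downarrow_\lambda(S)))$.

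There are two small bookkeeping points you skip. First, $\lambda$ should be taken as $\lceil 2R+K+1\rceil$ so that it is an integer. Second, the trivial case $|\cY|=1$ must be handled separately (return the unique point), since Algorithm~\ref{alg:noisy-cutoff-eval} and Theorem~\ref{thm:main} assume $|\cY|\ge 2$.
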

\begin{proof}
For $|\cY|=1$, returning its unique point gives the result.
Otherwise, use Algorithm~\ref{alg:noisy-cutoff-eval}. We first show the privacy guarantee. The dataset-size query has sensitivity
$1$, so Lemma~\ref{lem:truncated-laplace} implies that
$\tilde n$ is $(\eps_0,\delta_0)$-DP.
For every fixed value of $\tilde n$, the cutoff $h$ is fixed.
If $h\leq0$, the output $f(\emptyset)$ is independent of $S$.
If $h\geq1$, Theorem~\ref{thm:main} gives
$(\eps_0,\delta_0)$-DP for the second stage.
Adaptive composition therefore gives
$(2\eps_0,2\delta_0)=(\eps,\delta)$-DP.

We next prove the utility. Let $n=|S|$, and $\lambda=\left\lceil K+2R+1\right\rceil$.
Since $|Z|\leq R$ (Lemma~\ref{lem:truncated-laplace}), whenever $h\geq1$, we have
$K\leq n-h<K+2R+1\leq\lambda$.
Every query used to compute $\depth_h^f(T,y)$, with
$T\subseteq S$, evaluates $f$ on a subset $U\subseteq T$
with $|U|\geq h$.
Thus $|S\setminus U|\leq n-h\leq\lambda$.
If $|T|<h$, the score is zero and no oracle query is needed.

If $h\leq0$, then $n<K+R-Z+1\leq K+2R+1\leq\lambda$,
so $\emptyset\in N^\downarrow_\lambda(S)$.
Consequently, every oracle query is permitted on every run.

Finally, suppose $h\geq1$. Conditional on the private cutoff,
Theorem~\ref{thm:main} gives:
\[
    \depth_h^f(S,\hat y)
    \geq\frac{n-h}{d+1}-\Delta
    \geq\frac{K}{d+1}-\Delta
    \geq1,
\]
with probability at least
$1-\beta_0$. Positive depth implies $\hat y\in\CDI_h^f(S)
    \subseteq
    \CH\left(f(N^\downarrow_\lambda(S))\right)$.
    
The inclusion follows from $n-h\leq\lambda$.
In the case of $h\leq0$, the output $f(\emptyset)$
satisfies the same utility guarantee deterministically.
The overall failure probability is therefore at most
$\beta_0\leq\beta$.

Lemma~\ref{lemma:ext1} bounds the extension size $|\tilde{\cY}|$.
Moreover, $\lambda=O((d+1)(\Delta+1)+R)$, and $R=O\left(\frac{\log(1/\delta)}{\eps}\right)$.
Substituting the QCO error bound gives the stated asymptotic bound.
\end{proof}

\begin{theorem}\label{proof:thm:main-eval-pure}
    Proof of
Theorem~\ref{thm:main-eval-pure}.
\end{theorem}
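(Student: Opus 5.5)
The plan follows the approximate-DP proof, adding one ingredient to turn the truncated-Laplace $(\eps_0,\delta_0)$ guarantee into pure DP: the Bernoulli($\eta$) branch that returns a uniform point of $\tilde{\cY}$. I would first dispose of $|\cY|=1$ and then analyze Algorithm~\ref{alg:noisy-cutoff-eval-pure}.

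\textbf{Privacy.} The output lies in the finite set $\tilde{\cY}\cup\{f(\emptyset)\}$, where $f(\emptyset)\in\cY\subseteq\tilde{\cY}$, so it suffices to compare point masses $\Pr[\cA(S)=y]$ for neighboring $S,S'$. Write the non-uniform branch as a mechanism $\cB$. The size estimate $\tilde n$ is $(\eps_0,\delta_0)$-DP by Lemma~\ref{lem:truncated-laplace}. For each fixed $\tilde n$, the second stage is either constant or the exponential mechanism on $\depth_h^f$, which has sensitivity $1$ by Claim~\ref{claim:depth-sens}. Hence it is $\eps_0$-DP by Theorem~\ref{thm:main-pure}. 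Composition gives, for every $y$,
\[\Pr[\cB(S)=y]\le e^{2\eps_0}\Pr[\cB(S')=y]+\delta_0 .\]
The full mechanism satisfies $\Pr[\cA(S)=y]=\eta/M+(1-\eta)\Pr[\cB(S)=y]$. The additive slack $(1-\eta)\delta_0$ is at most $\eta(e^{\eps_0}-1)/M$ by the choice of $\delta_0$, and this is at most $(e^{\eps}-1)$ times $\eta/M$, the uniform mass that $\cA(S')$ places on $y$. Absorbing the slack this way yields
\[\Pr[\cA(S)=y]\le e^{2\eps_0}\Pr[\cA(S')=y],\]
which is $\eps$-DP. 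I expect this calibration to be the main obstacle. If the constants do not close, I would split the budget as $\eps_0=\eps/3$ for each stage and use the remaining $\eps/3$ for absorbing $\delta_0$.

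\textbf{Locality and utility.} Set $\lambda=\lceil K+2R+1\rceil$. The argument from the approximate case shows that every oracle query lies in $N^\downarrow_\lambda(S)$ on every run; this includes the $h\le 0$ case and any uniform output, which issues no query at all. On the event $B=0$ and $h\ge1$, Theorem~\ref{thm:main-pure} gives, with probability $1-\beta_0$,
\[\depth_h^f(S,\hat y)\ge \frac{n-h}{d+1}-\Delta\ge \frac{K}{d+1}-\Delta\ge 1 .\]
Positive depth places $\hat y$ in $\CDI_h^f(S)\subseteq\CH(f(N^\downarrow_\lambda(S)))$. By a union bound, the total failure probability is at most $\eta+\beta_0=\beta$.

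\textbf{Bounding $\lambda$.} Lemma~\ref{lemma:ext2} gives $\log M\le d^2\log|\cY|$, so $\Delta=O(d^2\log(|\cY|/\beta)/\eps)$ and $K=O(d^3\log(|\cY|/\beta)/\eps)$. For the noise radius, $R=O(\log(1/\delta_0)/\eps)$ with $\log(1/\delta_0)=O(\log(M/(\beta\eps)))$. A careful treatment is needed when $\eps$ is small. There $\delta_0\approx\eta\eps_0/M$, so the ratio $(e^{\eps_0}-1)/\delta_0\approx M/\eta$ and $R\approx\log(M/\beta)/\eps_0$, with no extra $\log(1/\eps)$ factor. Hence $R=O(d^2\log(|\cY|/\beta)/\eps)$, which is dominated by $K$, and $\lambda=O(d^3\log(|\cY|/\beta)/\eps)$ as claimed.
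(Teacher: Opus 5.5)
Your proposal is correct and follows the paper's proof essentially step for step. You use the same non-uniform mechanism $\cB$ with $(2\eps_0,\delta_0)$-DP, and the same absorption of the slack $(1-\eta)\delta_0$ into the uniform mass, via $\delta_0\le\eta(e^{\eps_0}-1)/M\le(e^{\eps}-1)\eta/M$; this closes exactly, so your $\eps/3$ fallback is unnecessary. The locality and utility argument with $\lambda=\lceil K+2R+1\rceil$ is also the paper's, as is the observation that $\frac{e^{\eps_0}-1}{2\delta_0}=\frac{M}{2\eta}$ keeps any $\log(1/\eps)$ out of $R$. That identity is exact whenever $\delta_0=\eta(e^{\eps_0}-1)/M$, not just approximate for small $\eps$.

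The differences are cosmetic. The paper also bounds $R$ in the branch $\delta_0=1/4$ of the minimum, which in fact never occurs for $\eps,\beta<1$. It also states explicitly that when $h\le 0$ the output $f(\emptyset)$ lies in the convex hull deterministically, because $\emptyset\in N^\downarrow_\lambda(S)$; your union bound uses this only implicitly.
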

\begin{proof}
For $|\cY|=1$, returning its unique point gives the result. Otherwise, use Algorithm~\ref{alg:noisy-cutoff-eval-pure}.

We first show the privacy guarantee.
Let $\cB$ denote the mechanism obtained by omitting
the uniform-output branch and running the remaining
steps of the algorithm.
The dataset-size query has sensitivity $1$, so
Lemma~\ref{lem:truncated-laplace} implies that
computing $\tilde n$ is $(\eps_0,\delta_0)$-DP.
For every fixed value of $\tilde n$, the cutoff $h$
is fixed.
If $h\leq0$, the output $f(\emptyset)$ is independent
of $S$.
If $h\geq1$, Theorem~\ref{thm:main-pure} gives
$(\eps_0,0)$-DP for the second stage.
Privacy composition therefore implies that
$\cB$ is $(2\eps_0,\delta_0)
=(\eps,\delta_0)$-DP.

The output of $\cB$ belongs to $\tilde{\cY}$.
Indeed, the pure depth optimizer outputs in
$\tilde{\cY}$, and the extension property implies
$\cY\subseteq\tilde{\cY}$, so
$f(\emptyset)\in\tilde{\cY}$ as well.
For neighboring datasets $S,S'$ and every
$y\in\tilde{\cY}$, write
$p_S(y)=\Pr[\cB(S)=y]$.
The preceding privacy guarantee gives $p_S(y)\leq e^\eps p_{S'}(y)+\delta_0$.

Let $\cA$ denote Algorithm~\ref{alg:noisy-cutoff-eval-pure}.
Since $\Pr[\cA(S)=y]=(1-\eta)p_S(y)+\frac{\eta}{M}$,
we get
\begin{align*}
    \Pr[\cA(S)=y]
    -e^\eps\Pr[\cA(S')=y]
    &=
    (1-\eta)
    \left(p_S(y)-e^\eps p_{S'}(y)\right)
    -\frac{\eta(e^\eps-1)}{M}\\
    &\leq
    (1-\eta)\delta_0
    -\frac{\eta(e^\eps-1)}{M}\leq0.
\end{align*}
The last inequality follows from
$\delta_0\leq\eta(e^{\eps_0}-1)/M$
and $\eps_0\leq\eps$.

We next verify the oracle-query restriction.
Let $n=|S|$, and set $\lambda=\left\lceil K+2R+1\right\rceil$.
The uniform-output branch makes no oracle queries.
In the remaining branch, Lemma~\ref{lem:truncated-laplace}
gives $|Z|\leq R$ almost surely.
Thus, whenever $h\geq1$, we have $K\leq n-h<K+2R+1\leq\lambda$.
Every query used to compute $\depth_h^f(T,y)$,
with $T\subseteq S$, evaluates $f$ on a subset
$U\subseteq T$ with $|U|\geq h$.
Consequently,
$|S\setminus U|\leq n-h\leq\lambda$.
If $|T|<h$, the score is zero and no oracle query
is needed.

If $h\leq0$, then $n<K+R-Z+1\leq K+2R+1\leq\lambda$,
so $\emptyset\in N^\downarrow_\lambda(S)$.
Therefore, every oracle query is permitted
on every run.

Finally, we prove the utility.
Conditional on taking the nonuniform branch,
if $h\leq0$, the output $f(\emptyset)$ satisfies
the required convex-hull guarantee deterministically.
If $h\geq1$, then, conditional on the private cutoff,
Theorem~\ref{thm:main-pure} gives
\[
    \depth_h^f(S,\hat y)
    \geq\frac{n-h}{d+1}-\Delta
    \geq\frac{K}{d+1}-\Delta
    \geq1
\]
with probability at least $1-\beta_0$.
Positive depth implies $\hat y\in\CDI_h^f(S)
    \subseteq
    \CH\left(f(N^\downarrow_\lambda(S))\right)$,
where the inclusion follows from $n-h\leq\lambda$.
The overall failure probability is therefore at most
$\eta+(1-\eta)\beta_0\leq\beta$.

It remains to bound $\lambda$.
If
$\delta_0=\eta(e^{\eps_0}-1)/M$, then $R=\frac{1}{\eps_0}
    \log\left(1+\frac{M}{2\eta}\right)
    =\frac{2}{\eps}
    \log\left(1+\frac{M}{\beta}\right)$.
Otherwise, $\delta_0=1/4$, and $R=\frac{1}{\eps_0}
    \log(2e^{\eps_0}-1)
    \leq1+\frac{\log2}{\eps_0}$.
Consequently, $R=O\left(
        1+\frac{\log(M/\beta)}{\eps}
    \right)$.
Together with the definitions of $\Delta$ and $K$,
this gives $\lambda
    =O\left(
        d+\frac{d\log(M/\beta)}{\eps}
    \right)$.
Lemma~\ref{lemma:ext2} gives
$M\leq|\cY|^{d^2}$, and hence $\lambda
    =O\left(
        d+\frac{d^3\log(|\cY|/\beta)}{\eps}
    \right)$.
For $0<\eps\leq1$, the additive $d$ term is absorbed,
giving the stated bound.
\end{proof}

\section{Omitted proofs from section~\ref{sec:black-box-qco-equi}}

\begin{theorem}\label{proof:thm:black-box-qco-equi}
    Proof of Theorem~\ref{thm:black-box-qco-equi}.
\end{theorem}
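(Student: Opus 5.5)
The plan is to prove the two inequalities separately, each by an explicit reduction. The left inequality reduces QCO to black-box evaluation by using a non-private optimizer as the black box. The right inequality repeats the construction of Algorithm~\ref{alg:noisy-cutoff-eval}, with a generic QCO solver in place of Theorem~\ref{thm:high-dim-opt-1}.

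\emph{Left inequality.} Let $N=n_{\mathrm{black\text{-}box}}(\alpha,\beta,\eps,\delta,d,L)$, and fix $\cY$ and a sensitivity-$1$ quasi-concave $\cQ$. Define $f_\cQ(T)\in\argmax_{y\in\cY}\cQ(T,y)$ with deterministic tie-breaking, and run the evaluator for $f_\cQ$. Privacy is inherited directly from the evaluator.
\begin{itemize}
\item \emph{Query access.} Each oracle call $f_\cQ(T)$ with $T\subseteq S$ is simulated by the queries $\cQ(T,y)$ for $y\in\cY$, which the QCO model allows.
\item \emph{Utility.} With probability $1-\beta$ we have $\hat y\in\CH(f_\cQ(N^\downarrow_{\lfloor\alpha N\rfloor}(S)))\subseteq\CH(\cY)$. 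By quasi-concavity (through the canonical extension of Remark~\ref{rem:quasi-func-ext}, exactly as in Claim~\ref{claim:generic-opt-black-box}), $\cQ(S,\hat y)\ge\min_{S'}\cQ(S,f_\cQ(S'))$.
\item \emph{Bounding the minimum.} For each $S'$ with $|S\setminus S'|\le\alpha N$, applying sensitivity twice gives
\[
\cQ(S,f_\cQ(S'))\ge \cQ(S',f_\cQ(S'))-\alpha N\ge \cQ(S',y^*)-\alpha N\ge \cQ(S,y^*)-2\alpha N .
\]
\end{itemize}
Since $2\alpha N=(2\alpha)N$, this shows $n_{\mathrm{QCO}}(2\alpha,\ldots)\le N$.

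\emph{Right inequality.} Let $N_Q=n_{\mathrm{QCO}}(\frac{\alpha}{2(d+1)},\beta,\frac\eps2,\frac\delta2,d,L+2)$ and let $N$ be the stated maximum.
\begin{itemize}
\item \emph{Domain.} Use the extension $\tilde\cY$ of Lemma~\ref{lemma:ext1}. Since $|\tilde\cY|\le 2^{2^{|\cY|}}$, we get $\log^*|\tilde\cY|\le L+2$, so the QCO solver for $\tilde\cY$ exists.
\item \emph{Size estimate and cutoff.} Compute $\tilde n=|S|+Z$ with $Z\sim\operatorname{TLap}(1,\eps/2,\delta/2)$, and set $h=\lfloor\tilde n-R-K\rfloor$ with $K=\lceil\alpha N/2\rceil+d+1$.
\item \emph{Branches.} If $h\le0$, return $f(\emptyset)$. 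Otherwise run the QCO solver on quality $\depth^f_h$, which is valid by Claims~\ref{claim:depth-quasi} and~\ref{claim:depth-sens}.
\item \emph{Privacy.} This follows by adaptive composition, as in the proof of Theorem~\ref{thm:main-eval-approx}. For each fixed $h$ the quality function is fixed, and the definition provides a solver for it.
\item \emph{Query access.} Every query $\depth^f_h(T,y)$ with $T\subseteq S$ evaluates $f$ only on subsets of $T$ of size at least $h$. Since $|Z|\le R$, we have $n-h\le K+2R+1$. Because $R=O(\log(1/\delta)/\eps)$, choosing $C$ large makes $K+2R+1\le\alpha N$, which gives the required locality. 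The same bound handles $h\le 0$, since then $\emptyset\in N^\downarrow_{\lfloor\alpha N\rfloor}(S)$.
\item \emph{Utility.} Here $|S|\ge N\ge N_Q$, so the QCO guarantee applies. Combined with Lemma~\ref{thm:good-score} it gives
\[
\depth^f_h(S,\hat y)\ge \frac{n-h}{d+1}-\frac{\alpha N_Q}{2(d+1)}\ge\frac{K-\alpha N/2}{d+1}\ge1 .
\]
Hence $\hat y\in\CDI^f_h(S)\subseteq\CH(f(N^\downarrow_{\lfloor\alpha N\rfloor}(S)))$.
\end{itemize}

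\emph{Main obstacle.} The delicate step is balancing the constants in $h$. The deletion depth must stay below $\lfloor\alpha N\rfloor$, which forces the term $\frac{C}{\alpha}(d+\log(1/\delta)/\eps)$ in the maximum to absorb both $2R$ and the $d+1$ slack. At the same time, $n-h$ must exceed $\alpha N/2$ by enough to cancel the QCO error. A secondary point is that the QCO error is stated relative to its own threshold $N_Q$, not $N$; using $N_Q\le N$ resolves this.
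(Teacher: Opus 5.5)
Your proof is correct and follows the paper's route in both directions. The lower bound uses the same non-private optimizer $f_\cQ$ as the black box, with the same two-step sensitivity bound. The upper bound is the noisy-cutoff construction of Algorithm~\ref{alg:noisy-cutoff-eval} with a generic QCO solver on $\tilde\cY$. Your $K=\lceil\alpha N/2\rceil+d+1$ is exactly the paper's $K=\lceil(d+1)(\Delta+1)\rceil$ at $\Delta=\alpha N/(2(d+1))$, and you work this direction out in more detail than the paper's sketch.

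There is one small omission in the lower bound. Definition~\ref{def:qco-samples} requires the QCO algorithm to output a point of $\CH(\cY)$ on every run, but the evaluator may output an arbitrary point of $\bR^d$ on its failure event. You therefore need to replace any output outside $\CH(\cY)$ by a fixed public point of $\cY$, as the paper does. This is post-processing, so it preserves privacy, and it leaves the output unchanged on the success event.
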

\begin{proof}
Fix an input domain $\cU$ and a nonempty finite
domain $\cY\subseteq\bR^d$ satisfying
$\log^*|\cY|\leq L$.

We first prove the lower bound.
Let $N=n_{\mathrm{black\text{-}box}}
    (\alpha,\beta,\eps,\delta,d,L)$.
The inequality is immediate if $N=\infty$.
Otherwise, let $\cA$ be a corresponding black-box
evaluator on $\cY$, and set
$\lambda=\lfloor\alpha N\rfloor$.

Given a discrete quasi-concave quality function
$\cQ:\cU^*\times\cY\to\bR$ with sensitivity at most $1$,
define
\[
    f_{\cQ}(T)\in
    \argmax_{y\in\cY}\cQ(T,y),
\]
breaking ties according to a fixed public ordering
of $\cY$.
Run $\cA(f_{\cQ},S)$, replacing any output outside
$\CH(\cY)$ by a fixed public point of $\cY$.

For every fixed $\cQ$, the function $f_{\cQ}$ is fixed
on all datasets, so privacy follows from the guarantee
of $\cA$ and post-processing.
Each query $f_{\cQ}(T)$ is answered by evaluating
$\cQ(T,y)$ over $y\in\cY$.
Since every queried $T$ belongs to
$N^\downarrow_\lambda(S)$, the resulting algorithm
satisfies the QCO oracle-access restriction
on every input.

Suppose $|S|\geq N$.
For every $T\in N^\downarrow_\lambda(S)$,
bounded sensitivity gives
\begin{align*}
    \cQ(S,f_{\cQ}(T))
    &\geq
    \cQ(T,f_{\cQ}(T))-\lambda\\
    &=
    \max_{y\in\cY}\cQ(T,y)-\lambda\\
    &\geq
    \max_{y\in\cY}\cQ(S,y)-2\lambda.
\end{align*}
With probability at least $1-\beta$, the evaluator
outputs $\hat y\in
    \CH\left(f_{\cQ}(N^\downarrow_\lambda(S))\right)$.
The canonical-extension definition
(Remark~\ref{rem:quasi-func-ext}) gives
\begin{align*}
    \cQ(S,\hat y)
    &\geq
    \min_{T\in N^\downarrow_\lambda(S)}
    \cQ(S,f_{\cQ}(T))\\
    &\geq
    \max_{y\in\cY}\cQ(S,y)-2\lambda\\
    &\geq
    \max_{y\in\cY}\cQ(S,y)-2\alpha N.
\end{align*}
Thus, one algorithm satisfies the QCO guarantee
with accuracy parameter $2\alpha$ on every dataset
of size at least $N$, proving the lower bound.

For the upper bound, let $N=
    \max\left\{
        n_{\mathrm{QCO}}
        \left(
            \frac{\alpha}{2(d+1)},
            \beta,\eps/2,\delta/2,d,L+2
        \right),
        \left\lceil
            \frac{C}{\alpha}
            \left(
                d+\frac{\log(1/\delta)}{\eps}
            \right)
        \right\rceil
    \right\}$.
If $N=\infty$, the result is immediate.
Otherwise, the proof is similar to the analysis of
Algorithm~\ref{alg:noisy-cutoff-eval}, using the QCO
algorithm witnessing $n_{\mathrm{QCO}}
    \left(
        \frac{\alpha}{2(d+1)},
        \beta,\eps/2,\delta/2,d,L+2
    \right)$
to optimize the CDI depth function $\depth_h^f$
after choosing the cutoff $h$, with additive error
at most
$\Delta=\alpha N/(2(d+1))$ and failure parameter $\beta$.
For sufficiently large $C$, the same argument gives
$(\eps,\delta)$-DP and the oracle-access restriction
with radius $\lfloor\alpha N\rfloor$ on every dataset,
and the required utility guarantee on every dataset
of size at least $N$.
This proves the upper bound.
\end{proof}

\section{Omitted proofs from section~\ref{sec:applications}}
\begin{lemma}\label{proof:lemma:depth-td}
    Proof of Lemma~\ref{lemma:depth-td}.
\end{lemma}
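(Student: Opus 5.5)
The plan is to prove the equality $\TD(S,y)=\depth_1^f(S,y)$ first, and then get the inequality $\depth_h^f(S,y)\leq\depth_1^f(S,y)$ for free from Claim~\ref{claim:depth-monotone-h}. That claim applies because $h\geq 1$ for $h\in\bN$. Throughout, $f$ is read as a map on datasets $\cU^*\to\bR^d$, as in the definition of internal functions.

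\emph{Step 1: identify $\CDI_1^f$ for internal $f$.} I will show that $\CDI_1^f(T)=\CH(T)$ for every dataset $T\subseteq S$.
\begin{itemize}
\item If $T=\emptyset$, then $M^\downarrow_1(T)=\emptyset$, so both sides are empty.
\item Otherwise, take any $T'\in M^\downarrow_1(T)$. Internality gives $f(T')\in\CH(T')\subseteq\CH(T)$, which yields $\CDI_1^f(T)\subseteq\CH(T)$.
\item Conversely, each singleton $\{x\}$ with $x\in T$ lies in $M^\downarrow_1(T)$. Internality forces $f(\{x\})\in\CH(\{x\})=\{x\}$, so $T\subseteq f(M^\downarrow_1(T))$. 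Taking convex hulls gives $\CH(T)\subseteq\CDI_1^f(T)$.
\end{itemize}
Consequently,
\[
\depth_1^f(S,y)=\min\{|S'|:S'\subseteq S,\ y\notin\CH(S\setminus S')\}.
\]

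\emph{Step 2: show $\depth_1^f(S,y)\leq\TD(S,y)$.} Take a closed halfspace $H\ni y$ attaining $|S\cap H|=\TD(S,y)$, counted with multiplicity, and delete $S'=S\cap H$. The remaining points lie in the open halfspace $\bR^d\setminus H$. That set is convex, so it contains $\CH(S\setminus S')$, and therefore $y\notin\CH(S\setminus S')$.

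\emph{Step 3: show $\TD(S,y)\leq\depth_1^f(S,y)$.} Take an optimal $S'$ with $y\notin\CH(S\setminus S')$. The set $\CH(S\setminus S')$ is compact and convex (possibly empty), so strict separation gives a vector $a$ with $\langle a,z\rangle<\langle a,y\rangle$ for every $z\in S\setminus S'$. If $S\setminus S'$ is empty, any $a$ works. Define the closed halfspace $H=\{x:\langle a,x\rangle\geq\langle a,y\rangle\}$. It contains $y$, and it contains no point of $S\setminus S'$, so $S\cap H\subseteq S'$ as multisets. Hence $\TD(S,y)\leq|S'|$.

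The only delicate points are the strict separation in Step 3 and the multiset bookkeeping. The singleton identity $f(\{x\})=x$ in Step 1 is what makes the whole argument work, and it depends on internality holding on every nonempty dataset.
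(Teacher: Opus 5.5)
Your proof is correct and rests on the same two observations as the paper's proof. First, internality places every $f(T')$ with $T'\subseteq T$ nonempty inside $\CH(T)$. Second, the fixed-point property $f(\{x\})=x$ on singletons gives the reverse inclusion $\CH(T)\subseteq\CDI^f_1(T)$.

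The decomposition differs in two respects. First, the paper handles general $h$ directly: it applies the same internality argument to get $\CDI^f_h(T)\subseteq\CH(T)$ for every $h\geq 1$. You instead prove only the $h=1$ identity and then invoke Claim~\ref{claim:depth-monotone-h}. The paper's version records a slightly stronger set inclusion for each $h$, while yours is more modular.

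Second, your Steps 2--3 prove that $\min\{|S'|: S'\subseteq S,\ y\notin\CH(S\setminus S')\}=\TD(S,y)$, using the open complementary halfspace and strict separation from the polytope $\CH(S\setminus S')$. The paper uses this identity without justification, so your argument is the more complete of the two. One small fix: in the case $S\setminus S'=\emptyset$, take $a\neq 0$ so that $H$ is a genuine halfspace.
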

\begin{proof}
Using the definition, we can write:
    \begin{align*}
        \depth_h^f(S,y) &= \min \left\{|S'|: S' \subseteq S, y\notin \CDI^f_h(S\setminus S')\right\} \\
        &= \min \left\{|S'|: S' \subseteq S, y\notin \CH\left(\bigcup_{S''\subseteq (S\setminus S'), |S''|\geq h} f(S'')\right)\right\}
    \end{align*}
    In the last term, for any $S''\subseteq (S\setminus S')$, it holds that $\CH(S'')\subseteq \CH(S\setminus S')$. Since $f$ is internal, we also have $f(S'')\subseteq \CH(S'')$ for any such $S''$. As a result:
    \begin{align*}
        \bigcup_{S''\subseteq (S\setminus S'), |S''|\geq h} f(S'') \subseteq \CH(S\setminus S')
    \end{align*}
    This also implies $\CH\left(\bigcup_{S''\subseteq (S\setminus S'), |S''|\geq h} f(S'')\right) \subseteq \CH(S\setminus S')$. Putting together, we get:
    \begin{align*}
        \depth_h^f(S,y) 
        &= \min \left\{|S'|: S' \subseteq S, y\notin \CH\left(\bigcup_{S''\subseteq (S\setminus S'), |S''|\geq h} f(S'')\right)\right\}\\
        &\leq \min \left\{|S'|: S' \subseteq S, y\notin \CH\left(S\setminus S'\right)\right\} = TD(S,y)
    \end{align*}

    For the case where $h=1$, it is easy to observe the other direction:
    \begin{align*}
        \CH\left(\bigcup_{S''\subseteq (S\setminus S'), |S''|\geq 1} f(S'')\right) &\supseteq \CH\left(\bigcup_{S''\in (S\setminus S')} f(S'')\right)\\
        &= \CH\left(\bigcup_{S''\in (S\setminus S')} S''\right)\\
        &= \CH\left( S\setminus S'\right) 
    \end{align*}
    where the first inequality used the fact that the output of an "internal" function on a single element is that element itself.
\end{proof}

\begin{lemma}\label{proof:lemma:tm-improved}
    Proof of Corollary~\ref{lemma:tm-improved}.
\end{lemma}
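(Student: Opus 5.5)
We prove Corollary~\ref{lemma:tm-improved} by applying Theorem~\ref{thm:main} (Algorithm~\ref{alg:main}) to the black-box function $f_{\operatorname{center}}$ with cutoff $h=1$. We then convert the resulting CDI-depth guarantee into a Tukey-depth guarantee via Lemma~\ref{lemma:depth-td}.

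\emph{Privacy.} Privacy is immediate. Theorem~\ref{thm:main} asserts $(\eps,\delta)$-DP for every fixed $f$ and public $h$, and $f_{\operatorname{center}}$ is a fixed function once the public tie-breaking rule is fixed.

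\emph{Setup of the utility bound.} With probability at least $1-\beta$ the output satisfies
\[
\depth_1^{f}(S,\hat y)\geq \max_{y\in\tilde{\cY}}\depth_1^{f}(S,y)-\Delta ,
\]
where $\Delta$ is the error bound of Theorem~\ref{thm:main} for the domain $\tilde{\cY}$. Since $f_{\operatorname{center}}$ is internal on nonempty datasets, Lemma~\ref{lemma:depth-td} gives $\depth_1^f(S,y)=\TD(S,y)$ for all $y$. Hence $\TD(S,\hat y)\ge \max_{y\in\tilde{\cY}}\TD(S,y)-\Delta$.

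The domain $M^\downarrow_1$ excludes the empty set, so the non-internal value $f(\emptyset)$ is never used. This is where the choice $h=1$ matters, and it should be checked explicitly.

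\emph{Upgrading the maximum from $\tilde{\cY}$ to $\bR^d$.} This is the main step. The plan is to show that some point of $\tilde{\cY}$ attains $\max_{y\in\bR^d}\TD(S,y)$.

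Let $t^*$ be that maximum; it is attained because Tukey depth takes finitely many values. The region $\{y:\TD(S,y)\ge t^*\}$ equals the intersection of $\CH(S\setminus S')$ over all $S'\subseteq S$ with $|S'|\le t^*-1$. Each such set is the convex hull of a subset of $\cY$, since $S\in\cY^*$; points with multiplicity can be treated as sets for the hull. When $t^*\ge 1$ this intersection is nonempty, so by Definition~\ref{def:extension} it meets $\tilde{\cY}$.

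The $t^*=0$ case is trivial. Also, $\tilde{\cY}\supseteq\cY$ and the extension lies in $\CH(\cY)$, so no issue arises there.

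\emph{The second inequality.} The bound $\max_{y}\TD(S,y)\ge |S|/(d+1)$ follows from the Center Point Theorem~\ref{thm:center-point-org}. Alternatively, it follows from Lemma~\ref{thm:good-score} with $h=0$ or $1$ combined with Lemma~\ref{lemma:depth-td}. The $h=1$ version only gives $(|S|-1)/(d+1)$, so I would cite Rado's theorem and the extension argument above.

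\emph{The size of $\Delta$.} Finally, plug $|\tilde{\cY}|\le 2^{2^{|\cY|}}$ from Lemma~\ref{lemma:ext1} into Theorem~\ref{thm:main}. This gives $\log^*|\tilde{\cY}|\le\log^*|\cY|+2$, so $8^{\log^*|\tilde{\cY}|}=O(8^{\log^*|\cY|})$. Absorbing the $\log^*$ and $\sqrt{\log(1/\delta)}\le\log(1/\delta)$ factors into $\tilde O$ yields the stated $\Delta$.
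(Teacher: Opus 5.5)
Your proposal is correct and follows the paper's proof essentially step for step. Both run Algorithm~\ref{alg:main} with $h=1$, use Lemma~\ref{lemma:depth-td} to identify $\depth_1^{f_{\operatorname{center}}}$ with $\TD$, lift the maximum from $\tilde{\cY}$ to $\bR^d$ by writing Tukey superlevel sets as intersections of convex hulls of subsets of $\cY$ and invoking the extension property, and finish with the center point theorem and $\log^*|\tilde{\cY}|\le\log^*|\cY|+2$. You were right to set aside the $h=0$ alternative: $f_{\operatorname{center}}(\emptyset)$ is not internal, so $\depth_0^{f_{\operatorname{center}}}\le\TD$ need not hold.
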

\begin{proof}
Run Algorithm~\ref{alg:main} with $h=1$, input and
output domain $\cY$, and the extension $\tilde{\cY}$
from Lemma~\ref{lemma:ext1}. The privacy guarantee
follows from Theorem~\ref{thm:main}.
Since $\log^*|\tilde{\cY}|\leq\log^*|\cY|+2$,
the same theorem gives the stated error bound.

Furthermore, $f_{\operatorname{center}}$ is internal
on nonempty datasets, so Lemma~\ref{lemma:depth-td}
gives
$\depth_1^{f_{\operatorname{center}}}(S,y)=\TD(S,y)$.
Consequently, with probability at least $1-\beta$,
\begin{align*}
    \TD(S,\hat{y})
    &=
    \depth_1^{f_{\operatorname{center}}}(S,\hat{y})\\
    &\geq
    \max_{y\in\tilde{\cY}}
    \depth_1^{f_{\operatorname{center}}}(S,y)-\Delta\\
    &=
    \max_{y\in\tilde{\cY}}\TD(S,y)-\Delta.
\end{align*}

It remains to show that the maximum Tukey depth over
$\bR^d$ is attained at a point in $\tilde{\cY}$.
For every integer $t\geq1$, we have
\[
    \{y\in\bR^d:\TD(S,y)\geq t\}
    =
    \bigcap_{\substack{D\subseteq S\\|D|<t}}
    \CH(S\setminus D).
\]
Each set in this intersection is the convex hull
of a subset of $\cY$. Therefore, the extension
property implies that every nonempty such superlevel
set intersects $\tilde{\cY}$.
Since Tukey depth takes values in
$\{0,1,\ldots,|S|\}$, its maximum is attained,
and the deepest superlevel set is nonempty.
Thus,
\[
    \max_{y\in\tilde{\cY}}\TD(S,y)
    =
    \max_{y\in\bR^d}\TD(S,y).
\]
Finally, the center point theorem gives
$\max_{y\in\bR^d}\TD(S,y)\geq |S|/(d+1)$,
which concludes the proof.
\end{proof}

\begin{corollary}\label{proof:cor:user-level-center}
    Proof of Corollary~\ref{cor:user-level-center}.
\end{corollary}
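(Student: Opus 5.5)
Privacy is immediate. Treat each user's dataset $S_i\in\cY^*$ as a single element of the input domain $\cU=\cY^*$, so user-level neighbors are ordinary neighbors over $\cU^*$. Then Theorem~\ref{thm:main-eval-approx}, applied to the fixed function $f_{\operatorname{center}}:\cU^*\to\tilde{\cY}$ with output domain $\tilde{\cY}$, gives $(\eps,\delta)$-user-level DP. Since $\log^*|\tilde{\cY}|\leq\log^*|\cY|+2$ by Lemma~\ref{lemma:ext1}, $\lambda$ keeps the stated form. For utility, Claim~\ref{claim:generic-opt-black-box} applies to the quality function $\cQ(S,y)=\TD(\operatorname{flat}(S),y)$. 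Tukey depth is quasi-concave in $y$, because its superlevel sets are intersections of convex hulls. Hence, with probability at least $1-\beta$,
\[
\TD(\operatorname{flat}(S),\hat y)\geq \min_{S'\in N^{\downarrow}_{\lambda}(S)}\TD\bigl(\operatorname{flat}(S),f_{\operatorname{center}}(S')\bigr).
\]
It then suffices to lower-bound $\TD(\operatorname{flat}(S),f_{\operatorname{center}}(S'))$ by each of the two terms, uniformly over all $S'$ obtained by deleting at most $\lambda$ users.

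Fix such an $S'$. Its removed points are $D:=\operatorname{flat}(S)\setminus\operatorname{flat}(S')$, and $|D|\leq\sum_{j\in[\lambda]}m_{i_j}=:W$, because deleting at most $\lambda$ users removes at most the total held by the $\lambda$ largest. The key comparison is $\TD(\operatorname{flat}(S'),y)\leq\TD(\operatorname{flat}(S),y)\leq \TD(\operatorname{flat}(S'),y)+|D|$ for every $y$. This holds because every halfspace count changes by at most the number of removed points and never increases under removal; it is the same argument as Claim~\ref{claim:depth-sens}. The lower inequality alone gives
\[
\TD(\operatorname{flat}(S),f_{\operatorname{center}}(S'))\geq \max_{y\in\tilde{\cY}}\TD(\operatorname{flat}(S'),y).
\]
For the first term, chain the upper inequality at the maximizer $y$:
\[
\max_{y\in\tilde{\cY}}\TD(\operatorname{flat}(S'),y)\geq\max_{y\in\tilde{\cY}}\TD(\operatorname{flat}(S),y)-W.
\]
For the second term, apply the center point theorem to $\operatorname{flat}(S')$, which gives $\max_{y}\TD(\operatorname{flat}(S'),y)\geq (M-W)/(d+1)$. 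When $\operatorname{flat}(S')$ is empty the bound is nonpositive and trivial; otherwise $M-W\leq|\operatorname{flat}(S')|$.

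The main obstacle is that $f_{\operatorname{center}}$ maximizes only over the finite set $\tilde{\cY}$, whereas both the first term and the center point theorem refer to maxima over $\bR^d$. This is handled exactly as in the proof of Corollary~\ref{lemma:tm-improved}. For any multiset $T$ of points in $\cY$, each superlevel set $\{y:\TD(T,y)\geq t\}=\bigcap_{|D'|<t}\CH(T\setminus D')$ is an intersection of convex hulls of subsets of $\cY$. By the extension property (Definition~\ref{def:extension}), every nonempty such set meets $\tilde{\cY}$, so $\max_{\tilde{\cY}}\TD(T,\cdot)=\max_{\bR^d}\TD(T,\cdot)$. Applying this with $T=\operatorname{flat}(S)$ and with $T=\operatorname{flat}(S')$ upgrades both bounds to maxima over $\bR^d$.

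Taking the maximum of the two bounds and then the minimum over $S'$ yields the displayed inequality. For the equal-size case $m_i=m$, note that $W=\lambda m$ and $M=nm$, so the second term equals $(n-\lambda)m/(d+1)$.
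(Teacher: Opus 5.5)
Your proposal is correct and follows essentially the same route as the paper. You treat users as elements of $\cU=\cY^*$ for privacy, obtain utility from the quasi-concave guarantee for $\TD(\operatorname{flat}(S),\cdot)$, use the two-sided deletion bound on Tukey depth and the center point theorem, and show via the superlevel-set/extension argument that maximizing over $\tilde{\cY}$ matches maximizing over $\bR^d$. The only difference is cosmetic: you chain the deletion bound over $\tilde{\cY}$ before upgrading to $\bR^d$, whereas the paper first establishes global optimality of $f_{\operatorname{center}}$ and then chains.
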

\begin{proof}
We treat each user's dataset as a single element of the
input domain $\cY^*$.
The privacy is due to Theorem~\ref{thm:main-eval-approx}.
Moreover, Lemma~\ref{lemma:ext1} gives
$\log^*|\tilde{\cY}|\leq\log^*|\cY|+2$,
so the evaluator's deletion parameter $\lambda$
has the stated asymptotic bound.

We first verify that $f_{\operatorname{center}}$ attains
the globally optimal Tukey depth.
For any users' dataset $T$, write
$P=\operatorname{flat}(T)$, counting multiplicities.
For every positive integer $t$,
\[
    \{y\in\bR^d:\TD(P,y)\geq t\}
    =
    \bigcap_{\substack{D\subseteq P\\|D|<t}}
    \CH(P\setminus D).
\]
Each convex hull on the right is the convex hull of
a subset of $\cY$. Therefore, the extension property
ensures that every nonempty positive-depth superlevel
set intersects $\tilde{\cY}$.
In particular, when $P$ is nonempty, there exists
$y^*\in\tilde{\cY}$ such that
$\TD(P,y^*)=\max_{y\in\bR^d}\TD(P,y)$.
Consequently, by the definition of
$f_{\operatorname{center}}$ and the center point theorem,
\[
    \TD\left(P,f_{\operatorname{center}}(T)\right)
    =
    \max_{y\in\tilde{\cY}}\TD(P,y)
    =
    \max_{y\in\bR^d}\TD(P,y)
    \geq \frac{|P|}{d+1}.
\]
For $P=\emptyset$, the same statement holds trivially.

Assume now that $n\geq\lambda$.
Let $\cQ(S,y)=\TD(\operatorname{flat}(S),y)$.
Since Tukey depth is quasi-concave in its second argument,
Corollary~\ref{cor:user-level-quasi-opt}, applied with
output domain $\tilde{\cY}$, gives
\[
    \TD\left(\operatorname{flat}(S),\hat{y}\right)
    \geq
    \min_{S'\in N^{\downarrow}_{\lambda}(S)}
    \TD\left(
        \operatorname{flat}(S),
        f_{\operatorname{center}}(S')
    \right)
\]
with probability at least $1-\beta$.

Fix any $S'\in N^{\downarrow}_{\lambda}(S)$, and define $M(S\setminus S')
    :=\sum_{S_i\in S\setminus S'}m_i$.
Since $S'$ is obtained by deleting at most $\lambda$ users, we have $M(S\setminus S')
    \leq \sum_{j\in[\lambda]}m_{i_j}$.

Adding points cannot decrease the Tukey depth of a fixed
point, and deleting $M(S\setminus S')$ points can decrease
it by at most $M(S\setminus S')$.
Together with the global optimality established above,
this gives
\begin{align*}
    \TD\left(
        \operatorname{flat}(S),
        f_{\operatorname{center}}(S')
    \right)
    &\geq
    \TD\left(
        \operatorname{flat}(S'),
        f_{\operatorname{center}}(S')
    \right)\\
    &=\max_{y\in\bR^d}
    \TD\left(\operatorname{flat}(S'),y\right)\\
    &\geq\max_{y\in\bR^d}
    \TD\left(\operatorname{flat}(S),y\right)
    -M(S\setminus S')\\
    &\geq\max_{y\in\bR^d}
    \TD\left(\operatorname{flat}(S),y\right)
    -\sum_{j\in[\lambda]}m_{i_j}.
\end{align*}
Furthermore, the center point guarantee gives
\begin{align*}
    \TD\left(
        \operatorname{flat}(S),
        f_{\operatorname{center}}(S')
    \right)
    &\geq
    \TD\left(
        \operatorname{flat}(S'),
        f_{\operatorname{center}}(S')
    \right)\\
    &\geq \frac{|\operatorname{flat}(S')|}{d+1}\\
    &=\frac{M-M(S\setminus S')}{d+1}\\
    &\geq\frac{M-\sum_{j\in[\lambda]}m_{i_j}}{d+1}.
\end{align*}
Both lower bounds hold for every
$S'\in N^{\downarrow}_{\lambda}(S)$.
Taking their maximum and then the minimum over $S'$
proves the claimed guarantee.
\end{proof}

\begin{corollary}\label{proof:cor:user-level-co}
    Proof of Corollary~\ref{cor:user-level-co}.
\end{corollary}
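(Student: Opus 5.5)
The plan has two parts. Privacy and the first inequality will come directly from earlier results, and the second inequality will follow from a pointwise comparison of losses with the terms grouped by deleted users.

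\emph{Privacy and the first inequality.} Treat each user's dataset $S_i\in\cX^*$ as a single element of the input domain $\cU=\cX^*$. Neighboring datasets under Definition~\ref{def:user-level} are then exactly neighboring datasets in $\cU^*$. Hence $(\eps,\delta)$-user-level DP is the privacy guarantee of Theorem~\ref{thm:main-eval-approx} applied to the fixed function $f_{\cL}$. Since $\cL(S,\cdot)=\sum_i\phi_i(S_i,\cdot)+r(\cdot)$ is a sum of convex functions, it is convex in its second argument. Part (2) of Claim~\ref{claim:generic-opt-black-box} therefore gives, with probability at least $1-\beta$,
\[
\cL(S,\hat y)\leq\max_{S'\in N^{\downarrow}_{\lambda}(S)}\cL(S,f_{\cL}(S')).
\]
The condition $n\geq\lambda$ is used only so that the statement is in the regime of the evaluator; it plays no role in the algebra.

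\emph{Second inequality.} This part is deterministic and holds for each fixed $S'\in N^{\downarrow}_{\lambda}(S)$. Split $\cL(S,y)=\cL(S',y)+\sum_{S_i\in S\setminus S'}\phi_i(S_i,y)$, using the convention that $\cL(S',\cdot)$ keeps the original user indices. The regularizer appears only in the $\cL(S',\cdot)$ part. Setting $y=f_{\cL}(S')$ and using that $f_{\cL}(S')$ minimizes $\cL(S',\cdot)$ over $\cY$ gives $\cL(S',f_{\cL}(S'))\leq\cL(S',f_{\cL}(S))$. Re-adding the deleted users' terms at $f_{\cL}(S)$ yields
\[
\cL(S,f_{\cL}(S'))\leq\cL(S,f_{\cL}(S))+\sum_{S_i\in S\setminus S'}\bigl[\phi_i(S_i,f_{\cL}(S'))-\phi_i(S_i,f_{\cL}(S))\bigr].
\]
Finally, $\cL(S,f_{\cL}(S))=\min_{y\in\cY}\cL(S,y)$ by definition of $f_{\cL}$. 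Taking the maximum over $S'$ on both sides, and noting that the bracketed sum is bounded by its own maximum over $S'$, gives the stated bound.

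\emph{Main obstacle.} The only delicate point is the bookkeeping of which user indices survive in $S'$. In particular, the loss functions $\phi_i$ must be indexed by original user identity and not by position, which is why the statement defines $\cL(S',\cdot)$ with original indices. With that convention the decomposition is exact and everything else is routine.
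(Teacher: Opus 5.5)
Your proposal is correct and follows the paper's proof essentially step for step. You take privacy and the first inequality from Theorem~\ref{thm:main-eval-approx} and part (2) of Claim~\ref{claim:generic-opt-black-box} with $\cU=\cX^*$, then use the exact decomposition $\cL(S,y)=\cL(S',y)+\sum_{S_i\in S\setminus S'}\phi_i(S_i,y)$, the optimality comparison $\cL(S',f_{\cL}(S'))\leq\cL(S',f_{\cL}(S))$, and $\cL(S,f_{\cL}(S))=\min_{y\in\cY}\cL(S,y)$, before maximizing over $S'$.
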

\begin{proof}
Treat each user's dataset $S_i\in\cX^*$ as a single element
of the input domain. Thus, we set $\cU=\cX^*$ and apply
Claim~\ref{claim:generic-opt-black-box} with the black-box
function $f_{\cL}$.

Since $\cL(S,\cdot)$ is convex,
Claim~\ref{claim:generic-opt-black-box} implies that, with
probability at least $1-\beta$,
\[
\cL(S,\hat{y})
\leq
\max_{S'\in N^{\downarrow}_{\lambda}(S)}
\cL(S,f_{\cL}(S')).
\]

Let $y^*=f_{\cL}(S)$, and fix any
$S'\in N^{\downarrow}_{\lambda}(S)$.
By the optimality of $f_{\cL}(S')$ on $S'$, we have
\[
\cL(S',f_{\cL}(S'))
\leq
\cL(S',y^*).
\]
Consequently,
\begin{align*}
\cL(S,f_{\cL}(S'))
&=
\cL(S',f_{\cL}(S'))
+
\sum_{S_i\in S\setminus S'}
\phi_i(S_i,f_{\cL}(S'))\\
&\leq
\cL(S',y^*)
+
\sum_{S_i\in S\setminus S'}
\phi_i(S_i,f_{\cL}(S'))\\
&=
\cL(S,y^*)
+
\sum_{S_i\in S\setminus S'}
\left[
\phi_i(S_i,f_{\cL}(S'))
-
\phi_i(S_i,y^*)
\right]\\
&=
\min_{y\in\cY}\cL(S,y)
+
\sum_{S_i\in S\setminus S'}
\left[
\phi_i(S_i,f_{\cL}(S'))
-
\phi_i(S_i,f_{\cL}(S))
\right].
\end{align*}
Taking the maximum over all
$S'\in N^{\downarrow}_{\lambda}(S)$ proves the utility guarantee.
The privacy guarantee follows from
Theorem~\ref{thm:main-eval-approx}.
\end{proof}

\section{Iterative log facts}

Here, we collect several elementary facts about iterated logarithms that will be used to establish the utility bounds for our private quasi-concave optimizers. Throughout this section, all logarithms are to base $2$.

\begin{definition}[Iterative floored logarithm]\label{def:iter-log-floor-n}
For $N\geq 0$, let $\log^{\lfloor N \rfloor}(\cdot)$ denote the outcome of $N$ compositions of the function $\lfloor\log(\cdot)\rfloor$, whenever these compositions are defined, with $\log^{\lfloor 0 \rfloor}(n)=n$. That is,
$$
\log^{\lfloor N \rfloor}(n)=
\underbrace{
\lfloor\log\lfloor\log\lfloor\dots\lfloor\log(n)\rfloor\dots\rfloor\rfloor\rfloor
}_{N \text{ times}}.
$$
\end{definition}

\begin{definition}[Iterative floored $\log^*$]
Given an integer $n\geq 1$, let $\log^{\lfloor * \rfloor}(n)$ denote the number of times that the function $\lfloor\log(x)\rfloor$ must be composed before the result is less than or equal to $1$. Formally,
$$
\log^{\lfloor * \rfloor}(n)=
\begin{cases}
1+\log^{\lfloor * \rfloor}\left(\lfloor\log(n)\rfloor\right) & \text{if } n>1,\\
0 & \text{otherwise.}
\end{cases}
$$
\end{definition}

\begin{claim}\label{lem:floor_log_N_bound}
For every integer $n\geq 1$ and every integer $0\leq N \leq \log^*(n)$, we have
$$
\log^{\lfloor N \rfloor}(n)
\leq
\underbrace{\log\log\dots\log(n)}_{N \text{ times}}.
$$
\end{claim}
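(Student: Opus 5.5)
Let $g(x)=\lfloor\log x\rfloor$ and $\ell(x)=\log x$. The plan is induction on $N$, with the inductive hypothesis being exactly the claimed inequality. The claimed inequality, as intended, is $g^{(N)}(n)\le \ell^{(N)}(n)$ whenever the right-hand side is defined. The base case $N=0$ is trivial, since both sides equal $n$.

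For the inductive step, fix $1\le N\le\log^*(n)$ and assume $a:=g^{(N-1)}(n)\le b:=\ell^{(N-1)}(n)$. Since $N-1<\log^*(n)$, we have $b>1$, so $\ell(b)$ is defined and positive. We must show that $g(a)$ is defined and satisfies $g(a)\le \ell(b)$.

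First, if $a\ge 1$, then $g(a)=\lfloor\log a\rfloor\le\log a\le\log b$. This uses only $\lfloor x\rfloor\le x$ and the monotonicity of $\log$.

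The remaining case is $a\le 0$, where the composition is undefined. This is the only real obstacle, and I would resolve it by a convention. By the same induction one can carry along the fact that the iterates are nonincreasing: $g$ maps integers $\ge1$ to integers $\ge0$, and $g(1)=0$. So $a\le0$ can occur only if some earlier iterate equalled $1$, after which the floored sequence has already dropped to $\le 1$. The Definition says "whenever these compositions are defined." Under that reading, the claim is either vacuous in this case, or one interprets the value as $\le 0<\ell(b)$. Either way the inequality $g^{(N)}(n)\le\ell^{(N)}(n)$ holds whenever the left side is meaningful.

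Finally, I would note the standard observation that $g^{(k)}(n)\le\ell^{(k)}(n)$ also implies $\log^{\lfloor*\rfloor}(n)\le\log^*(n)$. This observation is not needed here, but it is what the claim is used for later. Concluding the induction proves the claim.
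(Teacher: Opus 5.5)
Your inductive step for the inequality itself, namely $\lfloor\log a\rfloor\le\log a\le\log b$ via $\lfloor x\rfloor\le x$ and monotonicity of $\log$, is exactly the paper's argument. There is, however, a small but real gap in how you treat the case $a\le 0$. You dispose of it by calling the claim vacuous there, or by adopting a convention. The claim quantifies over all $0\le N\le\log^*(n)$, and the paper's proof treats it as also asserting that $\log^{\lfloor N\rfloor}(n)$ is actually defined on that whole range. Your induction cannot supply this. The bound $g^{(k)}(n)\le\ell^{(k)}(n)$ only controls the floored iterates from above, so it says nothing about whether they reach $0$ too early.

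The missing ingredient is the upper half of Claim~\ref{lem:floor_log_star_bound}, $\log^*(n)\le\log^{\lfloor*\rfloor}(n)+1$, which the paper invokes at the start of its proof. The floored sequence first drops to at most $1$ at index $\log^{\lfloor*\rfloor}(n)$, and there it equals exactly $1$. This holds because $\lfloor\log m\rfloor\ge 1$ for every integer $m\ge 2$, and trivially when $n=1$. One more step gives $0$, and only the step after that is undefined. Hence every composition with $N\le\log^{\lfloor*\rfloor}(n)+1$ is defined, in particular every $N\le\log^*(n)$. In your inductive step you therefore always have $a\ge 1$. The case you call the only real obstacle never occurs in the stated range, and you should prove this rather than settle it by convention.

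Relatedly, your closing remark reverses the paper's dependence. There, the two-sided bound $\log^{\lfloor*\rfloor}(n)\le\log^*(n)\le\log^{\lfloor*\rfloor}(n)+1$ is proved independently via the tower characterization, and it feeds into this claim. The half your observation recovers, $\log^{\lfloor*\rfloor}(n)\le\log^*(n)$, is not the half needed here.
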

\begin{proof}
The case $N=0$ is immediate. By Claim~\ref{lem:floor_log_star_bound}, the floored compositions are defined for $N\leq\log^*(n)$: at most one additional composition after reaching $1$ is needed, and it gives $0$. We prove the remaining cases by induction on $N$.

For $N=1$, by definition,
$$
\log^{\lfloor 1 \rfloor}(n)=\lfloor\log(n)\rfloor.
$$
Since $\lfloor x\rfloor \leq x$ for every real number $x$, we have:
$$
\log^{\lfloor 1 \rfloor}(n)=
\lfloor\log(n)\rfloor
\leq
\log(n).
$$

Now, assume the claim holds for some integer $k\geq 1$, namely:
$$
\log^{\lfloor k \rfloor}(n)
\leq
\underbrace{\log\log\dots\log(n)}_{k \text{ times}}.
$$
By definition,
$$
\log^{\lfloor k+1 \rfloor}(n)=
\left\lfloor
\log\left(\log^{\lfloor k \rfloor}(n)\right)
\right\rfloor.
$$
Since both $\log(\cdot)$ and $\lfloor \cdot \rfloor$ are monotonically non-decreasing, we obtain
\begin{align*}
\log^{\lfloor k+1 \rfloor}(n)
&=
\left\lfloor
\log\left(\log^{\lfloor k \rfloor}(n)\right)
\right\rfloor \\
&\leq
\left\lfloor
\log\left(
\underbrace{\log\log\dots\log(n)}_{k \text{ times}}
\right)
\right\rfloor \\
&\leq
\log\left(
\underbrace{\log\log\dots\log(n)}_{k \text{ times}}
\right) \\
&=
\underbrace{\log\log\dots\log(n)}_{k+1 \text{ times}}.
\end{align*}
This completes the induction.
\end{proof}

\begin{claim}\label{lem:floor_log_star_bound}
For every integer $n\geq 1$, we have
$$
\log^{\lfloor * \rfloor}(n)
\leq
\log^*(n)
\leq
\log^{\lfloor * \rfloor}(n)+1.
$$
\end{claim}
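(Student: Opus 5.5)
The plan is to compare the two recursions directly, by strong induction on $n$. For the bound on $\log^*(n)$ I will use the standard definition: $\log^*(n)=0$ if $n\le 1$, and $\log^*(n)=1+\log^*(\log n)$ otherwise, applied to real arguments. The key monotonicity facts are that $\log^*$ and $\log^{\lfloor * \rfloor}$ are both nondecreasing. For $\log^*$ this follows from the monotonicity of $\log$. For $\log^{\lfloor * \rfloor}$ it follows by induction, since $\lfloor\log(\cdot)\rfloor$ is nondecreasing on integers.

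\textbf{Lower bound $\log^{\lfloor * \rfloor}(n)\le\log^*(n)$.}
\begin{enumerate}
\item If $n=1$, both sides are $0$.
\item If $n\ge 2$, then $\lfloor\log n\rfloor\ge 1$ is an integer smaller than $n$. By induction,
\[
\log^{\lfloor * \rfloor}(n)=1+\log^{\lfloor * \rfloor}(\lfloor\log n\rfloor)\le 1+\log^*(\lfloor\log n\rfloor).
\]
\item Since $\lfloor\log n\rfloor\le\log n$ and $\log^*$ is monotone, the right side is at most $1+\log^*(\log n)=\log^*(n)$.
\end{enumerate}

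\textbf{Upper bound $\log^*(n)\le\log^{\lfloor * \rfloor}(n)+1$.} The loss from flooring must be controlled. The cleanest route is the integer inequality $\log n<\lfloor\log n\rfloor+1$, which gives $\log^*(\log n)\le\log^*(\lfloor\log n\rfloor+1)$. This alone does not close a naive induction, because the $+1$ accumulates. Instead I will use a threshold characterization. Write $\mathrm{tower}(k)$ for the tower of $k$ twos, so $\mathrm{tower}(0)=1$ and $\mathrm{tower}(k+1)=2^{\mathrm{tower}(k)}$. Then:
\begin{itemize}
\item $\log^*(n)\le k$ if and only if $n\le\mathrm{tower}(k)$;
\item $\log^{\lfloor * \rfloor}(n)\le k$ if and only if $n<\mathrm{tower}(k+1)$, for integer $n$.
\end{itemize}
The second fact is proved by induction on $k$. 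For $n\ge 2$, the condition $\lfloor\log n\rfloor<\mathrm{tower}(k)$ holds if and only if $\log n<\mathrm{tower}(k)$, because $\mathrm{tower}(k)$ is an integer. This is equivalent to $n<2^{\mathrm{tower}(k)}=\mathrm{tower}(k+1)$. The base case $k=0$ reads $n<2$, which matches the definition.

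Now set $k=\log^{\lfloor * \rfloor}(n)$. Then $n<\mathrm{tower}(k+1)$, so $n\le\mathrm{tower}(k+1)$, and hence $\log^*(n)\le k+1$. The same characterization also recovers the lower bound: if $\log^*(n)\le k$ then $n\le\mathrm{tower}(k)<\mathrm{tower}(k+1)$, so $\log^{\lfloor * \rfloor}(n)\le k$.

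The main obstacle is getting the boundary cases right, namely strict versus non-strict inequalities at exact tower values. For example, at $n=2$ we have $\log^*(2)=1$ and $\log^{\lfloor * \rfloor}(2)=1$. The first characterization should therefore be checked against the exact convention for $\log^*$ used in the paper, and base cases verified with $n=1,2,4,16$.
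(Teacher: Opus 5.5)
Your proposal is correct and follows essentially the same route as the paper. The paper also proves $\log^{\lfloor * \rfloor}(n)\le k \iff n<\text{Tow}(k+1)$ by induction on $k$, using that $\text{Tow}(k)$ is an integer to remove the floor, and compares this with $\log^*(n)=\min\{k: n\le \text{Tow}(k)\}$, which it takes as the definition of $\log^*$. Your separate strong-induction proof of the lower bound is valid but redundant: as you note yourself, the same threshold comparison gives both inequalities at once.
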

\begin{proof}
For $k\in \bN$ define:
\begin{align*}
    \text{Tow}(k)=\begin{cases}
1 & \text{if } k = 0 \\
2^{\text{Tow}(k-1)} & \text{if } k \geq 1.
\end{cases}
\end{align*}
 By definition,
$$
\log^*(n)=\min\left\{k:n\leq \text{Tow}(k)\right\}.
$$

Now, we claim that:
$$
\log^{\lfloor * \rfloor}(n)\leq k
\quad\Longleftrightarrow\quad
n<\text{Tow}(k+1).
$$
We prove this by induction on $k$. For $k=0$, we have $\log^{\lfloor * \rfloor}(n)\leq 0$ if and only if $n\leq 1$. Since $n$ is an integer, this is equivalent to $n<2=\text{Tow}(1)$.

Now, suppose the statement holds for some $k\geq 0$. Then, for $n>1$,
\begin{align*}
\log^{\lfloor * \rfloor}(n)\leq k+1
&\Longleftrightarrow
1+\log^{\lfloor * \rfloor}(\lfloor\log(n)\rfloor)\leq k+1 \\
&\Longleftrightarrow
\log^{\lfloor * \rfloor}(\lfloor\log(n)\rfloor)\leq k \\
&\Longleftrightarrow
\lfloor\log(n)\rfloor<\text{Tow}(k+1) \\
&\Longleftrightarrow
\log(n)<\text{Tow}(k+1) \\
&\Longleftrightarrow
n<2^{\text{Tow}(k+1)} \\
&\Longleftrightarrow
n<\text{Tow}(k+2).
\end{align*}
Therefore,
$$
\log^{\lfloor * \rfloor}(n)=
\min\left\{k:n<\text{Tow}(k+1)\right\}.
$$

Comparing this with
$$
\log^*(n)=\min\{k:n\leq \text{Tow}(k)\},
$$
we immediately get
$$
\log^{\lfloor * \rfloor}(n)
\leq
\log^*(n)
\leq
\log^{\lfloor * \rfloor}(n)+1.
$$
\end{proof}

\section{Agnostic private QCO}\label{sec:agnostic-qco}
In this section, we develop a one-dimensional private quasi-concave optimizer with an agnostic additive-error guarantee: it approximates the optimum without requiring its value or a lower bound as input. We then extend the construction to high dimensions using coordinatewise optimization.

Throughout this section, we sometimes evaluate quality functions on extensions of their original domains. The following remark describes a canonical extension that preserves quasi-concavity and sensitivity.
\begin{remark}[Canonical quasi-concave extension]
\label{rem:quasi-func-ext}
Let $\cY\subseteq\bR^d$ be a non-empty finite domain and
$f:\cY\rightarrow\bR$ be a discrete quasi-concave function.
Write $m=\min_{z\in\cY}f(z)$.
The canonical extension $\bar{f}:\bR^d\rightarrow\bR$
is defined as:
\[
    \bar{f}(x)=
    \begin{cases}
        \displaystyle
        \max_{\substack{\emptyset\neq\cA\subseteq\cY:\\
                       x\in\CH(\cA)}}
        \min_{z\in\cA}f(z)
        & \text{if }x\in\CH(\cY),\\[12pt]
        m & \text{otherwise.}
    \end{cases}
\]
Since $\cY$ is finite, the maximum is attained whenever
$x\in\CH(\cY)$. Discrete quasi-concavity and the choice
$\cA=\{x\}$ imply that $\bar{f}(x)=f(x)$ for every
$x\in\cY$. Moreover, for every $t\in\bR$,
\[
    \{x\in\bR^d:\bar{f}(x)\geq t\}
    =
    \begin{cases}
        \CH\bigl(\{z\in\cY:f(z)\geq t\}\bigr)
        & \text{if }t>m,\\
        \bR^d & \text{if }t\leq m,
    \end{cases}
\]
where $\CH(\emptyset)=\emptyset$.
Thus, $\bar{f}$ is quasi-concave and has the same maximum
as $f$.

For a quality function
$\cQ:\cU^*\times\cY\rightarrow\bR$, we apply this
extension to $\cQ(S,\cdot)$ for every fixed dataset $S$.

When running a solver on an enlarged domain, we map
any final output outside the original $\CH(\cY)$ to
a fixed public point in $\cY$.
This cannot decrease its quality and preserves privacy
by post-processing.
\end{remark}

With a slight abuse of notation, we use $f$ to denote both the original function and its canonical extension throughout the paper.

\subsection{1-dimensional algorithm}
We introduce Algorithm~\ref{alg:1-dim-opt-new}, a one-dimensional private QCO solver with an agnostic additive-error guarantee: it approximates the actual maximum without requiring its value or a lower bound as input. For a fixed sensitivity bound, the additive error is independent of the magnitude of the optimum. Our construction follows the core recursive approach of $\textsc{RecConcave}$~\cite{beimel2013private}, whose stated guarantee is multiplicative relative to a supplied quality promise. We believe our formulation also admits a simpler implementation.

At a high level, the algorithm identifies the scale of an interval on which the quality is close to optimal. It formulates scale selection as another QCO problem over a domain of size $O(\log|\cY|)$ and solves it recursively. Once this scale is identified, private selection procedures locate a point of high quality.

For simplicity, we assume that $|\cY|$ is a tower of $2$,
including $1$. Otherwise, for the original domain
$\cY=\{0,1,\cdots,R\}$, extend it to
$\widetilde{\cY}=\{0,1,\cdots,\widetilde{R}\}$,
where $\widetilde{R}+1$ is the smallest such tower
at least $R+1$. Evaluate $\cQ$ on the padded domain
using its canonical extension from
Remark~\ref{rem:quasi-func-ext}.
After the initial call returns $\widetilde{y}$,
return $\hat{y}=\min\{\widetilde{y},R\}$.
Also, $\log^*(\widetilde{R}+1)=\log^*(R+1)$.

All intervals below consist of integer points. For $I=[a,b]$ of size $J=b-a+1$, let $I^{\otimes 3}=[a-J,b+J]$. Complements are taken inside the current domain $\cY$, and we use the convention $\max\emptyset=-\infty$.

\begin{algorithm}
\caption{Private 1-dimensional Quasi-Concave Optimization}\label{alg:1-dim-opt-new}
\begin{algorithmic}[1]
\Require Parameters $\eps'>0,\beta',\delta'\in(0,1)$, an input domain $\cU$, an output domain $\cY = \{0,1,\cdots,R\}$ padded as above, quasi-concave quality function $\cQ:\cU^*\times \cY \rightarrow \bR$ with sensitivity 1, recursive height $H$ (initially set $H= \log^*(|\cY|)$), public stability parameter $\Delta>0$, and a dataset $S\in \cU^*$.

\State If $|\cY|=1$ return $0$.

    \State On the initial call only, set $\eps\leftarrow\eps'/(2H)$, $\beta\leftarrow\beta'/(2H)$ and $\delta\leftarrow\delta'/H$. Recursive calls use these same parameters without rescaling.


\State For $j \in \cY'=\{0,1\cdots,\lfloor\log(\max(\cY))\rfloor\}$, let $\cI_j=\{[a,b]: a,b\in \cY, b-a=2^j-1\}$. If $(I^{\otimes 3})^C=\emptyset$, the second term in the minimum below is $+\infty$, so only the first term is used.

\State Let $r=\max_{p\in \cY}\cQ(S,p)$, and define: 
\begin{align*}
    q(S,j)=\frac{1}{2}\max_{I\in \cI_j} \left\{ \min \left\{ \min_{p\in I}\cQ(S,p) - (r-\Delta), \quad r-\max_{p\in (I^{\otimes 3})^C}\cQ(S,p) \right\} \right\}
\end{align*}

\State Run the Algorithm~\ref{alg:1-dim-opt-new} recursively with parameters $\eps,\beta,\delta$, an input domain $\cU$, an output domain $\cY'$, quasi-concave quality function $q:\cU^*\times \cY' \rightarrow \bR$, recursive height $H-1$, stability parameter $\Delta/8$ , dataset $S\in \cU^*$, and let the output be $\hat{j}$. \label{alg:rec}

\State Let $J=2^{\hat{j}}$, $K=\frac{\max(\cY)+1}{J}$ and define $C_{J}=\left\{[0,J),[J,2J),\cdots  [(K-1)J,KJ)\right\}$. \label{line:m-cover}

\State Run the GAP-MAX algorithm (Theorem~\ref{thm:gap-max}) on $C_{J}$ with score $\score(S,I) = \max_{y\in I} \cQ(S,y)$ for $I\in C_{J}$ and let the output be $\hat{I}=\text{GAP-MAX}(C_J,S,\score,\Delta/40,\beta,\eps,\delta,4)$.

\State Run the Exponential Mechanism (Theorem~\ref{thm:exp-mech}) on the integer endpoints $\{\min\hat{I},\max\hat{I}\}$ of $\hat{I}$ with score function $\score(S,y) = \cQ(S,y)$, parameters $\eps,\beta$ and return the output.
\end{algorithmic}
\end{algorithm}

\begin{theorem}\label{thm:1-dim-opt-new}
Let $C>0$ be a sufficiently large constant,
$R\geq1$ be an integer, and
$\eps>0,\beta,\delta\in(0,1)$ be parameters.
Let $\cU$ be an arbitrary domain, and let
$\cY=\{0,1,\cdots,R\}$\footnote{
Without loss of generality, the algorithm applies to any finite
ordered domain $\{y_0<y_1<\cdots<y_R\}$.
We identify each $y_j$ with $j$, define the mapped quality function
$\cQ'(S,j)=\cQ(S,y_j)$, and map the output $\hat{j}$ back to
$y_{\hat{j}}$. This order-preserving mapping preserves
quasi-concavity and sensitivity.
Thus, throughout the paper, we may invoke the algorithm on any
finite ordered domain, setting $R$ to one less than its size.
} be a finite domain.
Furthermore, let $\cQ:\cU^*\times\cY\rightarrow\bR$
be quasi-concave in its second argument for every dataset,
with sensitivity $1$ in its first argument.
Then Algorithm~\ref{alg:1-dim-opt-new} is $(\eps,\delta)$-DP and for any
\[
    \Delta\geq C\, 8^{\log^{*}(R+1)}
    \left(
        1+\frac{\log^{*}(R+1)}{\eps}
        \log\left(
            \frac{8\left(\log^{*}(R+1)\right)^2}{\beta\delta}
        \right)
    \right),
\]
and every dataset $S\in\cU^*$, its output
$\hat{y}\in\cY$ satisfies
\begin{align*}
    \cQ(S,\hat{y})
    \geq \max_{y\in\cY}\cQ(S,y)-\Delta
\end{align*}
with probability at least $1-\beta$.
\end{theorem}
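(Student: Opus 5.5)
We prove Theorem~\ref{thm:1-dim-opt-new} by induction on the recursion height $H=\log^*|\cY|$, tracking the invariant that one call at height $H$ with stability parameter $\Delta_H$ returns a point of quality at least $r-\Delta_H$ with failure probability at most $2H\beta$. The recursive call uses $\Delta_{H-1}=\Delta_H/8$, which produces the $8^{\log^*}$ factor. The base case $|\cY|=1$ is trivial.

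\textbf{Privacy.} At each level there are three private steps: the recursive call, GAP-MAX, and the exponential mechanism. The last two each cost $(\eps,\delta)$ and $(\eps,0)$ respectively. Basic composition over at most $H$ levels then gives $(2H\eps,H\delta)=(\eps',\delta')$.

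For the recursion we must verify that $q(\cdot,j)$ has sensitivity $1$. The quantity $r$ has sensitivity $1$, and each term inside the minimum is a difference of two sensitivity-$1$ quantities, so it has sensitivity at most $2$. The factor $1/2$ restores sensitivity $1$. We also need $q(S,\cdot)$ to be quasi-concave in $j$; this is not needed for privacy, only for utility.

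\textbf{Quasi-concavity of $q$ in $j$.} Both terms in the minimum are monotone in the interval length, but in opposite directions. The first term decreases in $J$, because a longer interval $I$ has a smaller minimum of $\cQ$. The second term increases in $J$, because the complement $(I^{\otimes 3})^C$ shrinks. Since we maximise over $I\in\cI_j$, some care is needed, as the optimising interval changes with $j$.

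To handle this, compare $j_1<j_2<j_3$ and use the one-dimensional quasi-concavity of $\cQ$. The superlevel set $\{\cQ\ge r-\Delta\}$ is an interval $I^*$, and each term can be expressed through containments relative to the superlevel intervals of $\cQ$. I expect a clean route to be to show that $q(S,j)$ is a function of $2^j$ that is unimodal, via threshold sets: for each $t$, the set $\{j: q(S,j)\ge t\}$ should be an interval of $j$'s. This holds provided that a length-$2^{j_2}$ interval can be found sandwiched between the good intervals at $j_1$ and $j_3$, using the fact that all superlevel sets of $\cQ$ are nested intervals.

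\textbf{Utility.} First show $\max_j q(S,j)\ge \Delta/4$ or similar. Take $I^*=\{\cQ\ge r-\Delta/2\}$ and pick $j$ with $2^j\le|I^*|<2^{j+1}$, taking a sub-interval $I\subseteq I^*$ of length $2^j$. Points outside $I^{\otimes 3}$ lie outside a superlevel set; a suitable choice of level, using a pigeonhole over the levels $r-\Delta$, $r-\Delta/2$, and so on, gives both terms $\gtrsim\Delta$.

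By induction, the recursive call returns $\hat j$ with $q(S,\hat j)\ge \max q-\Delta/8>0$ (after constant tuning). Hence some length-$2^{\hat j}$ interval $I$ has all points of quality $\ge r-\Delta$, and every point outside $I^{\otimes 3}$ has quality $\le r-\Omega(\Delta)$.

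In the partition $C_J$, the interval $I$ is covered by at most two blocks, each with score $\ge r-\Delta$. Any block of score $\ge r-\Delta/8$ must intersect $I^{\otimes 3}$, and there are at most $4$ such blocks. So the GAP-MAX hypothesis with $t=4$ and gap $5\cdot\Delta/40$ holds, and GAP-MAX returns $\hat I$ with $\max_{\hat I}\cQ\ge r-\Delta/40-\ldots$.

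By quasi-concavity restricted to $\hat I$, the minimum of $\cQ$ over $\hat I$ is attained at an endpoint, but we need the maximum. Here we argue that $\hat I$ overlaps $I$, where all values are high, and hence one endpoint of $\hat I$ is near-optimal. This step is the delicate one and may require $\hat I\subseteq I^{\otimes 3}$ plus the interval structure of superlevel sets. The exponential mechanism over the two endpoints then loses $O(\log(1/\beta)/\eps)$.

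\textbf{Accounting and main obstacle.} Summing the failure probabilities $2\beta$ per level gives $\beta'$. The requirement $\Delta_{\text{level}}\gtrsim\log(1/\beta\delta)/\eps$ at the bottom level, scaled up by $8^H$, yields the stated bound. The main obstacle I expect is the quasi-concavity of $q$ in $j$ together with the constant bookkeeping showing that the recursive guarantee forces positive $q$. I would attack the former first, through the threshold-set characterization.
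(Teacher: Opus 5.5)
Your outline follows the paper's proof in structure: induction with $\Delta_{h-1}=\Delta_h/8$, sensitivity of $q$ via the factor $\tfrac12$, the lower bound $\max_j q\ge\Delta_h/4$ from $I^*=\{\cQ\ge r-\Delta_h/2\}$, at most $4$ blocks meeting $I^{\otimes 3}$, and basic composition. However, the endpoint step fails as you propose it. Neither $\hat I\cap I\neq\emptyset$ nor $\hat I\subseteq I^{\otimes 3}$ holds in general, even if you are free to choose the witness $I$.

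For a counterexample, take $|\cY|=16$ and $\cQ(S,2)=r$, with $\cQ(S,\cdot)=r-\Delta_h/8$ on $[3,13]$ and very low (say $\le r-2\Delta_h$) elsewhere. Then $q(S,2)=q(S,3)=7\Delta_h/16$, so $\hat j=2$ is an exact maximizer. The only length-$4$ interval with $\min_I\cQ\ge r-3\Delta_h/4$ and $\max_{(I^{\otimes3})^C}\cQ\le r-\Delta_h/4$ is $I=[6,9]$, because $\{\cQ>r-\Delta_h/4\}=[2,13]$ has length $12$ and must fit inside $I^{\otimes 3}$. On its success event, GAP-MAX returns $[0,3]$, the only block with score $\ge r-\Delta_h/40$. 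This block is disjoint from $I$ and is not contained in $I^{\otimes 3}=[2,13]$.

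The missing idea is a two-case argument:
\begin{itemize}
\item If $\hat I\cap I\neq\emptyset$, then since $|\hat I|=|I|=J$, an endpoint of $\hat I$ lies in $I$.
\item If $\hat I$ lies, say, left of $I$, take $i^*\in\hat I$ maximizing $\cQ(S,\cdot)$. Then $i^*\le\max\hat I<\min I$, and quasi-concavity gives $\cQ(S,\max\hat I)\ge\min\{\cQ(S,i^*),\cQ(S,\min I)\}\ge r-3\Delta_h/4$.
\end{itemize}

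Two further points need fixing.

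First, keep the margin. From $\max_j q\ge\Delta_h/4$ and recursive error $\Delta_h/8$ you get $q(S,\hat j)\ge\Delta_h/8$, that is, $\min_I\cQ\ge r-3\Delta_h/4$ and $\max_{(I^{\otimes3})^C}\cQ\le r-\Delta_h/4$. With only ``$\ge r-\Delta$'' on $I$, as you wrote, the exponential mechanism's extra $2\log(2/\beta)/\eps$ loss would push the output below $r-\Delta_h$. The slack $\Delta_h/4$ absorbs this loss once $\Delta_h\ge 8\log(2/\beta)/\eps$. Relatedly, no pigeonhole over levels is needed for the lower bound: $I^*\subseteq I^{\otimes 3}$ directly gives both terms $\ge\Delta_h/2$. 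When $I^*=\cY$, take $2^j=|\cY|/2$, since $j=\log|\cY|$ lies outside $\cY'$.

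Second, for quasi-concavity of $q$, do not sandwich between the optimal intervals $I_1$ and $I_3$, which need not be nested. Let $\alpha=\min\{q(S,j_1),q(S,j_3)\}$. Both $I_1$ and $I_3$ lie in the interval $L=\{y:\cQ(S,y)\ge r-\Delta+2\alpha\}$, so $|L|\ge 2^{j_3}$. Extend $I_1$ to some $I_2$ with $I_1\subseteq I_2\subseteq L$ and $|I_2|=2^{j_2}$. Then $I_2\subseteq L$ gives the first term. Also $I_1\subseteq I_2$ gives $(I_2^{\otimes3})^C\subseteq(I_1^{\otimes3})^C$, which yields the second term.
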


\begin{proof}
 By Remark~\ref{rem:quasi-func-ext}, padding preserves
quasi-concavity, sensitivity $1$, and the maximum quality.
The final clamping cannot decrease the output's quality
and preserves privacy by post-processing.
Since padding also preserves $\log^*(R+1)$,
we may assume that $|\cY|=R+1$ is a tower of $2$.

Throughout the induction below, $\eps,\beta,\delta$
denote the parameters after the initial rescaling
in the algorithm.

    First, we show that for every dataset $S\in \cU^*$ and every finite range $\cY = \{0,1\cdots,R\}$ with $R\geq 1$, the function $q:\cU^*\times \cY' \rightarrow \bR$:

$$q(S,j)=\frac{1}{2}\max_{I\in \cI_j} \left\{ \min \left\{ \min_{p\in I}\cQ(S,p) - (r-\Delta), \quad r-\max_{p\in (I^{\otimes 3})^C}\cQ(S,p) \right\} \right\}$$
 is quasi-concave w.r.t. its second argument, where  $\cI_j=\{[a,b]: a,b\in \cY, b-a=2^j-1\}$ for $j \in \cY'=\{0,1\cdots,\lfloor\log(\max(\cY))\rfloor\}$, 
 $r=\max_{p\in \cY}\cQ(S,p)$, and $\Delta>0$ is an arbitrary parameter.

Let $j_1\leq j_2 \leq j_3$ be arbitrary elements of $\cY'$. We show that $q(S,j_2)\geq \min\{q(S,j_1),q(S,j_3)\}=\alpha$. The cases $j_2=j_1$ or $j_2=j_3$ are trivial, so we assume $j_1 < j_2 < j_3$. Let $I_1\in \cI_{j_1}$ and $I_3\in \cI_{j_3}$ be any intervals maximizing $q(S,j_1)$ and $q(S,j_3)$ respectively. 

Since the function $\cQ$ is quasi-concave, we can say its upper level sets are convex sets (intervals). So, let the interval $L_A=\{y\in\cY: \cQ(S,y)\geq r-\Delta+2\alpha\}$ to be its $(r-\Delta+2\alpha)$-level set.

Note that $q(S,j)\geq \alpha$ if and only if there exists an interval $I\in \cI_j$ such that $\min\limits_{p\in I}\cQ(S,p) \geq 2\alpha + r -\Delta$ and $\max\limits_{p\in (I^{\otimes 3})^C}\cQ(S,p)\leq r-2\alpha$.

So we get that $I_1,I_3\subseteq L_A$, which implies that $|L_A|\geq 2^{j_3}>2^{j_2}>2^{j_1}$. This means that we can extend the interval $I_1$ inside $L_A$ to an interval $I_2$ of length $2^{j_2}$. In other words, there exists $I_2$ of size $2^{j_2}$ satisfying $I_1\subseteq I_2 \subseteq L_A$. For any choice of such $I_2$, we have:
\begin{align}
\min_{p\in I_2}\cQ(S,p) \geq 2\alpha + r -\Delta. \label{eq:m1}
\end{align}

We know that $\max\limits_{p\in (I_1^{\otimes 3})^C}\cQ(S,p)\leq r-2\alpha$. Also, since $I_1\subseteq I_2$, we have $I_1^{\otimes 3} \subseteq I_2^{\otimes 3}$. Therefore, $(I_2^{\otimes 3})^C \subseteq (I_1^{\otimes 3})^C$, which results in:
\begin{align}
\max_{p\in (I_2^{\otimes 3})^C}\cQ(S,p) \leq \max_{p\in (I_1^{\otimes 3})^C}\cQ(S,p)\leq r-2\alpha    \label{eq:m2}
\end{align}

Putting \eqref{eq:m1} and \eqref{eq:m2} together, concludes the result $q(S,j_2)\geq \alpha$.

Also, note that the $q$ function has sensitivity 1 w.r.t. its first argument. This is because the function $\cQ$ has sensitivity 1, and $r=\max_{y\in \cY}\cQ(S,y)$ also has sensitivity 1, so both terms have sensitivity at most 2. The $\min$ and $\max$ operators do not increase the sensitivity, and the coefficient $1/2$ gives sensitivity 1. If the complement is empty, only the first term is used; this case is fixed by the domain and the interval, independently of $S$. In particular, $q$ is finite everywhere.

At recursive height $0\leq h \leq H=\log^*(R+1)$, we denote the output domain by $\cY_h$, the objective function by $q_h$, stability parameter $\Delta_h>0$, the maximum value by $r_h=\max_{y\in \cY_h}q_h(S,y)$, and the output by $y_h\in \cY_h$.

First, note that $|\cY_h|=\log^{\left\lfloor H-h \right\rfloor}(R+1)=\text{Tow}(h)$ (see Definition~\ref{def:iter-log-floor-n})
where;
\begin{align*}
    \text{Tow}(h)=\begin{cases}
1 & \text{if } h = 0 \\
2^{\text{Tow}(h-1)} & \text{if } h \geq 1.
\end{cases}
\end{align*}

Indeed, $|\cY_{h-1}|=\lfloor\log(|\cY_h|-1)\rfloor+1=\log(|\cY_h|)=\text{Tow}(h-1)$ for $h\geq 1$, since $|\cY_h|$ is a tower of $2$.

Also, by definition, it holds that $\Delta_h=\dfrac{\Delta}{8^{H-h}}$ .

At recursive height $h=0$, it holds that $|\cY_0| = \log^{\left\lfloor H \right\rfloor}(R+1)= \text{Tow}(0)=1$ and algorithm returns the single element in the domain $y_0\in \cY_0=\{0\}$, which trivially satisfies the induction base $q_0(S,y_0)> \max\limits_{y\in \cY_0}q_0(S,y) - \Delta_0 = q_0(S,y_0) -\Delta_0$, where $\Delta_0=8^{-H}\Delta$.

By induction assumption, at the recursive height $h> 0$ (line~\ref{alg:rec}), the algorithm will return some $y_{h-1}\in \cY_{h-1}=\left\{0,1,\cdots,\lfloor\log(\max(\cY_{h}))\rfloor\right\}$ satisfying:
\begin{align*}
    q_{h-1}(S,y_{h-1})\geq \max_{j\in \cY_{h-1}}q_{h-1}(S,j)- \Delta_{h-1}
    \end{align*}
    with probability at least $1-2(h-1)\beta$.
    
This means, there exists an interval $I_h\subseteq \cY_h$ of length $2^{y_{h-1}}$, such that:
\begin{align}
    \frac{1}{2}\min \left\{ \min_{p\in I_h}q_{h}(S,p) - (r_h-\Delta_h), \quad r_h-\max_{p\in (I_h^{\otimes 3})^C}q_{h}(S,p) \right\}\geq \max_{j\in \cY_{h-1}}q_{h-1}(S,j)- \Delta_{h-1} \label{eq:m3}
\end{align}

   Consider the level set $L_{r_h-\frac{\Delta_h}{2}}=\{y\in \cY_h: q_h(S,y)\geq r_h-\frac{\Delta_h}{2}\}$ which is an interval since the function $q_h$ is quasi-concave. Denote this interval by $I_h^*$, which has the length $T=\Big|L_{r_h-\frac{\Delta_h}{2}}\Big|$, and satisfies $\min\limits_{p\in I_h^*}q_{h}(S,p)\geq r_h-\frac{\Delta_h}{2}$ and $\max\limits_{p\in (I_h^*)^C}q_h(S,p) <r_h-\frac{\Delta_h}{2}$. Let $t=\min\{\lfloor \log(T) \rfloor,\lfloor\log(\max(\cY_h))\rfloor\}$. Then $t\in\cY_{h-1}$ and $2^t\leq T\leq 2^{t+1}$, including the case $I_h^*=\cY_h$, where $2^t=|\cY_h|/2$. Every sub-interval $J_h^*\subseteq I_h^*$ of size $2^t$ has $I_h^*\subseteq (J_h^*)^{\otimes 3}$, since the portions of $I_h^*$ on either side have size at most $T-2^t\leq 2^t$. Therefore, it satisfies $\min\limits_{p\in J_h^*}q_h(S,p)\geq r_h-\frac{\Delta_h}{2}$ and $\max\limits_{p\in ((J_h^*)^{\otimes 3})^C}q_h(S,p) <r_h-\frac{\Delta_h}{2}$.  Now, note that for any such $J_h^*$, we have:
    \begin{align*}
        \max_{j\in \cY_{h-1}}q_{h-1}(S,j) &\geq q_{h-1}(S,t)\\
        &\geq \frac{1}{2}\min \left\{ \min_{p\in J_h^*}q_{h}(S,p) - (r_h-\Delta_h), \quad r_h-\max_{p\in ((J_h^*)^{\otimes 3})^C}q_{h}(S,p) \right\} \\
        &\geq \frac{1}{2}\min \left\{ \frac{\Delta_h}{2}, \quad \frac{\Delta_h}{2} \right\} = \frac{\Delta_h}{4} \\
    \end{align*}

    Putting together with \eqref{eq:m3}, we get that:

    \begin{align}
   \frac{1}{2} \min \left\{ \min_{p\in I_h}q_{h}(S,p) - (r_h-\Delta_h), \quad r_h-\max_{p\in (I_h^{\otimes 3})^C}q_{h}(S,p) \right\}\geq \frac{\Delta_h}{4}- \Delta_{h-1} = \frac{\Delta_h}{8}
\end{align}
with probability at least $1-2(h-1)\beta$.

   This implies $\min\limits_{p\in I_h}q_{h}(S,p)\geq r_h-\frac{3\Delta_h}{4}$ and $r_h-\frac{\Delta_h}{4} \geq \max\limits_{p\in (I_h^{\otimes 3})^C}q_{h}(S,p)$. 

    Let $C_J$ be the cover defined as in line~\ref{line:m-cover}. Also, for every $I\in C_J$, define the $\score$ function to be $\score(S,I) = \max\limits_{i\in I} q_h(S,i)$. Now, we have:
    \begin{align*}
        &\left|\left\{I \in C_J: \score(S,I) \geq \sup_{I \in C_J} \score(S,I) - 5\left(\frac{\Delta_h}{40}\right) \right\}\right| \\
        =& \left|\left\{I \in C_J: \score(S,I) \geq r_h - \frac{\Delta_h}{8}\right\}\right|\leq 4
    \end{align*}
    This is because $I_h^{\otimes 3}$ can intersect with at most $4$ many partitions from $C_J$, and any point outside $I_h^{\otimes 3}$ has score no more than $r_h-\frac{\Delta_h}{4}$.
    Using Theorem~\ref{thm:gap-max}, we can say the output $\hat{I}=\text{GAP-MAX}(C_J,S,\score,\frac{\Delta_h}{40},\beta,\eps,\delta,4)$
    will satisfy $\score(S,\hat{I})\geq r_h-\frac{\Delta_h}{40}$ with conditional probability at least $1-\beta$, provided $\Delta_h\geq 40C_0\left(1+\frac{\log(4/(\delta\beta))}{\eps}\right)$ for a sufficiently large absolute constant $C_0\geq 1$. 
    
    Let the endpoints of $\hat{I}$ be $\{\hat{I}_s,\hat{I}_e\}=\{\min\hat{I},\max\hat{I}\}$. We will show that at least one of these endpoints has quality at least $r_h-\frac{3\Delta_h}{4}$. Let $i^*\in\argmax_{i\in\hat{I}}q_h(S,i)$, so that $q_h(S,i^*)\geq r_h-\frac{\Delta_h}{40}$.

    If $\hat{I}\cap I_h\neq\emptyset$, since both intervals have size $J$, at least one endpoint of $\hat{I}$ belongs to $I_h$, and thus has quality at least $r_h-\frac{3\Delta_h}{4}$.

    Otherwise, suppose that $\hat{I}$ lies to the left of $I_h$, and let $a=\min I_h$. Then $i^*\leq\hat{I}_e<a$. Since the function $q_h$ is quasi-concave, we have:
    \begin{align}
        q_h(S,\hat{I}_e) \geq \min\{q_h(S,i^*),q_h(S,a)\}\geq r_h-\frac{3\Delta_h}{4}.
    \end{align}
    The case where $\hat{I}$ lies to the right of $I_h$ is symmetric, using $\hat{I}_s$ and $\max I_h$. This also covers $J=1$, when the two endpoints coincide.
    
    After running the Exponential Mechanism (Theorem~\ref{thm:exp-mech}) on $\{\hat{I}_s, \hat{I}_e\}$, quality function $q_h$, with parameters $\eps,\beta$, the output $\hat{y}$ satisfies:
    \begin{align*}
        q_h(S,\hat{y}) &\geq \max_{y \in \{\hat{I}_s, \hat{I}_e\}} q_h(S,y) - \frac{2 \log(2/\beta)}{\eps} \geq r_h-\frac{3\Delta_h}{4} - \frac{2 \log(2/\beta)}{\eps} \geq r_h - \Delta_h
    \end{align*}
    with probability at least $1-2h\beta$, concluding the induction step. The last inequality requires $\Delta_h\geq 8\log(2/\beta)/\eps$, which follows from the condition above. The two new failure events are the GAP-MAX call and the Exponential Mechanism call, each with conditional probability at most $\beta$.

    Thus, at recursive height $H=\log^*(R+1)$, the output $\hat{y}$ of the algorithm satisfies:
    \begin{align*}
        q_H(S,\hat{y}) = \cQ(S,\hat{y}) \geq r_H - \Delta_H = \max_{y\in \cY} \cQ(S,y) - \Delta_H = \max_{y\in \cY}\cQ(S,y) - \Delta
    \end{align*}
    with probability at least $1-2H\beta$.

    We now, prove the privacy of the algorithm. For every fixed output of the preceding private calls, the interval scores and endpoint scores have sensitivity 1. At each recursion, the algorithm invokes GAP-MAX (Theorem~\ref{thm:gap-max}) which is $(\eps,\delta)$-DP, and Exponential Mechanism (Theorem~\ref{thm:exp-mech}) which is $(\eps,0)$-DP. Applying (basic) composition gives the total privacy cost of $(2\eps,\delta)$-DP at each recursive step.

    Now, we use (basic) composition over all the recursive calls of the algorithm. There are exactly $H=\log^*(R+1)$ nontrivial recursive levels, which leads to the total privacy cost of $\left(2H\eps ,H\delta\right)$.

    At each nontrivial recursion call of the algorithm, it is sufficient that $\Delta_h = \frac{\Delta}{8^{H-h}}\geq \frac{\Delta}{8^{H}}\geq 40C_0\left(1+\frac{\log(4/(\delta\beta))}{\eps}\right)$. Now, let $\eps',\beta',\delta'$ be the original input parameters. The initial rescaling sets $\beta=\frac{\beta'}{2H}$, $\eps = \frac{\eps'}{2H}$, $\delta=\frac{\delta'}{H}$, which gives the sufficient condition $\Delta \geq 40C_0 8^H\left(1+\frac{2H}{\eps'} \log\left(\frac{8H^2}{\beta'\delta'}\right)\right)$.
    Under this condition, the output satisfies $\cQ(S,\hat{y})\geq\max_{y\in\cY}\cQ(S,y)-\Delta$ with probability at least $1-\beta'$. Moreover, the algorithm is $(\eps',\delta')$-DP. Recall that, the domain padding preserves $H=\log^*(R+1)$. Choosing $C\geq 80C_0$ concludes the proof.
\end{proof}

\subsection{High-dimensional algorithm}
We now extend the one-dimensional solver developed in the previous subsection to high dimensions by optimizing one coordinate at a time. Given the coordinates already optimized, we score each candidate coordinate value by the maximum quality attainable over all completions of the remaining coordinates. A suitable public extension of the coordinate domains ensures that each induced one-dimensional quality function attains its maximum within its finite candidate domain. The additive errors accumulate across the $d$ coordinates, contributing a factor of $d$, while advanced composition contributes an additional $\sqrt{d}$ factor. Similar coordinatewise reductions were previously used for private center-point computation and learning halfspaces~\cite{pmlr-v99-beimel19a,kaplan2020private}.

After fixing the previously optimized coordinates, a maximizing completion of the remaining coordinates need not lie in the original finite grid. The following lemma constructs a public grid extension that contains a maximizing completion for every such prefix, while controlling the domain size.
\begin{lemma}[Domain pre-processing]
\label{lemma:coor-extension}
    Let $\cY=\cY_1\times\cdots\times\cY_d\subseteq\bR^d$
    be a non-empty finite grid domain.
    Then, there exists a finite domain $\cY'=\cY_1'\times\cdots\times\cY_d'$, satisfying:
    \begin{quote}
        \textit{(1)} $\cY_i\subseteq\cY_i'\subseteq\left[\min\cY_i,\max\cY_i\right]$ for every $i\in[d]$.
        Consequently,
        $\cY\subseteq\cY'\subseteq\CH(\cY)$.

        \textit{(2)} For any non-empty $\cA\subseteq\cY$, and any prefix $(y_1,\ldots,y_i)\in\cY_1'\times\cdots\times\cY_i'$,
        if there exists a feasible extension $(y_1,\ldots,y_i, y_{i+1}',\dots,y_d')\in\CH(\cA)$, then there exists a feasible extension $(y_1,\ldots,y_i, y_{i+1}'',\dots,y_d'')\in\CH(\cA)\cap\cY'$ with the same prefix.

        \textit{(3)} For every $i\in[d]$, it holds that
        $|\cY_i'|\leq2^{|\cY|2^{i-1}}$.
        In particular, $|\cY'|\leq2^{|\cY|(2^d-1)}$, and
        $\log^{*}|\cY'|\leq\log^{*}|\cY|+3$.
    \end{quote}
\end{lemma}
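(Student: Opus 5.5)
We construct the coordinate domains $\cY_1',\ldots,\cY_d'$ inductively, using the fact that for a fixed $\cA\subseteq\cY$ and a fixed prefix, the set of feasible completions is itself a polytope. Its coordinate projections have extreme values at vertices of lower-dimensional slices of $\CH(\cA)$.

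We set $\cY_1'=\cY_1$. This suffices for the level $i=0$ (empty prefix): if $\CH(\cA)\neq\emptyset$, then some point of $\cA\subseteq\cY$ lies in $\CH(\cA)$, and its first coordinate is in $\cY_1$. Assume $\cY_1',\ldots,\cY_i'$ are built. For every nonempty $\cA\subseteq\cY$ and every prefix $p=(y_1,\ldots,y_i)\in\cY_1'\times\cdots\times\cY_i'$, consider the slice
\[
P_{\cA,p}=\{z\in\bR^{d-i}:(p,z)\in\CH(\cA)\}.
\]
If it is nonempty, it is a compact convex set, and we let $m_{\cA,p}=\min\{z_1:z\in P_{\cA,p}\}$ be the smallest achievable $(i{+}1)$-st coordinate. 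We define $\cY_{i+1}'=\cY_{i+1}\cup\{m_{\cA,p}\}$, ranging over all such pairs. Since $\CH(\cA)\subseteq\CH(\cY)$ lies in the box $\prod_j[\min\cY_j,\max\cY_j]$, each $m_{\cA,p}$ lies in $[\min\cY_{i+1},\max\cY_{i+1}]$. This gives (1).

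For (2) we argue by induction on the number of fixed coordinates. Given a prefix of length $i$ with a feasible completion, the slice $P_{\cA,p}$ is nonempty. The point $m_{\cA,p}\in\cY_{i+1}'$ is attained by some point of the slice, so the prefix $(p,m_{\cA,p})$ of length $i+1$ still admits a feasible completion in $\CH(\cA)$. Iterating up to $i=d$ yields a point of $\CH(\cA)\cap\cY'$ with the original prefix. The same $\cA$ is kept throughout, and each new prefix lies in $\cY_1'\times\cdots\times\cY_{i+1}'$, so the construction defined over all such prefixes covers every step.

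For (3), the count gives $|\cY_{i+1}'|\le|\cY_{i+1}|+2^{|\cY|}\prod_{j\le i}|\cY_j'|$. Inductively, $|\cY_j'|\le 2^{|\cY|2^{j-1}}$, so the product is at most $2^{|\cY|(2^i-1)}$. Adding the $2^{|\cY|}$ choices of $\cA$ gives an exponent $|\cY|2^i$, up to the additive $|\cY_{i+1}|\le|\cY|$, which is absorbed since the bound for $i\ge1$ has slack. This slack needs a small check, e.g.\ $|\cY|+2^{|\cY|2^i}\le 2^{|\cY|2^i+1}$. If it fails, I would include $\cY_{i+1}$ via the $\cA$ singletons, which already produce those values as minima for suitable prefixes, or simply tighten by counting only nonempty $\cA$. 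Multiplying gives $|\cY'|\le 2^{|\cY|(2^d-1)}$. For $\log^*$, the $d\le|\cY|$ direction is not automatic, but $|\cY|\ge 2^{d}$ is false in general. Instead we use $|\cY'|\le 2^{|\cY|2^d}\le 2^{2^{2|\cY|}}$ when $d\le|\cY|$, which holds since a nonempty grid has $|\cY|\ge1$ and we may assume each $|\cY_j|\ge1$. Here I would actually handle $d$ via $2^d\le 2^{|\cY|}$ only if $|\cY|\ge d$; otherwise we reduce by dropping coordinates with $|\cY_j|=1$, which are constant and need no extension. Then $|\cY|\ge 2^{d'}\ge d'$, giving $\log^*|\cY'|\le\log^*(2^{2^{2|\cY|}})\le\log^*|\cY|+3$.

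The main obstacle is the bookkeeping in (3), namely the additive $\cY_{i+1}$ term and the relation between $d$ and $|\cY|$ in the $\log^*$ bound. I expect to resolve it by discarding singleton coordinates before the construction, as above.
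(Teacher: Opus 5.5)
Your proposal is correct and follows essentially the paper's route: the same greedy construction of $\cY_{i+1}'$ from the minimal feasible $(i{+}1)$-st coordinate of $\CH(\cA)$ over every prefix and every nonempty $\cA\subseteq\cY$, the same coordinate-by-coordinate argument for (2), and the same product count for (3). Your first slack check would only give exponent $|\cY|2^i+1$, so one of your fallbacks is needed. The paper uses the singleton one: it starts each $\cY_i'$ empty and recovers $\cY_i$ from $\cA=\{y\}$. It then proves the $\log^*$ bound exactly as you finally do, running the recurrence only over the $r$ non-singleton coordinates, where $2^r\le|\cY|$ and singleton coordinates stay singletons by (1).
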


\begin{proof}
    We construct $\cY_i'$'s inductively. First, set $\cY_1'=\cY_1$.
    To construct $\cY_i'$ for $i\in\{2,\ldots,d\}$,
    start with $\cY_i'=\emptyset$.
    For every tuple
    $(y_1',\ldots,y_{i-1}')\in
    \cY_1'\times\cdots\times\cY_{i-1}'$
    and every non-empty $\cA\subseteq\cY$, include
    \[
        \min_{\substack{(y_1',\ldots,y_{i-1}',y_i,y_{i+1},\ldots,y_d)\in\CH(\cA)}}y_i
    \]
    in $\cY_i'$ whenever the feasible set is non-empty.
    Every non-empty feasible set is compact, so its minimum
    is attained.

    To see property (1), fix any $y=(y_1,\ldots,y_d)\in\cY$.
    By definition, $y_1\in\cY_1'=\cY_1$.
    For the induction step, assume
    $(y_1,\ldots,y_{i-1})\in\cY_1'\times\cdots\times\cY_{i-1}'$.
    Taking this prefix and $\cA=\{y\}$ includes $y_i$ in $\cY_i'$.
    Hence, $\cY_i\subseteq\cY_i'$ for every $i$.
    Also, every value added to $\cY_i'$ is a coordinate of a point
    in $\CH(\cY)$, and therefore lies in
    $[\min\cY_i,\max\cY_i]$.
    Since $\cY$ is a grid domain, we have
    $\CH(\cY)=\prod_{j=1}^d[\min\cY_j,\max\cY_j]$,
    which concludes $\cY'\subseteq\CH(\cY)$.

    To see property (2), fix any non-empty $\cA\subseteq\cY$
    and a feasible prefix
    $(y_1,\ldots,y_i)\in\cY_1'\times\cdots\times\cY_i'$.
    At iteration $j>i$, the minimum feasible value of the
    $j$-th coordinate in $\CH(\cA)$, subject to the coordinates
    already fixed, belongs to $\cY_j'$ by construction.
    Since the minimum is attained, fixing this coordinate
    leaves a non-empty feasible set.
    Continuing to the last coordinate gives a point
    in $\CH(\cA)\cap\cY'$ with the fixed prefix.

    Finally, let $N=|\cY|$.
    In the first iteration, $|\cY_1'|=|\cY_1|\leq N\leq2^N$.
    At every step $i\geq2$, at most one value is added per
    prefix and non-empty $\cA\subseteq\cY$.
    Thus, induction gives
    \[
        |\cY_i'|
        \leq2^N\prod_{j<i}|\cY_j'|
        \leq2^{N2^{i-1}}.
    \]
    As a result,
    $|\cY'|
        =\prod_{i=1}^d|\cY_i'|
        \leq2^{N(2^d-1)}$. To obtain the bound on $\log^{*}|\cY'|$, let
    $r=|\{i\in[d]:|\cY_i|>1\}|\leq d$.
    By property (1), every singleton coordinate domain remains
    a singleton. Applying the same recurrence only to the
    $r$ non-singleton coordinate domains gives
    $|\cY'|\leq2^{N(2^r-1)}
        \leq2^{N(N-1)}$,
    where the second inequality follows from
    $2^r\leq\prod_{i=1}^d|\cY_i|=N$.
    If $N=1$, then $|\cY'|=1$ and the claim is immediate.
    Otherwise, $|\cY'|\leq2^{2^{2N}}$, which implies
    $\log^{*}|\cY'|
        \leq2+\log^{*}(2N)
        \leq\log^{*}N+3$.
\end{proof}

\begin{algorithm}
\caption{High-dimensional Quasi-Concave Optimization}
\label{alg:high-dim-quasi}
\begin{algorithmic}[1]
\Require Parameters $\beta,\delta,\eps\in(0,1)$,
an input domain $\cU$, a public finite domain
$\cY\subseteq\bR^d$ with $|\cY|\geq2$,
a dataset $S\in\cU^*$, and a discrete quasi-concave
quality function $\cQ:\cU^*\times\cY\rightarrow\bR$
with sensitivity $1$.

\If{$|\cY|\leq d+1$}
    \State \Return The output of Exponential Mechanism
    (Theorem~\ref{thm:exp-mech}) with quality
    function $\cQ$, dataset $S$, candidate domain
    $\cY$, and privacy parameter $\eps$.
\EndIf

\State Use the canonical extension from
Remark~\ref{rem:quasi-func-ext} when evaluating $\cQ$ outside of $\cY$.

\State For each $i\in[d]$, let
$\mathcal G_i=\{y_i:(y_1,\ldots,y_d)\in\cY\}$,
and set
$\mathcal G=\mathcal G_1\times\cdots\times\mathcal G_d$. \Comment{Extend $\cY$ to a Cartesian grid.}

\State Preprocess $\mathcal G$ to obtain the coordinate
extension $\cY'=\cY_1'\times\cdots\times\cY_d'$
using Lemma~\ref{lemma:coor-extension}.
\Comment{This step is data-independent.}

\State Set
$\Delta_0=C d^{1.5}
\frac{8^{\log^{*}|\cY|}\log^{*}|\cY|
\sqrt{2\log(2/\delta)}}{\eps}
\log\left(
\frac{16\left(d\log^{*}|\cY|\right)^2}{\beta\delta}
\right)$,
where $C>0$ is a sufficiently large constant.

\State Set
$\beta'=\frac{\beta}{d}$,
$\eps'=\frac{\eps}{2\sqrt{2d\log(2/\delta)}}$, and
$\delta'=\frac{\delta}{2d}$.

\For{$i\in\{1,2,\ldots,d\}$}

    \State Define
    $\cQ_i:\cU^*\times\cY_i'\rightarrow\bR$ as
    \begin{align*}
        \cQ_i(S,y):=
        \max_{(y_{i+1}^*,\ldots,y_d^*)\in\bR^{d-i}}
        \cQ\bigl(S,(\hat{y}_1,\ldots,\hat{y}_{i-1},y,
        y_{i+1}^*,\ldots,y_d^*)\bigr).
    \end{align*}

        \State Run the QCO Algorithm~\ref{alg:1-dim-opt-new}
        with parameters
        $\beta',\eps',\delta',\Delta=\frac{\Delta_0}{d}$,
        quality function $\cQ_i$,
        dataset $S$, input domain $\cU$, and output domain
        $\cY_i'$. Let $\hat{y}$ be its output, and set $\hat{y}_i\leftarrow\hat{y}$.

\EndFor

\State Set $\hat y=[\hat y_1,\ldots,\hat y_d]$.
\If{$\hat y\notin\CH(\cY)$}
    \State \Return a fixed public point of $\cY$.
\EndIf
\State \Return $\hat y$.
\end{algorithmic}
\end{algorithm}

The next theorem establishes the privacy and agnostic additive-error guarantees of the following high-dimensional QCO algorithm.
\begin{theorem}\label{thm:high-dim-opt}
Let $\beta,\eps,\delta\in(0,1)$ be parameters,
$C>0$ be a sufficiently large constant, and let
$\cY\subseteq\bR^d$
be a public finite domain with $|\cY|\geq2$.
Furthermore, let
$\cQ:\cU^*\times\cY\rightarrow\bR$
be a discrete quasi-concave function with sensitivity $1$.
Then Algorithm~\ref{alg:high-dim-quasi} is $(\eps,\delta)$-DP.
Furthermore, for any
\[
    \Delta\geq C d^{1.5}
    \frac{8^{\log^{*}|\cY|}\log^{*}|\cY|
    \sqrt{2\log(2/\delta)}}{\eps}
    \log\left(
    \frac{16\left(d\log^{*}|\cY|\right)^2}{\beta\delta}
    \right),
\]
and every dataset $S\in\cU^*$, its output
$\hat{y}\in\CH(\cY)$ satisfies
\begin{align*}
    \cQ(S,\hat{y})
    \geq \max_{y\in\cY}\cQ(S,y)-\Delta
\end{align*}
with probability at least $1-\beta$.
\end{theorem}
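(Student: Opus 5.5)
The proof splits into two parts: privacy via advanced composition over the $d$ coordinate rounds, and utility via an induction on $i$ that tracks the best completable quality.

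The small case $|\cY|\le d+1$ is handled directly by the Exponential Mechanism (Theorem~\ref{thm:exp-mech}). It is $(\eps,0)$-DP. Its error is $2\log((d+1)/\beta)/\eps$, which is far below the stated $\Delta$ for a large $C$.

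\textbf{Privacy.} Otherwise the grid $\mathcal G$ and the extension $\cY'$ from Lemma~\ref{lemma:coor-extension} are public. Fix the prefix $(\hat y_1,\ldots,\hat y_{i-1})$. The function $\cQ_i(\cdot,y)$ is a maximum over completions of the canonical extension. That extension is itself a max-min of values $\cQ(S,z)$ over fixed index sets, so it has sensitivity $1$. Hence $\cQ_i$ has sensitivity $1$.

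$\cQ_i(S,\cdot)$ is also quasi-concave on $\cY_i'$. Its superlevel set is the projection to coordinate $i$ of the slice of the convex set $\{\bar\cQ(S,\cdot)\ge t\}$ with the prefix fixed. That projection is an interval intersected with $\cY_i'$.

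Theorem~\ref{thm:1-dim-opt-new} then makes each round $(\eps',\delta')$-DP. Adaptive advanced composition (Theorem~\ref{thm:advanced-composition}) with $t=d$ and $\delta_{\rm adv}=\delta/2$ gives total privacy $\eps'\sqrt{2d\log(2/\delta)}+d\eps'(e^{\eps'}-1)\le\eps$ and $d\delta'+\delta/2=\delta$. The final projection to a public point is post-processing.

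\textbf{Utility.} Let $r=\max_{y\in\cY}\cQ(S,y)$ and $m_i=\max_{y\in\cY_i'}\cQ_i(S,y)$. I prove by induction that, on the good event,
\[
\cQ_i(S,\hat y_i)\ge r-i\Delta_0/d .
\]
The key step is showing that the maximum of $\cQ_i$ over all of $\bR$ is attained inside $\cY_i'$. This is exactly where property (2) of Lemma~\ref{lemma:coor-extension} is used.

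Take the superlevel threshold $t=\cQ_{i-1}(S,\hat y_{i-1})$ at the previous stage (with $t=r$ for $i=1$), assuming $t$ exceeds the minimum value $\min\cQ$. By Remark~\ref{rem:quasi-func-ext}, a point of quality $\ge t$ extending the prefix lies in $\CH(\cA)$ for $\cA=\{z\in\cY:\cQ(S,z)\ge t\}\subseteq\mathcal G$. Property (2) then gives a completion in $\CH(\cA)\cap\cY'$ with the same prefix. Its $i$-th coordinate $y\in\cY_i'$ satisfies $\cQ_i(S,y)\ge t$, so $m_i\ge\cQ_{i-1}(S,\hat y_{i-1})$.

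Theorem~\ref{thm:1-dim-opt-new} is applied on $\cY_i'$ with failure probability $\beta'=\beta/d$. Since $\log^*|\cY_i'|\le\log^*|\cY|+3$, the extra factor $8^3$ is absorbed into $C$. The theorem yields $\cQ_i(S,\hat y_i)\ge m_i-\Delta_0/d$. One also checks that $\Delta_0/d$ meets that theorem's threshold with $\eps'$, $\delta'$, $\beta'$. This gives the stated $d^{1.5}\sqrt{\log(1/\delta)}\log^*|\cY|\,8^{\log^*|\cY|}$ form of $\Delta_0$.

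A union bound over the $d$ rounds gives, with probability at least $1-\beta$,
\[
\cQ(S,\hat y)=\cQ_d(S,\hat y_d)\ge r-\Delta_0\ge r-\Delta .
\]
Since the canonical extension is the minimum $m$ outside $\CH(\cY)$, having $\hat y\notin\CH(\cY)$ forces $r-\Delta\le m$. In that case returning any public point of $\cY$ still satisfies the bound.

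\textbf{Main obstacle.} The delicate step is the attainment argument in the utility induction. It requires that the extension built from the grid $\mathcal G$ (rather than from $\cY$) still captures superlevel sets, which holds because $\cA\subseteq\cY\subseteq\mathcal G$. It also requires handling the threshold $t=\min\cQ$, where the superlevel set is all of $\bR^d$ and the bound is trivial.
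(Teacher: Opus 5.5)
Your proposal is correct and follows the paper's proof essentially step for step: the exponential-mechanism base case, sensitivity and quasi-concavity of $\cQ_i$ for a fixed prefix, the attainment step via property (2) of Lemma~\ref{lemma:coor-extension} applied to the grid $\mathcal G$, the coordinatewise induction, and advanced composition; the one difference is that you prove only $\max_{y\in\cY_i'}\cQ_i(S,y)\geq\cQ_{i-1}(S,\hat y_{i-1})$ rather than the paper's pointwise identity over completions in $\cY'$, and that inequality is all the induction needs. One small slip: since the lemma is applied to $\mathcal G$ and $\log^*|\mathcal G|\leq\log^*|\cY|+2$, the correct bound is $\log^*|\cY_i'|\leq\log^*|\cY|+5$ rather than $+3$, which is harmless because the constant is absorbed into $C$.
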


\begin{proof}
First suppose that $2\leq|\cY|\leq d+1$.
In this case, the algorithm runs the Exponential
Mechanism, which is $(\eps,0)$-DP and, with
probability at least $1-\beta$, returns $\hat y$
satisfying
\[
    \cQ(S,\hat y)
    \geq
    \max_{y\in\cY}\cQ(S,y)
    -
    \frac{2\log((d+1)/\beta)}{\eps}.
\]
For sufficiently large $C$, this error is at most
the stated $\Delta$. Thus, both the privacy and
utility guarantees hold in this case. We next, assume that $|\cY|>d+1$.

    Let $\mathcal G$ be the Cartesian grid constructed
by the algorithm. Since $d<|\cY|$, we have $|\mathcal G|
    \leq|\cY|^d
    \leq|\cY|^{|\cY|}
    \leq2^{2^{|\cY|}}$,
and therefore $\log^*|\mathcal G|\leq\log^*|\cY|+2$.

Let $\cY'=\cY_1'\times\cdots\times\cY_d'$
be the coordinate extension of $\mathcal G$
from Lemma~\ref{lemma:coor-extension}.
In particular, $\cY\subseteq\mathcal G\subseteq\cY'$.
Throughout the proof, $\cQ$ denotes its canonical
extension from the original domain $\cY$.

    Fix an iteration $i\in[d]$ and previously chosen coordinates
    $(\hat{y}_1,\ldots,\hat{y}_{i-1})\in
    \cY_1'\times\cdots\times\cY_{i-1}'$.
    For every $y\in\cY_i'$, the canonical extension
takes values in the finite set
$\{\cQ(S,z):z\in\cY\}$.
Thus, $\cQ_i(S,y)$ is finite and its maximum
over completions is attained.
For $i=d$, the definition gives
\[
    \cQ_d(S,y)
    =
    \cQ(S,(\hat y_1,\ldots,\hat y_{d-1},y)).
\]

We next show that a maximizing completion exists
in $\cY'$.
If
$\cQ_i(S,y)=\min_{z\in\cY}\cQ(S,z)$,
every completion in $\cY'$ with the specified first
$i$ coordinates attains this value: the canonical
extension is bounded below by this minimum, and
no completion with this prefix can exceed
$\cQ_i(S,y)$.

Otherwise, let $\cA=
    \{z\in\cY:\cQ(S,z)\geq\cQ_i(S,y)\}$.
Since $\cQ_i(S,y)>\min_{z\in\cY}\cQ(S,z)$,
a maximizing completion belongs to $\CH(\cA)$
by Remark~\ref{rem:quasi-func-ext}.
Since $\cA\subseteq\cY\subseteq\mathcal G$,
property (2) of Lemma~\ref{lemma:coor-extension},
applied to $\mathcal G$, gives a completion in
$\CH(\cA)\cap\cY'$ with the same first $i$
coordinates. Its quality is at least
$\cQ_i(S,y)$, and hence equals $\cQ_i(S,y)$.
Thus, in both cases,
    \begin{equation}\label{eq:coordinate-grid-max}
        \cQ_i(S,y)=
        \max_{(y_{i+1}^*,\ldots,y_d^*)\in
        \cY_{i+1}'\times\cdots\times\cY_d'}
        \cQ\bigl(S,(\hat{y}_1,\ldots,\hat{y}_{i-1},y,
        y_{i+1}^*,\ldots,y_d^*)\bigr).
    \end{equation}
    For $i=d$, the product over the remaining coordinates
    consists of the empty tuple.

    Now, we prove that $\cQ_i(S,\cdot)$ is discretely quasi-concave.
    Let $a,b,c\in\cY_i'$ and suppose that
    $c=\lambda a+(1-\lambda)b$ for some $\lambda\in[0,1]$.
    Let $y_a,y_b\in\cY'$ be points attaining
    $\cQ_i(S,a)$ and $\cQ_i(S,b)$, respectively, with
    the fixed first $i-1$ coordinates.
    The first $i$ coordinates of $\lambda y_a+(1-\lambda)y_b$
    are $[\hat{y}_1,\ldots,\hat{y}_{i-1},c]$.
    By quasi-concavity of the canonical extension,
    \begin{align*}
        \cQ_i(S,c)
        &\geq\cQ(S,\lambda y_a+(1-\lambda)y_b)\\
        &\geq\min\{\cQ(S,y_a),\cQ(S,y_b)\}\\
        &=\min\{\cQ_i(S,a),\cQ_i(S,b)\}.
    \end{align*}

    Next, we prove that $\cQ_i$ has sensitivity $1$.
    Suppose that $S,S'$ are neighboring datasets.
    For every fixed non-empty $\cA\subseteq\cY$, sensitivity
    of $\cQ$ on $\cY$ gives
    \[
        \left|
        \min_{z\in\cA}\cQ(S,z)
        -\min_{z\in\cA}\cQ(S',z)
        \right|\leq1.
    \]
    Taking the maximum over the subsets whose convex hulls
contain a fixed $x\in\CH(\cY)$ shows that the canonical
extension also satisfies
$|\cQ(S,x)-\cQ(S',x)|\leq1$.
Outside $\CH(\cY)$, the extension equals
$\min_{z\in\cY}\cQ(S,z)$, which also has sensitivity
at most $1$. Thus, this sensitivity bound holds
for every fixed $x\in\bR^d$.
    By Equation~\eqref{eq:coordinate-grid-max}, $\cQ_i$ is a
    maximum over a fixed finite set of such points.
    Consequently, for every fixed preceding transcript,
    $|\cQ_i(S,y)-\cQ_i(S',y)|\leq1$.

    Now, we are ready to use the 1-dimensional QCO solver from
    Algorithm~\ref{alg:1-dim-opt-new} on each $\cQ_i$.
    By Lemma~\ref{lemma:coor-extension}, $\log^*|\cY_i'|
    \leq\log^*|\mathcal G|+3
    \leq\log^*|\cY|+5$.
Singleton coordinate domains are returned
deterministically with zero error. For the remaining
coordinates, we apply Theorem~\ref{thm:1-dim-opt-new}.
The additive constant $5$ is absorbed by taking
the universal constant $C$ sufficiently large.
    Substituting, our parameter choices 
    $\frac{1}{\eps'}=
        \frac{2\sqrt{2d\log(2/\delta)}}{\eps}$, and 
        $\frac{1}{\beta'\delta'}=
        \frac{2d^2}{\beta\delta}$ into
    Theorem~\ref{thm:1-dim-opt-new}, for sufficiently large $C$,
    the choice $\Delta_0/d$ satisfies its error requirement.
    Thus, conditional on every fixed preceding transcript,
    the output $\hat{y}_i$ in iteration $i$ satisfies:
    \[
        \cQ_i(S,\hat{y}_i)
        \geq\max_{y\in\cY_i'}\cQ_i(S,y)-\frac{\Delta_0}{d}
    \]
    with probability at least $1-\beta'$.

    We prove the utility guarantee on this event by induction.
    In the first iteration, Equation~\eqref{eq:coordinate-grid-max}
    gives
    \[
        \max_{y\in\cY_1'}\cQ_1(S,y)
        =\max_{y\in\cY'}\cQ(S,y)
        =\max_{y\in\cY}\cQ(S,y).
    \]
    The last equality follows because the canonical extension
    never exceeds the maximum quality on $\cY$, and
    $\cY\subseteq\cY'$.
    Hence,
    \[
        \cQ_1(S,\hat{y}_1)
        \geq\max_{y\in\cY}\cQ(S,y)-\frac{\Delta_0}{d}.
    \]

    For the induction step, let $i\in\{2,\ldots,d\}$.
    Applying Equation~\eqref{eq:coordinate-grid-max} to
    iterations $i-1$ and $i$ gives
    \[
        \max_{y\in\cY_i'}\cQ_i(S,y)
        =\cQ_{i-1}(S,\hat{y}_{i-1}),
    \]
    since both sides maximize $\cQ$ over all completions in
    $\cY_i'\times\cdots\times\cY_d'$ with the same
    first $i-1$ coordinates.
    Therefore, by the induction hypothesis,
    \begin{align*}
        \cQ_i(S,\hat{y}_i)
        &\geq\max_{y\in\cY_i'}\cQ_i(S,y)-\frac{\Delta_0}{d}\\
        &=\cQ_{i-1}(S,\hat{y}_{i-1})-\frac{\Delta_0}{d}\\
        &\geq\max_{y\in\cY}\cQ(S,y)-\frac{i}{d}\Delta_0.
    \end{align*}

    Let $\hat{y}=[\hat{y}_1,\ldots,\hat{y}_d]\in\cY'$
be the point produced by the coordinate loop,
before final postprocessing.
    Using a union bound,
    with probability at least $1-d\beta'=1-\beta$,
    \[
        \cQ(S,\hat{y})
        =\cQ_d(S,\hat{y}_d)
        \geq\max_{y\in\cY}\cQ(S,y)-\Delta_0
        \geq\max_{y\in\cY}\cQ(S,y)-\Delta.
    \]
    If $\hat y\notin\CH(\cY)$, its canonical quality
equals $\min_{z\in\cY}\cQ(S,z)$.
Replacing it by a fixed public point of $\cY$
therefore cannot decrease its quality.
Consequently, the final output belongs to
$\CH(\cY)$ and satisfies the same utility bound.
This final postprocessing also preserves privacy.

    The privacy guarantee follows from the privacy of the
    1-dimensional QCO algorithm
    (Theorem~\ref{thm:1-dim-opt-new}).
    Recall that, for every fixed preceding transcript, the quality
    function $\cQ_i$ has sensitivity $1$.
    Thus, each iteration is $(\eps',\delta')$-DP.
    Using advanced composition
    (Theorem~\ref{thm:advanced-composition}) with additional failure
    parameter $\delta/2$, after $d$ iterations the algorithm is
    $\left(
        \eps'\sqrt{2d\log(2/\delta)}
        +d\eps'(e^{\eps'}-1),\;
        d\delta'+\frac{\delta}{2}
        \right)$-DP.
    We have $d\delta'+\delta/2=\delta$.
    Moreover, $\eps'<1$ and $e^u-1\leq2u$ for $u\in[0,1]$, so
    \[
\eps'\sqrt{2d\log(2/\delta)}+d\eps'(e^{\eps'}-1)
        \leq\frac{\eps}{2}+2d(\eps')^2
        =\frac{\eps}{2}+\frac{\eps^2}{4\log(2/\delta)}
        <\eps
    \]
    using $\eps<1$ and $\log(2/\delta)>\log2>1/2$.
    Thus, the algorithm is $(\eps,\delta)$-DP.
\end{proof}

\end{document}